\newcommand{\mymathop}[1]{\mathop{\texttt{#1}}}
\newcommand{\type}[1]{\ensuremath{\texttt{#1}}}
\newcommand{\mult}{\mathbin{\ast}}
\newcommand{\naturals}{\ensuremath{\mathbb{N}}\xspace}
\newcommand{\defined}{\underset{\text{def}}{\equiv}}
\newcommand{\dfn}[1]{{\bf #1}}
\newcommand{\mydot}{\raisebox{.05em}{$\,\bullet\,$}}
\newcommand{\cat}{\,.\,}
\newcommand{\size}[1]{\ensuremath{\left | {#1} \right |}}
\newcommand{\bigsize}[1]{\ensuremath{\bigl| {#1} \bigr|}}
\newcommand{\order}[1]{\ensuremath{{\mathcal O}(#1)}\xspace}
\newcommand{\Oc}{\order{1}}
\newcommand{\On}{\order{\var{n}}}
\newcommand{\inference}[2]{\genfrac{}{}{1pt}{}{#1}{#2}}
\newcommand{\isWellDef}{\mathop{\downarrow}}
\newcommand{\isIllDef}{\mathop{\uparrow}}
\newcommand{\unicorn}{\ensuremath{\bot}\xspace}
\newcommand{\tuple}[1]{\ensuremath{\left\langle #1 \right\rangle}}
\newcommand{\typed}[2]{\ensuremath{#1\mcolon#2}}
\newcommand{\Vtyped}[2]{\ensuremath{\var{#1}\mcolon#2}}
\newcommand{\VTtyped}[2]{\ensuremath{\var{#1}\mcolon\type{#2}}}
\newcommand{\lastix}[1]{\ensuremath{\##1}}
\newcommand{\Vlastix}[1]{\ensuremath{\lastix{\V{#1}}}}
\newcommand{\VlastElement}[1]{\Velement{#1}{\Vlastix{#1}}}
\newcommand{\Velement}[2]{\ensuremath{\mathop{\V{#1}}
    \mathopen{}\left[#2\right]\mathclose{}}}
\newcommand{\VVelement}[2]{\ensuremath{\mathop{\V{#1}}
    \mathopen{}\left[ \V{#2}
    \right]\mathclose{} }}
\newcommand{\seq}[1]{\ensuremath{\left[ #1 \right]}}
\newcommand{\seqB}[2]{\seq{#1 \mcoloncolon #2}}
\newcommand{\mcolon}{\mathrel:}
\newcommand{\mcoloncolon}{\mathrel{\vcenter{\hbox{$::$}}}}
\newcommand{\suchthat}{\mcoloncolon}
\newcommand{\quantify}[3]{
    \ensuremath{\mathrel{#1}#2 \; \suchthat \; #3}%
}
\newcommand{\rIota}{\mathop{\scalebox{1.5}{\rotatebox[origin=c]{180}{$\iota$}}}}
\newcommand{\iotaQOp}{\mathop{\rIota}}
\newcommand{\iotaQ}[2]{\quantify{\iotaQOp}{#1}{#2}}
\newcommand{\existQOp}{\mathop{\exists{}}}
\newcommand{\existQ}[2]{\quantify{\existQOp}{#1}{#2}}
\newcommand{\nexistQOp}{\mathop{\nexists{}}}
\newcommand{\nexistQ}[2]{\quantify{\nexistQOp}{#1}{#2}}
\newcommand{\univQOp}{\mathop{\forall{}\,}}
\newcommand{\univQ}[2]{\quantify{\univQOp}{#1}{#2}}
\newcommand{\fnMap}{\rightharpoonup}
\newcommand{\tfnMap}{\rightarrow}
\newcommand{\tfnMMap}{\ensuremath{
    \mathrel{\raise.75pt\hbox{\ensuremath{
          \raise1.19pt\hbox{\scalebox{.60}{$\ni$}} \mkern-12mu \rightarrow}}}
    }}
\newcommand{\subtract}{\,-\,}
\newcommand{\var}[1]{\ensuremath{\texttt{#1}}}
\newcommand{\V}[1]{\ensuremath{\texttt{\mbox{#1}}}}
\newcommand{\mname}[1]{\mbox{\sf #1}}
\newcommand{\Oname}[1]{\mname{#1}}
\newcommand{\OA}[2]{\ensuremath{%
    \mathop{\Oname{#1}}\mathopen{}\left(#2\right)\mathclose{}%
}}
\newcommand{\Hd}[1]{\OA{hd}{#1}}
\newcommand{\Tl}[1]{\Oname{tl}({#1})}
\newcommand{\HTl}[1]{\Hd{\Tl{#1}}}
\newcommand{\TTl}[1]{\Tl{\Tl{#1}}}
\newcommand{\TTTl}[1]{\Tl{\TTl{#1}}}
\newcommand{\de}{\mathrel{::=}}
\newcommand{\derives}{\Rightarrow}
\newcommand{\destar}
    {\mathrel{\mbox{$\:\stackrel{\!{\ast}}{\Rightarrow\!}\:$}}}
\newcommand{\deplus}
    {\mathrel{\mbox{$\:\stackrel{\!{+}}{\Rightarrow\!}\:$}}}
\newcommand{\derivg}[1]{\mathrel{\mbox{$\:\Rightarrow\:$}}}
\newcommand{\setOf}[1]{{{#1}^{\mbox{\normalsize $\ast$}}}}
\newcommand{\set}[1]{{\left\lbrace #1 \right\rbrace} }
\newcommand{\Vah}[1]{\ensuremath{\var{#1}_{\type{AH}}}}
\newcommand{\Vdr}[1]{\ensuremath{\var{#1}_{\type{DR}}}}
\newcommand{\Veim}[1]{\ensuremath{\var{#1}_{\type{EIM}}}}
\newcommand{\Veimt}[1]{\ensuremath{\var{#1}_{\type{EIMT}}}}
\newcommand{\Vpimt}[1]{\ensuremath{\var{#1}_{\type{PIMT}}}}
\newcommand{\es}[1]{\ensuremath{#1_{\type{ES}}}}
\newcommand{\Elimt}[1]{\ensuremath{#1_{\type{LIMT}}}}
\newcommand{\Vlim}[1]{\ensuremath{\var{#1}_{\type{LIM}}}}
\newcommand{\Vlimt}[1]{\Elimt{\var{#1}}}
\newcommand{\Eloc}[1]{\ensuremath{{#1}_{\type{LOC}}}}
\newcommand{\Vloc}[1]{\Eloc{\var{#1}}}
\newcommand{\Ves}[1]{\es{\var{#1}}}
\newcommand{\Vrule}[1]{\ensuremath{\var{#1}_{\type{RULE}}}}
\newcommand{\Vruleset}[1]{\ensuremath{\var{#1}_{\setOf{\type{RULE}}}}}
\newcommand{\Vsize}[1]{\ensuremath{\size{\var{#1}}}}
\newcommand{\Vstr}[1]{\ensuremath{\var{#1}_{\type{STR}}}}
\newcommand{\sym}[1]{#1_{\type{SYM}}}
\newcommand{\Vsym}[1]{\ensuremath{\var{#1}_{\type{SYM}}}}
\newcommand{\Vorig}[1]{\ensuremath{\var{#1}_{\type{ORIG}}}}
\newcommand{\Vsymset}[1]{\ensuremath{\var{#1}_{\setOf{\type{SYM}}}}}
\newcommand{\token}[1]{#1_{\type{TOK}}}
\newcommand{\alg}[1]{\ensuremath{\textsc{#1}}\xspace}
\newcommand{\AH}{\ensuremath{\alg{AH}}\xspace}
\newcommand{\Earley}{\ensuremath{\alg{Earley}}\xspace}
\newcommand{\Leo}{\ensuremath{\alg{Leo}}\xspace}
\newcommand{\Marpa}{\ensuremath{\alg{Marpa}}\xspace}
\newcommand{\Cfa}{\var{fa}}
\newcommand{\Cg}{\var{g}}
\newcommand{\Cw}{\var{w}}
\newcommand{\CVw}[1]{\ensuremath{\sym{\Cw[\var{#1}]}}}
\newcommand{\Crules}{\var{rules}}
\newcommand{\Nulling}[1]{\mymathop{Nulling}(#1)}
\newcommand{\Nullable}[1]{\mymathop{Nullable}(#1)}
\newcommand{\GOTO}{\mymathop{GOTO}}
\newcommand{\Next}[1]{\mymathop{Next}(#1)}
\newcommand{\Predict}[1]{\mymathop{Predict}(#1)}
\newcommand{\Postdot}[1]{\mymathop{Postdot}(#1)}
\newcommand{\Penult}[1]{\mymathop{Penult}(#1)}
\newcommand{\Pos}[1]{\mymathop{Pos}(#1)}
\newcommand{\Rule}[1]{\mymathop{Rule}(#1)}
\newcommand{\LHS}[1]{\mymathop{LHS}(#1)}
\newcommand{\RHS}[1]{\mymathop{RHS}(#1)}
\newcommand{\DR}[1]{\mymathop{DR}(#1)}
\newcommand{\Transition}[1]{\mymathop{Transition}(#1)}
\newcommand{\Origin}[1]{\mymathop{Origin}(#1)}
\newcommand{\RightRecursive}[1]{\mymathop{Right-Recursive}(#1)}
\newcommand{\RightNN}[1]{\mymathop{Right-NN}(#1)}
\newcommand{\LeoEligible}[1]{\mymathop{Leo-Eligible}(#1)}
\newcommand{\LeoUnique}[1]{\mymathop{Leo-Unique}(#1)}
\newcommand{\Confluences}[1]{\mymathop{Confluences}(#1)}
\newcommand{\LIMTMap}[1]{\mymathop{LIMT-Map}(#1)}
\newcommand{\LIMTMainstem}[1]{\mymathop{LIMT-Mainstem}(#1)}
\newcommand{\ID}[1]{\mymathop{ID}(#1)}
\newcommand{\PSL}[2]{\mymathop{PSL}[#1][#2]}
\newcommand{\myL}[1]{\mymathop{L}(#1)}
\newcommand\Etable[1]{\ensuremath{\mymathop{table}[#1]}}
\newcommand\bigEtable[1]{\ensuremath{\mymathop{table}\bigl[#1\bigr]}}
\newcommand\Rtablesize[1]{\ensuremath{\sum\seq{\bigl| \mymathop{table}[#1,\V{i}] \bigr|}}}
\newcommand\Vtable[1]{\Etable{\var{#1}}}
\newcommand\EEtable[2]{\ensuremath{\mymathop{table}[#1,#2]}}
\newcommand\EVtable[2]{\EEtable{#1}{\var{#2}}}
\newcommand\call[2]{\textproc{#1}\ifthenelse{\equal{#2}{}}{}{(#2)}}%
\newtheorem{theorem}{Theorem}[section]
\newtheorem{lemma}[theorem]{Lemma}
\theoremstyle{definition}
\newtheorem*{definition}{Definition}
\theoremstyle{remark}
\newtheorem{observation}[theorem]{Observation}
\newcommand{\firstRReqref}{UNDEFINED!}
\newcommand{\lastRReqref}{UNDEFINED!}
\newenvironment{MYsloppy}[1][3em]{\par\tolerance9999 \emergencystretch#1\relax}{\par}
\newenvironment{MYsloppyB}[2]{\par\tolerance9999 \emergencystretch#1 \hbadness#2\relax}{\par}
\newenvironment{MYsloppyC}[3]{\par\tolerance#3 \emergencystretch#1 \hbadness#2\relax}{\par}
\begin{document}

\date{\today.  Revision 3.}

\title{Marpa, a practical general parser: the recognizer}

\author{Jeffrey Kegler}
\thanks{%
Copyright \copyright\ 2022 Jeffrey Kegler.
This document is licensed under
a Creative Commons Attribution-NoDerivs 3.0 United States License.
}

\begin{abstract}
The \Marpa recognizer is described.
Marpa is
a practical and fully implemented
algorithm for the recognition,
parsing and evaluation of context-free grammars.
The \Marpa recognizer is the first
to unite the improvements
to Earley's algorithm found in
Joop Leo's 1991 paper
to those in Aycock and Horspool's 2002 paper.
Marpa tracks the full state of the parse,
as it proceeds,
in a form convenient for the application.
This greatly improves error detection
and enables event-driven parsing.
One such technique is
``Ruby Slippers'' parsing,
in which
the input is altered in response
to the parser's expectations.
\end{abstract}

\maketitle

\section{Introduction}

Despite the promise of general context-free parsing,
and the strong academic literature behind it,
as of 2010
it had never been incorporated into a highly available tool
like those that exist for LALR~\cite{Johnson} or
regular expressions.
The \Marpa project was intended
to take the best results from the literature
on Earley parsing off the pages
of the journals and bring them
to a wider audience.
Marpa::XS~\cite{Marpa-XS},
a stable version of this tool,
was uploaded to the CPAN Perl archive
on Solstice Day in 2011.
This paper describes the algorithm of Marpa::R2~\cite{Marpa-R2},
a later version.

As implemented in~\cite{Marpa-R2},
Marpa parses,
without exception,
all context-free grammars.
Time bounds are the best of Leo~\cite{Leo1991}
and Earley~\cite{Earley1970}.
The Leo bound,
\On{} for LR-regular grammars,
is especially relevant to
Marpa's goal of being a practical parser:
If a grammar is in a class of grammar currently in practical use,
Marpa parses it in linear time.

\begin{MYsloppy}
Error-detection properties are
extremely important for practical parsing,
but have been overlooked in the past.
Marpa breaks new ground in this respect.
Marpa has the immediate error detection property,
and goes well beyond that:
it is fully aware of the state of the parse,
and can report this to the user while tokens are
being scanned.
\end{MYsloppy}

Marpa allows the lexer to check its list
of acceptable tokens before a token is scanned.
Because rejection of tokens is easily and
efficiently recoverable,
the lexer is also free to take an event-driven
approach.
Error detection is no longer
an act of desperation,
but a parsing technique in its own right.
If a token is rejected,
the lexer is free to create a new token
in the light of the parser's expectations.
This approach can be described
as making the parser's
``wishes'' come true,
and we have called this
``Ruby Slippers Parsing''.

One use of the Ruby Slippers technique is to
parse with a clean
but oversimplified grammar,
programming the lexical analyzer to make up for the grammar's
short-comings on the fly.
The author has implemented an HTML parser~\cite{Marpa-HTML},
based on a grammar that assumes that all start
and end tags are present.
Such an HTML grammar is too simple even to describe perfectly
standard-conformant HTML,
but the lexical analyzer is
programmed to supply start and end tags as requested by the parser.
The result is a very simply and cleanly designed parser
that parses very liberal HTML
and accepts all input files,
in the worst case
treating them as highly defective HTML.

Section
\ref{s:preliminaries} describes the notation and conventions
of this paper.
Section \ref{s:rewrite} deals with \Marpa's
grammar rewrites.
Sections \ref{s:earley}, \ref{s:confluences},
and \ref{s:earley-ops}
introduce Earley's algorithm.
Section \ref{s:leo} describes Leo's modification
to Earley's algorithm.
Section \ref{s:AHFA} describes the modifications
proposed by Aycock and Horspool.
Section \ref{s:pseudocode} presents the pseudocode
for \Marpa's recognizer.
Section
\ref{s:proof-preliminaries}
describes notation for,
and other preliminaries to,
the theoretical results.
Section
\ref{s:correct}
contain a proof of \Marpa's correctness,
while Section \ref{s:complexity} contains
its complexity results.
Finally,
Section \ref{s:input}
generalizes \Marpa's input model.

The nature of this paper is such that
an adequate literature survey
would be as large as the rest of this paper.
Instead, we have placed
a full, if somewhat informal, literature survey
online~\cite{Timeline}.

\section{Preliminaries}
\label{s:preliminaries}
\label{s:start-prelim}

We assume familiarity with the theory of parsing,
as well as Earley's algorithm.
We will use the type system
of Farmer~2012~\cite{Farmer2012},
without needing most of its apparatus.
The notation \VTtyped{x}{T} indicates that the variable
\V{x} is of type \V{T}.
More often, this paper will
use subscripts to indicate type.
We will often designate particular sets as types,
but any set can be a type.\footnote{
In fact,
a type in Farmer~2012~\cite{Farmer2012} can be much
more than a set.
Types in \cite{Farmer2012} are
collections of classes (``superclasses'').
But in this paper, every explicitly stated type will be a ZF set.
}
\begin{center}
\begin{tabular}{ll}
$\VTtyped{X}{T}$ & The variable \var{X} of type \type{T} (wide form)\\
$\var{X}_\type{T}$ & The variable \var{X} of type \type{T} (narrow form)\\
$\var{set-one}_\setOf{\type{T}}$ & The variable \var{set-one} of type set of \type{T} \\
$\type{SYM}$ & The type for a symbol \\
\Vsym{a} & The variable \var{a} of type $\type{SYM}$ \\
\Vsymset{set-two} & The variable \var{set-two}, a set of symbols \\
\end{tabular}
\end{center}
Subscripts may be omitted when the type
is obvious from the context.
Multi-character variable names will be common,
and operations will never be implicit.
\begin{center}
\begin{tabular}{ll}
Multiplication &  $\var{a} \mult \var{b}$ \\
Concatenation & $\var{a} \cat \var{b}$ \\
Subtraction & $\var{symbol-count} \subtract \var{terminal-count}$ \\
\end{tabular}
\end{center}

We often write ``iff'' for ``if and only if''.
We also often subsitute the more prominent double colon ($\mcoloncolon$)
for the ``mid'' divider ($\mid$).

A useful feature of Farmer~2012~\cite{Farmer2012} is his notion of ill-definedness.
For example, the value of partial functions may be ill-defined for
some arguments in their domain.
Farmer's handling of ill-definedness is the traditional one,
and was well-entrenched,
but he was first to describe and formalize it.\footnote{%
See Farmer 2004~\cite{Farmer2004}.
Note that Farmer refers to ill-defined values as ``undefined''.
We found this problematic.
For example, a partial function may not have a value for
every argument in its domain.
Saying that that the value of the partial function for these arguments
is defined as ``undefined'' is confusing.
In this paper we say that these values are defined, but ill-defined.
}

We write $\unicorn$ for ill-defined.
A value is \dfn{well-defined} iff it is not ill-defined.
We write $\V{x}\isWellDef$ to say that \V{x} is well-defined,
and we write $\V{x}\isIllDef$ to say that \V{x} is ill-defined.

Traditionally, and in this paper,
any formula with an ill-defined operand is false.
This means that a equality both of whose operands are ill-defined
is false, so that $\neg \; (\unicorn = \unicorn)$.
For cases where this is inconvenient,
we introduce a new relation, $\simeq$, such that
$$
  \V{a} \simeq \V{b} \defined \V{a} = \V{b} \lor (\V{a} \isIllDef \land \V{b} \isIllDef ).
$$

\Needspace{5\baselineskip}
We define a tuple recursively:
\begin{itemize}
\item An ordered pair is a 2-tuple, or duple,
for example $\tuple{7,11}$.
The first entry of a 2-tuple is its \dfn{head}.
The second entry of a 2-tuple is its \dfn{tail}.
\item The ordered pair of a set \V{h},
and an \V{n}-tuple, call it \V{tupA},
is an ($\V{n}+1$)-tuple, call it \V{tupB}.
\V{h} is the \dfn{head} of \V{tupB}.
\V{tupA} is the \dfn{tail} of \V{tupB}.
\end{itemize}

We write tuples using angle brackets.
The head of a tuple \V{S} can be written \Hd{S}.
The tail of a tuple \V{S} can be written \Tl{S}.
For example, where
\[
    \V{S} = \tuple{ 42, 1729, 42 },
\]
then \V{S} is a 3-tuple, or triple,
\begin{gather*}
\Hd{S}=42, \\
\Tl{S}=\tuple{1729,42}, \\
\HTl{S}=1729, \\
\TTl{S}=42, \text{ and} \\
\TTTl{S}\isIllDef.
\end{gather*}

We define the natural numbers, \naturals, to include zero.
$\V{D} \fnMap \V{C}$ is the set of partial functions from domain \V{D} to codomain \V{C}.
$\V{D} \tfnMap \V{C}$ is the set of total functions from domain \V{D} to codomain \V{C}.
It follows that $\naturals \tfnMap \V{C}$ is the set of
infinite sequences of terms from the set \V{C};
and that $42 \tfnMap \V{C}$ is the set of
sequences of length 42 of terms from the set \V{C}.
We say that
$$ \V{domSet} \tfnMMap \V{C} \defined
   \bigl\{ \V{fn} \bigm|
       \left( \exists \; \V{D} \in \V{domSet} \; \middle| \; \V{fn} \in \V{D} \tfnMap \V{C}
       \right) \bigr\}
$$
so that $\naturals \tfnMMap \V{C}$
is the set of finite sequences of terms from the set \V{C}.

We write \Vsize{\V{seq}} for the cardinality, or length,
of the sequence \V{seq}.
\VVelement{seq}{i} is the \var{i}'th term of the sequence \V{seq},
and is well-defined when
$0 \le \V{i} < \Vsize{seq}$.
We write often specify a sequence by giving its terms inside
brackets.
For example,
$$\seq{\, \V{a}, \; 42, \; \seq{} \, }$$
is the sequence of length
3 whose terms are, in order, \V{a}, the number 42, and the empty sequence.

The last index of the sequence \V{seq} is \Vlastix{seq},
so that \VlastElement{seq} is the last term of \V{seq}.
\Vlastix{seq} is ill-defined for the empty sequence,
that is, if $\Vsize{seq} = 0$.
If \V{seq} is not the empty sequence, then
$\Vsize{seq} = \Vlastix{seq} + 1$.

To avoid sub- and superscripts,
we usually write summation as a unary operation on a sequence.
For example, we write
$$ \sum \seqB{\Vtyped{ix}{\naturals} < 100}{\V{ix}} \text{ for }
     \sum\limits_{\V{ix}_\naturals=0}^{100}\V{ix}.
$$

Let \V{vocab} be a non-empty set of symbols.
The string type, \type{STR}, is the set of all finite sequences of
symbols:
$$ \type{STR} = \naturals \tfnMMap \V{vocab} $$

Where \VTtyped{s}{STR} is a string,
we write \size{\Vstr{s}} for the string length, counted in symbols.
We write $\type{STR}^+$ for the set of all non-null strings:
$$
\type{STR}^+ =
\bigl\{ \VTtyped{x}{STR} \bigm| \Vsize{\Vstr{x}} > 0 \bigr\}.
$$

In this paper we use,
without loss of generality,
the grammar \Cg{},
where \Cg{} is the 3-tuple
\begin{equation*}
    \tuple{\Vsymset{vocab}, \var{rules}, \Vsym{accept}}.
\end{equation*}
Here $\Vsym{accept} \in \var{vocab}$.
Call the language of \V{g}, $\typed{\myL{\Cg}}{\setOf{\type{STR}}}$.

\Vruleset{rules} is a set of rules (type \type{RULE}),
where a rule is a duple
of the form $\tuple{\Vsym{lhs} \de \Vstr{rhs}}$,
such that
\begin{equation*}
\Vsym{lhs} \in \var{vocab} \quad \text{and}
\quad \Vstr{rhs} \in \type{STR}^+.
\end{equation*}
\Vsym{lhs} is referred to as the left hand side (LHS)
of \Vrule{r}.
\Vstr{rhs} is referred to as the right hand side (RHS)
of \Vrule{r}.
The LHS and RHS of \Vrule{r} may also be
referred to as
$\LHS{\Vrule{r}}$ and $\RHS{\Vrule{r}}$, respectively.
This definition follows~\cite{AH2002},
which departs from tradition by disallowing an empty RHS.

The rules imply the traditional rewriting system,
in which
\begin{itemize}
\item $\Vstr{x} \derives \Vstr{y}$
states that \var{x} derives \var{y} in exactly one step;
\item $\Vstr{x} \deplus \Vstr{y}$
states that \var{x} derives \var{y} in one or more steps;
and
\item $\Vstr{x} \destar \Vstr{y}$
states that \var{x} derives \var{y} in zero or more steps.
\end{itemize}
We call these rewrites \dfn{derivation steps}.
A sequence of one or more of derivation steps,
in which the left hand side of all but the first
is the right hand side of its predecessor,
is a \dfn{derivation}.
We say that the symbol \V{x} \dfn{induces}
the string of length 1 whose only term is that symbol,
that is, the string \seq{\Vsym{x}}.
Pedantically, the terms of derivations,
and the arguments of concatenations like
$\Vstr{s1} \cat \Vstr{s2}$
must be strings.
But in concatenations and derivation steps we often
write the symbol to represent the string it induces so
that
$$ \Vstr{a} \cat \Vsym{b} \cat \Vstr{c}
= \Vstr{a} \cat \seq{\Vsym{b}} \cat \Vstr{c}.
$$

We say that a string \V{x} is \dfn{nullable},
$\Nullable{\Vstr{x}}$, iff the empty string can be
derived from it:
$$\Nullable{\Vstr{x}} \defined \V{x} \destar \epsilon.$$
We say that a string \V{x} is \dfn{nulling},
$\Nulling{\Vstr{x}}$,
iff it always eventually derives the null
string:
\[
\Nulling{\Vstr{s}} \defined
  \forall \; \Vstr{y} \;\mid\; \V{x} \destar \V{y} \implies \V{y} \destar \epsilon.
\]
We say that symbols are nulling or nullable based on the string they
induce:
\begin{gather*}
\Nullable{\Vsym{x}} \defined \Nullable{\seq{\V{x}}}. \\
\Nulling{\Vsym{x}} \defined \Nulling{\seq{\V{x}}}.
\end{gather*}
A string or symbol is
\begin{itemize}
\item \dfn{non-nullable} iff it is not nullable;
\item a \dfn{proper nullable} iff it is nullable,
but not nulling; and
\item \dfn{non-nulling} iff it is not nulling.
\end{itemize}

Following Aycock and Horspool~\cite{AH2002},
all nullable symbols in grammar \Cg{} are nulling -- every symbol
which can derive the null string always derives the null string.
It is shown in~\cite{AH2002} how to do this without losing generality
or the ability to efficiently evaluate a semantics that is
defined in terms of an original grammar that includes symbols which
are both nullable and non-nulling,
empty rules, etc.

Also without loss of generality,
it is assumed
that there is a dedicated acceptance rule,
$$ \Vrule{accept} = \tuple{ \Vsym{accept} \de \Vsym{start} }, $$
where $\Vsym{start} \in \var{vocab}$,
and that the accept symbol, \Vsym{accept},
is such that
\begin{equation*}
\begin{gathered}
\forall \; \V{x} \in \Crules \mid \\
\nexists \; \Vstr{pre}, \; \Vstr{post} \; \mid \\
\V{pre} \cat \Vsym{accept} \cat \V{post} = \RHS{\Vrule{x}} \\
\end{gathered}
\end{equation*}
and
$$ \Vsym{accept} = \LHS{\V{x}} \implies \Vrule{accept} = \V{x}. $$

\begin{MYsloppy}
Our definition of the rightmost non-nulling symbol of a string,
$\RightNN{\Vstr{x}}$,
is
\end{MYsloppy}
\[\begin{gathered}
\RightNN{\Vstr{x}} \defined \; \iotaQ{ \VTtyped{rnn}{SYM} }{ \\
    \existQ{ \VTtyped{pre}{STR}, \; \VTtyped{post}{STR} \;}{ \\
      \Vstr{x} = \V{pre} \cat \V{rnn} \cat \V{post} \land \Nullable{\V{post}} \land \neg \Nulling{\V{rnn}}.
    } }
\end{gathered}\]
\begin{MYsloppy}
Our definition of the rightmost non-nulling symbol of a rule,
$\RightNN{\Vrule{r}}$, is
\end{MYsloppy}
\begin{equation*}
\RightNN{\Vrule{r}} \defined \RightNN{\RHS{\V{r}}}.
\end{equation*}
A rule \Vrule{x} is \dfn{directly right-recursive}
if and only if
\begin{equation*}
\LHS{\Vrule{x}} = \RightNN{\Vrule{x}}.
\end{equation*}
\Vrule{x} is \dfn{right-recursive},
$\RightRecursive{\Vrule{x}}$,
if and only if
\begin{equation*}
\exists \, \Vstr{y} \mid \RightNN{\Vrule{x}} \deplus \V{y} \land \RightNN{\V{y}} = \LHS{\Vrule{x}}.
\end{equation*}

Our definition of a grammar did not sharply distinguish terminals
from non-terminals.
The implementation of~\cite{Marpa-R2} allows terminals to be the LHS
of rules,
and every symbol except \Vsym{accept} can be a terminal.
\cite{Marpa-R2} has options that allow
the user to reinstate
the traditional restrictions,
in part or in whole.
We note that,
as a result of these definitions,
sentential forms will be of type \type{STR}.

Let the input to
the parse be $\typed{\Cw}{\type{STR}^+}$.
Locations in the input will be of type \type{LOC}.
When we state our complexity results later,
they will often be in terms of $\var{n}$,
where $\var{n} = \Vsize{w}$.
\CVw{i} is character \var{i}
of the input,
and is well-defined when
$0 \le \Vloc{i} < \Vsize{w}$.

We note that the previous definition
of \Cw{} did not allow zero-length inputs.
To simplify the mathematics, we exclude null parses
and trivial grammars from consideration.
In the implementation of~\cite{Marpa-R2},
the \Marpa parser
deals with null parses and trivial grammars as special cases.
(Trivial grammars are those that recognize only the null string.)

In this paper,
\Earley will refer to the Earley algorithm as it is presented in this paper
--- a simplified version of Earley~1970~\cite{Earley1970}
preceded and followed by a rewrite.
\Leo will refer to Leo's revision of \cite{Earley1970}
as described in Leo 1991~\cite{Leo1991}.
\AH will refer to the Aycock and Horspool's revision
of \cite{Earley1970}
as described in their 2002 paper~\cite{AH2002}.
\Marpa will refer to the parser described in
this paper,
which combines features of \Earley, \Leo and \AH.
Where $\alg{Recce}$ is a recognizer,
$\myL{\alg{Recce},\Cg}$ will be the language accepted by $\alg{Recce}$
when parsing \Cg{}.

\section{Rewriting the grammar}
\label{s:rewrite}

We have already noted
that no rules of \Cg{}
have a zero-length RHS,
and that all symbols must be either nulling or non-nullable.
These restrictions follow Aycock and Horspool~\cite{AH2002}.
The elimination of empty rules and proper nullables
is done by rewriting the grammar.
\cite{AH2002} shows how to do this
without loss of generality.

Because \Marpa claims to be a practical parser,
it is important to emphasize
that all grammar rewrites in this paper
are done in such a way that the semantics
of the original grammar can be reconstructed
simply and efficiently at evaluation time.
As one example,
when a rewrite involves the introduction of new rule,
semantics for the new rule can be defined to pass its operands
up to a parent rule as a list.
Where needed, the original semantics
of a pre-existing parent rule can
be ``wrapped'' to reassemble these lists
into operands that are properly formed
for that original semantics.

In the implementation of~\cite{Marpa-R2},
the \Marpa parser allows users to associate
semantics with an original grammar
that has none of the restrictions imposed
on grammars in this paper.
The user of a \Marpa parser
may specify any context-free grammar,
including one with properly nullable symbols,
empty rules, etc.
The user specifies his semantics in terms
of this original, ``free-form'', grammar.
\cite{Marpa-R2} implements the rewrites,
and performs evaluation,
in such a way as to keep them invisible to
the user.
From the user's point of view,
the ``free-form'' of his grammar is the
one being used for the parse,
and the one to which
his semantics are applied.

\section{Earley's algorithm}
\label{s:earley}

This paper presents a specialized version of Earley's
algorithm.
The version in this paper is simplified to
take advantage of \Marpa{}'s rewriting.
Descriptions of
the standard \Earley{}'s algorithm
are now plentiful.%
\footnote{%
    Focusing on the classic Earley-related literature,
    these include
    \cite[pp. 320-321]{AU1972},
    \cite{AH2002},
    \cite{Earley1968},
    \cite{Earley1970},
    \cite{GJ2008}, and
    \cite{Leo1991}.%
}

A dotted rule (type \type{DR}) is a duple of rule and position
in the rule.
\begin{gather*}
\Rule{\Vdr{dr}} \defined \Hd{\V{dr}}, \\
\Pos{\Vdr{dr}} \defined \Tl{\V{dr}}, \text{ and} \\
 0 \le \Pos{\V{dr}} \le \size{\Rule{\V{dr}}}.
\end{gather*}

The position of a dotted rule indicates the extent to which
the rule has been recognized,
and is represented with a large raised dot,
so that if
\begin{equation*}
\tuple{ \Vsym{A} \de \Vsym{X} \cat \Vsym{Y} \cat \Vsym{Z}}
\end{equation*}
is a rule,
\begin{equation*}
\tuple{ \Vsym{A} \de \var{X} \cat \var{Y} \mydot \var{Z}}
\end{equation*}
is the dotted rule with the dot at
$\var{pos} = 2$,
between \Vsym{Y} and \Vsym{Z}.

Every rule concept, when applied to a dotted rule,
is applied to the rule of the dotted rule.
The following are examples:
\[
\begin{gathered}
  \LHS{\Vdr{x}} \defined \LHS{\Rule{\V{x}}}. \\
  \RHS{\Vdr{x}} \defined \RHS{\Rule{\V{x}}}. \\
  \RightRecursive{\Vdr{x}} \defined \RightRecursive{\Rule{\V{x}}}.
\end{gathered}
\]

We also state the following:
\[
  \Postdot{\Vdr{x}} \defined
  \begin{cases}
  \Vsym{next}, \text{ if } \existQ{
      \Vstr{pre}, \; \Vstr{post}, \; \Vsym{A}}{ \\
      \qquad \V{x} = \tuple{ \V{A} \de \V{pre} \mydot \V{next} \cat \V{post}}} \\
  \unicorn, \text{ otherwise.}
  \end{cases} \\
\]

\[
  \Next{\Vdr{x}} \defined
  \begin{cases}
    \VTtyped{\tuple{ \Vsym{A} \de \Vstr{pre} \cat \Vsym{next} \mydot \Vstr{post}}}{DR}, \\
      \qquad \text{if } \V{x} = \tuple{ \V{A} \de \V{pre} \mydot \V{next} \cat \V{post}} \\
    \unicorn, \text{ otherwise.}
  \end{cases} \\
\]

\[
  \Penult{\Vdr{x}} \defined
  \begin{cases}
  \Vsym{next}, \text{ if } \existQ{
      \Vstr{pre}, \; \Vstr{post}, \; \Vsym{A}}{ \\
      \qquad \V{x} = \tuple{ \V{A} \de \V{pre} \mydot \V{next} \cat \V{post}} \\
      \qquad \land \; \Nullable{\V{post}} \land \; \neg \Nulling{\V{next}}} \\
  \unicorn, \text{ otherwise}
  \end{cases}
\]

A \dfn{penult} is a dotted rule \Vdr{d} such that $\Penult{\var{d}} \isWellDef$.
We note that $\Penult{\Vdr{x}}$
is never a nullable symbol.
The \dfn{initial dotted rule} is
\begin{equation}
\label{eq:initial-dr}
\Vdr{initial} = \tuple{\Vsym{accept} \de \mydot \Vsym{start} }.
\end{equation}
A \dfn{predicted dotted rule} is a dotted rule,
other than the initial dotted rule,
with a dot position of zero,
for example,
\begin{equation*}
\Vdr{predicted} = \tuple{\Vsym{A} \de \mydot \Vstr{alpha} }.
\end{equation*}
A \dfn{confirmed dotted rule}
is the initial dotted rule,
or a dotted rule
with a dot position greater than zero.
A \dfn{completed dotted rule} is a dotted rule with its dot
position after the end of its RHS,
for example,
\begin{equation*}
\Vdr{completed} = \tuple{\Vsym{A} \de \Vstr{alpha} \mydot }.
\end{equation*}
Predicted, confirmed and completed dotted rules
are also called, respectively,
\dfn{predictions}, \dfn{confirmations} and \dfn{completions}.

A traditional Earley item (type \type{EIMT}) is a duple
of dotted rule and origin.
\begin{gather*}
     \DR{\Veimt{x}} = \Hd{\V{x}}. \\
     \Origin{\Veimt{x}} = \Tl{\V{x}}.
\end{gather*}
The origin is the location where recognition of the rule
started.
For convenience, the type \type{ORIG} will be a synonym
for \type{LOC}, indicating that the variable designates
the origin entry of an Earley item.

We find it convenient to apply dotted rule concepts to
\type{EIMT}'s,
so that the concept applied to the \type{EIMT} is
the concept applied to the dotted rule of the \type{EIMT}.
The following are examples:
\begin{gather*}
\LHS{\Veimt{x}} \defined \LHS{\DR{\V{dr}}}. \\
\Postdot{\Veimt{x}} \defined \Postdot{\DR{\V{dr}}}. \\
\Penult{\Veimt{x}} \defined \Penult{\DR{\V{dr}}}. \\
\Next{\Veimt{x}} \defined \Next{\DR{\V{dr}}}. \\
\Rule{\Veimt{x}} \defined \Rule{\DR{\V{dr}}}. \\
\RightRecursive{\Veimt{x}} \defined \RightRecursive{\DR{\V{dr}}}.
\end{gather*}

An Earley parser builds a table of Earley sets,
\begin{equation*}
\EVtable{\Earley}{i},
\quad \text{where} \quad
0 \le \Vloc{i} \le \size{\Cw}.
\end{equation*}
Earley sets are of type \type{ES}.
Earley sets are often named by their location:
That is,
the bijection between \Ves{i} and \Vloc{i}
allows locations to be treated as the ``names'' of Earley sets.
We often write \Ves{i} to mean the Earley set at \Vloc{i},
and \Vloc{x} to mean the location of Earley set \Ves{x}.
The type designator \type{ES} is often omitted to avoid clutter,
especially in cases where the Earley set is not
named by location.
Occasionally the naming location is a expression,
so that
$$ \es{(\Origin{\Veimt{x}})} $$
is the Earley set at the origin of the \type{EIMT} \Veimt{x}.

At points,
we will need to compare the Earley sets
produced by the different recognizers.
\EVtable{\alg{Recce}}{i} will be
the Earley set at \Vloc{i}
in the table of Earley sets of
the \alg{Recce} recognizer.
For example,
\EVtable{\Marpa}{j} will be Earley set \Vloc{j}
in \Marpa's table of Earley sets.
In contexts where it is clear which recognizer is
intended,
\Vtable{k}, or \Ves{k}, will symbolize Earley set \Vloc{k}
in that recognizer's table of Earley sets.
If \es{\var{working}} is an Earley set,
$\size{\es{\var{working}}}$ is the number of Earley items
in \es{\var{working}}.

We often want the count of all the Earley items
in a table,
and we abbreviate the expression for this by omitting
the quantifier, so that
\begin{equation}
\label{eq:abbr-all-eims-count}
\begin{gathered}
\Rtablesize{\alg{Recce}} \defined \\
  \sum\seqB{\Vtyped{i}{\naturals} \le \Vsize{w}}{\bigl| \mymathop{table}[\alg{Recce},\V{i}] \bigr|}.
\end{gathered}
\end{equation}
For example,
\Rtablesize{\Marpa} is the total number
of Earley items in all the Earley sets of
a \Marpa{} parse.

Recall that
there was a unique acceptance symbol,
\Vsym{accept}, in \Cg{}.
The input \Cw{} is accepted if and only if,
\begin{equation*}
\tuple{\tuple{\Vsym{accept} \de \Vsym{start} \mydot}, 0} \in \bigEtable{\Vsize{\Cw}}
\end{equation*}

\section{Confluences}
\label{s:confluences}

An Earley item is also called a \dfn{parse item}.
For \Marpa, we will define another kind of parse item,
a Leo item, later.
Traditional parse items have type \type{PIMT}.

In \Marpa, with each parse item is a set of
confluences, which
track the reasons the \Marpa algorithm had
for adding that parse to the
Earley set.
In an ambiguous parse, the \Marpa algorithm may have
more than one reason to add a parse item to an Earley set.
Each reason is called a \dfn{confluence}.

A \dfn{confluence} is a duple,
whose entries are called \dfn{inflows}.
The first inflow of a confluence is the \dfn{mainstem},
and is either a parse item or ill-defined.
The second inflow of a confluence is the \dfn{tributary},
and may be an Earley item, a token, or ill-defined.
The hydrological terminology may seem ornate,
but experience
has shown that the overloading of more ordinary terms like ''predecessor'',
``cause'', and ``component''\footnote{%
The term ``component'' was used in Irons~\cite{Irons}.
}
can be befuddling.

We hope this terminology is at least mildly intuitive.
In hydrology,
a confluence is a meeting of two upstream river channels to form a third, downstream, channel.
Of the two upstream channels, one (usually the larger) is the mainstem,
and the other is a tributary.
For example, near Cairo, Illinois, there is a confluence of
the Ohio and Mississippi Rivers,
in which the the upstream Mississippi channel
is the mainstem
and the Ohio is a tributary.

Continuing the hydrological analogy,
a sequence of parse items in which all but the
first is the successor of its mainstem
is called a \dfn{trunk}.
The first term in the sequence is the \dfn{top} of the \dfn{trunk},
and the last term is the \dfn{bottom} of the trunk.
Sequence terms which are neither top or bottom are
\dfn{interior} terms of the trunk.

Similarly, we can define a \dfn{tributary sequence}
as a sequence in which all but the first is the successor of
its tributary.
Once again,
the first term in the tributary
sequence is the \dfn{top} of the \dfn{trunk},
and the last term is the \dfn{bottom} of the trunk.
Tributary sequence terms which are neither top or bottom are
\dfn{interior} terms of the trunk.

We sometimes refer to the confluences and their inflows
as the \dfn{links} of parse items,
reflecting that fact that they are typically implemented
as pointers, or ``links''.
We also sometimes refer to confluences as
\dfn{causations} of a parse item,
because each confluence is the reason for
the Earley algorithm to add the parse item
to the Earley set.

\section{Operations of the Earley algorithm}
\label{s:earley-ops}

\begin{MYsloppyB}{3em}{3000}
For any Earley operation there is a
current parse location, \Vloc{current}, and
a current Earley set, \Ves{current}.
Recall that
we often write \Ves{i} for
the Earley set at \Vloc{i},
and \es{(\var{exp})} for the Earley set
at the location given by the expression \var{exp}.
\end{MYsloppyB}

We write the set of confluences of a PIMT
\Vpimt{p} in Earley set \V{es} as
$\Confluences{\V{p},\V{es}}.$
Each Earley operation is shown in the form of
an inference rule,
the conclusion of which
consists of
\begin{itemize}
\item a parse item, call it \Vpimt{p} of the Earley items to be added to \Ves{current}; and
\item a confluence to be added to $\Confluences{\Vpimt{p},\Ves{current}}$.
\end{itemize}
We note that when we said the confluence and parse item were ``added'' just now,
that they are added to sets, and that an ``add'' is a no-op for an object that is
already an element of the set.
Implementations must take care not to allow duplicate confluences in confluence sets,
and not to allow duplicate Earley items in Earley sets.

Each location starts with an empty Earley set.
For the purposes of this description of
\Earley{}, the order of the
Earley operations
when building an
Earley set is non-deterministic.
When no more Earley items
can be added, the Earley set is
complete.
In the \Marpa implementation,
the Earley sets are built in
order from 0 to \Vsize{w}.

\subsection{Initialization}
\label{d:initial}
\begin{equation*}
\inference{
   \Vloc{current} = 0
}{
    \begin{array}{c}
        \VTtyped{ \tuple{ \Vdr{initial}, 0 }}{EIMT} \\
	\VTtyped{\tuple{\unicorn, \unicorn}}{CFLU}
    \end{array}
}
\end{equation*}
Here \Vdr{initial} is from \eqref{eq:initial-dr}.
Earley {\bf initialization} only takes
place in Earley set 0,
and always adds exactly one \type{EIM},
with exactly one confluence.
The mainstem and tributary of the confluence are both ill-defined.

\subsection{Scanning}
\label{d:scan}
\begin{equation*}
\inference{
    \begin{array}{c}
	\Veimt{mainstem} \in \es{ ( \Vloc{current} \subtract 1 ) } \\
	\Postdot{\Veimt{mainstem}} =
	   \Cw\bigl[ \Vloc{current} \subtract 1 \bigr]
    \end{array}
}{
    \begin{array}{c}
	\VTtyped{ \tuple{ \Next{\Veimt{mainstem}}, \Origin{\var{mainstem}} }}{EIMT} \\
	\VTtyped{\tuple{\Veim{mainstem}, (
	   \Cw\bigl[ \Vloc{current} \subtract 1 \bigr]_\type{SYM}
	) }}{CFLU}
    \end{array}
}
\end{equation*}

In the confluence added by a scanning operation,
\Veimt{mainstem} is the mainstem,
and the symbol
$\Cw[ \Vloc{current} \subtract 1 ]$
is the tributary.
In the context of a parse location,
a symbol is often called a \dfn{token}.
We also say that
$\Postdot{\Veimt{mainstem}}$ is the transition symbol
of the confluence, and of the scanning operation.

\subsection{Reduction}
\label{s:reduction}

\begin{equation*}
\inference{
    \begin{array}{c}
    \Veimt{tributary} \in \Ves{current} \\
    \Veimt{mainstem} \in \es{(\Origin{\Veimt{tributary}})} \\
    \Postdot{\Veimt{mainstem}} = \LHS{\Veimt{tributary}} \\
    \end{array}
}{
    \begin{array}{c}
	\VTtyped{ \tuple{ \Next{\Veimt{mainstem}}, \Origin{\var{mainstem}} }}{EIMT} \\
	\VTtyped{\tuple{\Veim{mainstem}, \Veim{tributary} }}{CFLU}
    \end{array}
}
\end{equation*}

$\Postdot{\Veimt{tributary}}$
is the transition symbol
of the reduction operation.

\subsection{Prediction}
\label{d:prediction}
\begin{equation*}
\inference{
    \begin{array}{c}
	\Veimt{mainstem} \in \Ves{current} \\
	\Postdot{\V{mainstem}} = \Vsym{lhs} \\
	\tuple{ \Vsym{lhs} \de \Vstr{rhs} } \in \Crules  \\
    \end{array}
}{
  \begin{array}{c}
      \VTtyped{ \tuple{ \tuple{ \Vsym{lhs} \de \mydot \Vstr{rhs} }, \Vloc{current} }}{EIMT} \\
      \VTtyped{\tuple{\Veimt{mainstem}, \unicorn }}{CFLU}
  \end{array}
}
\end{equation*}
The \type{EIMT} added by a prediction operation can be the mainstem
of other prediction operations at \Vloc{current},
so that one prediction operation can trigger a long series of others.
These prediction operations can add many Earley items
to \Ves{current},
but the items added will not depend on the location or the input ---
they will depend only on the postdot symbol of the mainstem.
This means that a number of optimizations are possible.

The tributary of the operation is ill-defined.

\section{The Leo algorithm}
\label{s:leo}

In~\cite{Leo1991}, Joop Leo presented a method for
dealing with right recursion in \On{} time.
Leo showed that,
with his modification, Earley's algorithm
is \On{} for all LR-regular grammars.
(LR-regular is LR where lookahead
is infinite length, but restricted to
distinguishing between regular expressions.)

Summarizing Leo's method,
it consists of spotting potential right recursions
and memoizing them.
Leo restricts the memoization to situations where
the right recursion is unambiguous.
Potential right recursions are memoized by
Earley set, using what Leo called
``transitive items''.
In this paper Leo's ``transitive items''
will be called Leo items.
Leo items in the form that \Marpa uses
will be type \type{LIM}.
``Traditional'' Leo items,
that is, those of the form used in Leo's paper~\cite{Leo1991},
will be type \type{LIMT}.

We illustrate the Leo method by examining an \Earley parse
of the input ``{\tt xxxx}''
using the grammar
\begin{equation}
\label{eq:leo-example-grammar}
\begin{aligned}
  &\tuple{\V{S} \de \V{RR}} \\
  &\tuple{\V{RR} \de \V{x}} \\
  &\tuple{\V{RR} \de \V{x} \cat \V{RR}}.
\end{aligned}
\end{equation}
This grammar and input produce the Earley sets
in the display which follows.
Each line in the display shows one \type{EIMT},
and has four columns.
\begin{itemize}
\item The first column shows the \type{EIMT}.
The notation $\text{@}\V{n}$ indicates that the \type{EIMT}
in \type{ES} $\V{n}$,
so that
$\text{@}3$ indicates that EIMT which precedes it
is in \type{ES} 3.
\end{itemize}
Since our primary interest is in completions,
the next three columns are shown only for completed
\type{EIMT}'s.
\begin{itemize}
\item
The second column is a note.
``Accept'' indicates that the \type{EIMT} is an accept \type{EIMT}.
``Bottom'', ``Interior'' and ``Top'' indicate that the \type{EIMT}
is in a corresponding position in a Leo stack.
Leo stacks will be explained below.
\item
The third column shows the \type{EIMT}'s confluence.
Our grammar is unambiguous,
so there is always exactly one confluence for each \type{EIMT}.
Within the confluence, \type{EIMT} inflows are represented by their
equation number.
\item
The fourth column indicates whether the EIM is actually used as part
of the parse.
Our grammar is unambiguous,
so there is only one parse.
\end{itemize}

\begin{align}
\label{ES0-1}%
&\tuple{\tuple{\V{S} ::= \mydot \V{RR}}, 0 }\text{@}0
    &&\phantom{\; \text{Accept}}
    &&\phantom{\quad \text{<(XX), (XX)>}}
    &\phantom{\; \text{!Used}} \\
\label{ES0-2}%
&\tuple{\tuple{\V{RR} ::= \mydot \V{x} \cat \V{RR}}, 0 }\text{@}0&&\\
\label{ES0-3}%
&\tuple{\tuple{\V{RR} ::= \mydot \V{x}}, 0 }\text{@}0&&
\end{align}
\medskip
\begin{align}
\label{ES1-1}%
&\tuple{\tuple{\V{RR} ::= \V{x} \mydot \V{RR}}, 0 }\text{@}1
    &&\phantom{\; \text{Accept}}
    &&\phantom{\quad \text{<(XX), (XX)>}}
    &\phantom{\; \text{!Used}} \\
\label{ES1-2}%
&\tuple{\tuple{\V{RR} ::= \mydot \V{x} \cat \V{RR}}, 1 }\text{@}1&&\\
\label{ES1-3}%
&\tuple{\tuple{\V{RR} ::= \mydot \V{x}}, 1 }\text{@}1&\\
\label{ES1-4}%
&\tuple{\tuple{\V{RR} ::= \V{x} \mydot}, 0 }\text{@}1
    &&
    &&\quad \tuple{\eqref{ES0-3}, \text{'x'}}
    &\; \text{!Used} \\
\label{ES1-5}%
&\tuple{\tuple{\V{S} ::= \V{RR} \mydot}, 0 }\text{@}1
    &&\; \text{Accept}
    &&\quad \tuple{\eqref{ES0-1}, \eqref{ES1-4}}
    &\; \text{!Used}
\end{align}
\medskip
\begin{align}
\label{ES2-1}%
&\tuple{\tuple{\V{RR} ::= \V{x} \mydot \V{RR}}, 1 }\text{@}2
    &&\phantom{\; \text{Accept}}
    &&\phantom{\quad \text{<(XX), (XX)>}}
    &\phantom{\; \text{!Used}} \\
\label{ES2-2}%
&\tuple{\tuple{\V{RR} ::= \mydot \V{x} \cat \V{RR}}, 2 }\text{@}2&\\
\label{ES2-3}%
&\tuple{\tuple{\V{RR} ::= \mydot \V{x}}, 2 }\text{@}2&\\
\label{ES2-4}%
&\tuple{\tuple{\V{RR} ::= \V{x} \mydot}, 1 }\text{@}2
    &&\; \text{Bottom}
    &&\quad \tuple{\eqref{ES1-3}, \text{'x'}}
    &\; \text{!Used} \\
\label{ES2-5}%
&\tuple{\tuple{\V{RR} ::= \V{x} \cat \V{RR} \mydot}, 0 }\text{@}2
    &&\; \text{Top}
    &&\quad \tuple{\eqref{ES1-1}, \eqref{ES2-4}}
    &\; \text{!Used} \\
\label{ES2-6}%
&\tuple{\tuple{\V{S} ::= \V{RR} \mydot}, 0 }\text{@}2
    &&\; \text{Accept}
    &&\quad \tuple{\eqref{ES0-1}, \eqref{ES2-5}}
    &\; \text{!Used}
\end{align}
\medskip
\begin{align}
\label{ES3-1}%
&\tuple{\tuple{\V{RR} ::= \V{x} \mydot \V{RR}}, 2 }\text{@}3
    &&\phantom{\; \text{Accept}}
    &&\phantom{\; \text{(XX), (XX)}}
    &\phantom{\; \text{!Used}} \\
\label{ES3-2}%
&\tuple{\tuple{\V{RR} ::= \mydot \V{x} \cat \V{RR}}, 3 }\text{@}3&\\
\label{ES3-3}%
&\tuple{\tuple{\V{RR} ::= \mydot \V{x}}, 3 }\text{@}3&\\
\label{ES3-4}%
&\tuple{\tuple{\V{RR} ::= \V{x} \mydot}, 2 }\text{@}3
    &&\; \text{Bottom}
    &&\quad \tuple{\eqref{ES2-3}, \text{'x'}}
    &\; \text{!Used} \\
\label{ES3-5}%
&\tuple{\tuple{\V{RR} ::= \V{x} \cat \V{RR} \mydot}, 1 }\text{@}3
    &&\; \text{Interior}
    &&\quad \tuple{\eqref{ES2-1}, \eqref{ES3-4}}
    &\; \text{!Used} \\
\label{ES3-6}%
&\tuple{\tuple{\V{RR} ::= \V{x} \cat \V{RR} \mydot}, 0 }\text{@}3
    &&\; \text{Top}
    &&\quad \tuple{\eqref{ES1-1}, \eqref{ES3-5}}
    &\; \text{!Used} \\
\label{ES3-7}%
&\tuple{\tuple{\V{S} ::= \V{RR} \mydot}, 0 }\text{@}3
    &&\; \text{Accept}
    &&\quad \tuple{\eqref{ES0-1}, \eqref{ES3-6}}
    &\; \text{!Used}
\end{align}
\medskip
\begin{align}
\label{ES4-1}%
&\tuple{\tuple{\V{RR} ::= \V{x} \mydot \V{RR}}, 3 }\text{@}4
    &&\phantom{\; \text{Accept}}
    &&\phantom{\; \text{(XX), (XX)}}
    &\phantom{\; \text{!Used}} \\
\label{ES4-2}%
&\tuple{\tuple{\V{RR} ::= \mydot \V{x} \cat \V{RR}}, 4 }\text{@}4\\
\label{ES4-3}%
&\tuple{\tuple{\V{RR} ::= \mydot \V{x}}, 4 }\text{@}4\\
\label{ES4-4}%
&\tuple{\tuple{\V{RR} ::= \V{x} \mydot}, 3 }\text{@}4
    &&\; \text{Bottom}
    &&\quad \tuple{\eqref{ES3-3}, \text{'x'}}
    &\; \text{Used}\\
\label{ES4-5}%
&\tuple{\tuple{\V{RR} ::= \V{x} \cat \V{RR} \mydot}, 2 }\text{@}4
    &&\; \text{Interior}
    &&\quad \tuple{\eqref{ES3-1}, \eqref{ES4-4}}
    &\; \text{Used}\\
\label{ES4-6}%
&\tuple{\tuple{\V{RR} ::= \V{x} \cat \V{RR} \mydot}, 1 }\text{@}4
    &&\; \text{Interior}
    &&\quad \tuple{\eqref{ES2-1}, \eqref{ES4-5}}
    &\; \text{Used}\\
\label{ES4-7}%
&\tuple{\tuple{\V{RR} ::= \V{x} \cat \V{RR} \mydot}, 0 }\text{@}4
    &&\; \text{Top}
    &&\quad \tuple{\eqref{ES1-1}, \eqref{ES4-6}}
    &\; \text{Used}\\
\label{ES4-8}%
&\tuple{\tuple{\V{S} ::= \V{RR} \mydot}, 0 }\text{@}4
    &&\; \text{Accept}
    &&\quad \tuple{\eqref{ES0-1}, \eqref{ES4-7}}
    &\; \text{Used}
\end{align}
\renewcommand{\firstRReqref}{\eqref{ES0-1}}
\renewcommand{\lastRReqref}{\eqref{ES4-8}}
\bigskip\par

A glance at \firstRReqref-\lastRReqref{} shows
that many of the completions are not used in the
parse.
This seems wasteful, and we wonder if this waste
can be avoided.

Five of the unused completions are accept \type{EIMT}'s,
that is, \type{EIMT}'s with completed start rules.
There is at most one of these per \type{ES},
so that the overhead is \On, and small.
A mechanism for eliminating useless accept rules
would gain us little,
and would have to, itself, come at a very small cost
to be justified.
We therefore look elsewhere.

For our analysis of the other useless completions,
we will need some new conceptual tools.
We introduce the concepts of Leo uniqueness,
Leo eligibility,
and Leo stack.

We say that an \type{EIMT} \Veimt{x} is \dfn{Leo unique}
in the Earley set \Ves{es},
$\LeoUnique{\var{x}, \var{es}}$,
iff
it is a penult in \var{es},
and its postdot symbol is unique in \var{es}.
More precisely,
\begin{equation}
\label{e:def-leo-unique}
\begin{gathered}
    \LeoUnique{\Veimt{eim}, \Ves{es}} \defined \\
        \V{eim} = \; \iotaQ{\Vtyped{eim2}{\type{EIMT}} }{
	    \V{eim2} \in \V{es} \land \Penult{\V{eim2}} \isWellDef }
\end{gathered}
\end{equation}
In \eqref{e:def-leo-unique}
it is important to emphasize that \Veimt{eim2} ranges over all the
\type{EIMT}'s, not just the penults.
This means that if a penult \Veimt{pen} shares a postdot symbol
with an non-penult \Veimt{np} in the same Earley set,
then \Veimt{pen} is {\bf not} Leo unique.

\begin{MYsloppyB}{3em}{2000}
If \Veimt{eim} in
\eqref{e:def-leo-unique}
is Leo unique in an \type{ES},
then the symbol $\Postdot{\var{eim}}$ is
also said to be \dfn{Leo unique} in that \type{ES}.
For each Leo unique symbol, call it \V{transition}, in an \type{ES},
call it \V{es},
there is exactly one dotted rule, call it \V{dr},
and exactly one rule, call it \V{r}.
We call \Vdr{dr}, the dotted rule for \Vsym{transition} in \Ves{es}.
We call \Vrule{r}, the rule for \Vsym{transition} in \Ves{es}.
\end{MYsloppyB}

An \type{EIMT} \Veimt{x}
is \dfn{Leo eligible}
in Earley set \Ves{es},
$$\LeoEligible{ \Veimt{x}, \Ves{es} },$$
iff it is right recursive and Leo unique
in \Ves{es}.
More precisely,
\begin{gather*}
\LeoEligible{ \Veimt{x}, \Ves{es} } \defined \\
\RightRecursive{\Rule{\Veimt{x}}}
\land \LeoUnique{\Veimt{x}, \Ves{es}}.
\end{gather*}
An \type{EIMT} is a
\dfn{Leo completion}
iff the mainstem of one of its confluences is
Leo eligible.

An \type{EIMT}, call it \V{eim1},
is a \dfn{Leo tributary} of another \type{EIMT}, call it \V{eim2},
in the Earley set at \Vloc{current}
iff \V{eim2} has a confluence whose tributary is \Veimt{eim1},
and whose mainstem is Leo eligible at the origin of \V{eim1}.
More precisely, \V{eim1} is a Leo tributary of \V{eim2} iff
\[
\begin{gathered}
\existQ{ \VTtyped{main}{EIMT} }{ \\
  \tuple{\V{main},\V{eim1}} \in \Confluences{\V{eim2},\V{current}} \\
  \land \; \LeoEligible{\V{main}, \Origin{\V{eim1}}}.
}
\end{gathered}
\]

A \dfn{Leo tributary sequence} is a tributary sequence in which
every term except the top is a tributary of its successor.
Every term of a Leo tributary sequence is in the same Earley set,
so that it is intuitive to
visualize a Leo tributary sequence as a vertical stack.

A \dfn{Leo stack} is a Leo tributary sequence such that
all of the following are true:
\begin{itemize}
\item The bottom of the sequence is not a Leo completion.
\item The top of the sequence is not a Leo tributary of any \type{EIMT}.
\item The top and bottom are distinct.
\end{itemize}

A Leo stack must have at least two terms,
and all but the bottom term will be a Leo completion.
Joop Leo's technique for eliminating useless items from an Earley table
was based on the following insight:
\begin{center}
Every Leo completion in a Leo stack,\linebreak
except the top of the stack,\linebreak
can be deduced from the top of the stack.\footnote{%
   For this reason,
   in his 1991~\cite{Leo1991},
   Leo's term for his concept analogous to our Leo stack was
   ``deterministic reduction path''.}
\end{center}
This meant that all items in a Leo stack, except the top and bottom
can be memoized during the parse,
and ignored if they turn to be useless.

Where \V{n} is the size of the largest Leo stack in
a parse using the grammar in \eqref{eq:leo-example-grammar},
the number of ``Leo memoizable'' \type{EIMT}'s is
\begin{equation}
\label{eq:memoizable-count}
\sum \seqB{2 \le \V{i} \le \V{n}}{ \V{i} - 2 }
\end{equation}
of which
\begin{equation}
\label{eq:ignorable-count}
\sum \seqB{2 \le \V{i} < \V{n}}{ \V{i} - 2 }
\end{equation}
can be ignored.

For the example parse of \firstRReqref-\lastRReqref{},
the number of items which can be memoized is 3,
but 2 of these need to be used for evaluation,
so that we actually save the processing of only one \type{EIMT}.
This is, frankly, unimpressive.

But \eqref{eq:memoizable-count} and
\eqref{eq:ignorable-count} are both \order{\V{n}^2},
and for larger \V{n} the effect of quadratic growth
takes over quickly.
Continuing to use
the grammar in \eqref{eq:leo-example-grammar},
and to let \V{n} be the size of the largest Leo stack,
when $\V{n} \ge 4$, the number of ignorable \type{EIMT}'s
is
\[
\genfrac{}{}{1pt}{}{ (\V{n} \subtract 3)\mult(\V{n} \subtract 2) }{ 2}
\]
and the number of other \type{EIMT}'s works out to
$7 \mult \V{n}.$
For $\V{n}=19$, over half of the \type{EIMT}'s can be ignored.

Right recursions are often very long,
so Leo memoization, if the cost is \Oc
with small constants,
is a clear win for many grammars in practical use.
Beginning in section \eqref{sec:leo-items},
we shall outline such a low-cost method.

\subsection{Differences with Leo 1991}

Our version of Leo memoization is somewhat
different from that in Leo 1991~\cite{Leo1991}.
  \begin{itemize}
  \item \Marpa's Leo memoization is eager,
  while that of~\cite{Leo1991} is lazy.
  \item \Marpa only does Leo memoization
  for right recursive rules.
  \end{itemize}
Omission of Leo memoization does not affect correctness,
so these changes
preserve the correctness as shown in \cite{Leo1991}.
And, later in this paper, we will
show that these changes also leave
the complexity results of
\cite{Leo1991} intact.

Looking more closely at our first difference,
the algorithm of \cite{Leo1991}
in some cases delayed Leo memoization
until after later \type{ES}'s were constructed.
It was not clear to us that this produced any savings.
It did make the algorithm more complex,
and presented a real obstacle to \Marpa's on-the-fly
features, such as event generation.
For these reasons, in \Marpa,
Leo memoization is eager.

Our second difference was to
consider Leo memoization only for
right recursive \type{EIMT}'s.
In \cite{Leo1991}, any penult
was subject to Leo memoization,
not just right recursions.
(We recall that
an \type{EIMT} \Vdr{eim} is a penult if $\Penult{\var{eim}} \isWellDef$.)

By restricting Leo memoization to right-recursive rules,
\Marpa{} incurs the cost of Leo memoization only in cases
where Leo sequences can be infinitely
long.
This more careful targeting of the memoization is for efficiency reasons.
If all penults are memoized,
memoizations will be performed where
the longest Leo stack is finite,
so that the payoff is limited.
and often quite small.
A possible future optimization would be to identify
non-right-recursive rules
which generate Leo stacks which
are long enough to
justify inclusion in the Leo memoizations.
But research would be needed to show that such an
optimization is worthwhile.

\subsection{Leo items}
\label{sec:leo-items}

A traditional Leo item (\type{LIMT}) is a triple
consisting of a dotted rule,
a symbol (called the transition symbol),
and a location.
\begin{gather*}
\DR{\Vlimt{limt}} \defined \Hd{\V{limt}}. \\
\Transition{\Vlimt{limt}} \defined \HTl{\V{limt}}. \\
\Origin{\Vlimt{limt}} \defined \TTl{\V{limt}}.
\end{gather*}
As will be explained in more detail later,
the \type{LIMT} \Vlimt{limt} indicates that
\begin{equation*}
\typed{\tuple{\DR{\Vlimt{limt}}, \Origin{\Vlimt{limt}}}}{\type{EIMT}}
\end{equation*}
is to be added on Leo reductions over
the symbol
$$\Transition{\Vlimt{limt}}.$$

Pedantically, \type{LIMT}'s are not members of Earley sets,
so we introduce a partial function from pairs of location
and symbol to \type{LIMT}'s:
$$
  \mymathop{LIMT-Map} : \naturals \times \var{vocab} \rightharpoonup \type{LIMT}.
$$
That $\mymathop{LIMT-Map}$ is a function implies that,
in each Earley set, there is at most one Leo item per symbol.

In practice we will usually avoid direct reference to $\mymathop{LIMT-Map}$,
finding it convenient and natural to overload
the notion of, and notation for, Earley set membership.
Where an Earley set \Ves{i}
and a \type{LIMT} \Vlimt{x} are such that
\begin{equation}
\label{e:limtmap-overload}
\exists \, \var{postdot} \in \var{vocab} \mid
     \var{x} = \LIMTMap{\Vloc{i}, \Vsym{postdot}},
\end{equation}
then we will often say that \Vlimt{x} is \dfn{in}
the Earley set \Ves{i},
and write
$$\Vlimt{x} \in \Ves{i}.$$
Accordingly, we have spoken of parse items, which may be Leo items,
being ``added to the Earley set'', and we will continue to speak in this way.
We note that in \eqref{e:limtmap-overload},
$\mymathop{LIMT-Map}$ is a partial function,
and therefore its value is not necessarily defined.

Implementing the Leo logic requires
adding Leo reduction as a new basic operation,
adding a new premise to the Earley reduction
operation,
and extending the Earley sets to memoize Earley
items as \type{LIMT}'s.

\subsection{\V{LIMT} mainstems}

In Section
\ref{s:reduction},
we defined the Earley mainstem of an \type{EIMT}.
We define the \type{LIMT} mainstem of an \type{EIMT} by analogy.
When say that \Vlimt{stem} is the \type{LIMT} mainstem of \Veimt{x}
if and only if
$\LIMTMainstem{\Vlimt{stem},\Veimt{x}}$ is true, where
\begin{gather*}
\LIMTMainstem{\Vlimt{stem},\Veimt{x}} \defined \\
\Transition{\Vlimt{stem}} = \LHS{\Veimt{x}} \\
\land \; \Vlimt{stem} \in \es{(\Origin{\Veimt{x}})}.
\end{gather*}

\subsection{Leo reduction}

\begin{equation*}
\inference{
    \begin{array}{c}
    \Veimt{tributary} \in \Ves{current} \\
    \LIMTMainstem{\Vlimt{mainstem},\var{tributary}} \\
    \end{array}
}{
    \begin{array}{c}
	\VTtyped{\tuple{ \DR{\Vlimt{mainstem}}, \Origin{\var{mainstem}} }}{EIMT} \\
	\VTtyped{\tuple{ \Vlimt{mainstem}, \Veimt{tributary}}}{CFLU}
    \end{array}
}
\end{equation*}
The new Leo reduction operation resembles the Earley reduction
operation, except that it looks for a mainstem \type{LIMT},
instead of an \type{EIMT}.
$\LHS{\Veimt{tributary}}$ is the transition symbol
of the Leo reduction.
A confluence which results from Leo reduction is
called a \dfn{Leo confluence}.
The mainstem of a Leo confluence is always a \type{LIMT}.

\subsection{Revised Earley reduction}

The Earley reduction of
\ref{s:reduction}
still applies, with an additional premise:

\begin{equation*}
\inference{
    \begin{array}{c}
    \Veimt{tributary} \in \Ves{current} \\
    \nexistQ{ \VTtyped{x}{LIMT} }{ \LIMTMainstem{\var{x},\var{tributary}}} \\
    \Veimt{mainstem} \in \es{(\Origin{\Veimt{tributary}})} \\
    \Postdot{\Veimt{mainstem}} = \LHS{\Veimt{tributary}} \\
    \end{array}
}{
    \begin{array}{c}
	\VTtyped{ \tuple{ \Next{\Veimt{mainstem}}, \Origin{\var{mainstem}} }}{EIMT} \\
	\VTtyped{\tuple{\Veimt{mainstem}, \Veimt{tributary} }}{CFLU}.
    \end{array}
}
\end{equation*}

The additional premise
prevents Earley reduction from being applied
where there is an \type{LIMT} with
$\LHS{\Veimt{tributary}}$
as its transition symbol.
This reflects the fact that
Leo reduction replaces Earley reduction if and only if
there is a Leo memoization.

\subsection{Leo memoization}

We are now ready to define the inference rules
for Leo memoization.
We define one rule
that holds if
a \type{LIMT} mainstem can be found for the tributary \type{EIMT}
in \Ves{current},
\begin{equation*}
\inference{
    \begin{array}{c}
    \LIMTMainstem{\Vlimt{stem},\Veimt{trib}} \\
    \LeoEligible{\Veimt{trib}, \Ves{current}}
    \end{array}
}{
    \begin{array}{c}
	\VTtyped{ \tuple{\DR{\Vlimt{stem}}, \Penult{\Veimt{trib}}, \Origin{\var{stem}}}}{LIMT} \\
	\VTtyped{\tuple{\Vlimt{stem},\Veimt{trib}}}{CFLU}
    \end{array}
}
\end{equation*}
and another inference rule that holds if
\Veimt{trib} has no mainstem \type{LIMT},
\begin{equation*}
\inference{
    \begin{array}{c}
    \univQ{ \VTtyped{stem}{LIMT} }{ \neg \LIMTMainstem{\V{stem},\Veimt{trib}}} \\
    \LeoEligible{\Veimt{trib}, \Ves{current}}
    \end{array}
}{
    \begin{array}{c}
	\VTtyped{ \tuple{\DR{\Veimt{trib}}, \Penult{\V{trib}}, \Origin{\V{trib}}}}{LIMT} \\
	\VTtyped{\tuple{\unicorn,\Veimt{trib}}}{CFLU}.
    \end{array}
}
\end{equation*}
We note that when the confluence of a \type{LIMT} has no mainstem,
its mainstem is ill-defined.

\begin{algorithm}[ht]
\caption{Leo stack expansion}
\label{alg:leo-expansion}
\begin{algorithmic}[1]
\Procedure{LeoExpand}{$\VTtyped{top}{EIMT},\VTtyped{confl}{CFLU},\VTtyped{current}{ES}$}
\Comment{Expand Leo stack for $\V{confl}\in\Confluences{\V{top},\V{current}}$.
    \V{top} may have more than one Leo confluence and this algorithm must be run for each of them.}
\State $\tuple{\Vlimt{trunkLIM},\Veimt{previousCompletion}} \gets \V{confl}$
\While{$\Vlimt{trunkLIM}\isWellDef$}
\State $\VTtyped{\tuple{\Vlimt{nextTrunkLIM}, \Veimt{penult}}}{CFLU}$
\Statex \hspace\algorithmicindent\hspace{2em}
  $\gets\; \iotaQ{\V{c}}{\V{c}\in\Confluences{\V{trunkLIM},\V{current}}}$
\Statex \Comment{\type{LIMT}'s always have exactly one confluence.}
\State $\Veimt{completion} \gets \tuple{\Next{\DR{\Veimt{penult}}}, \Origin{\V{penult}}}$
\State Add completion \Veimt{completion} to \Ves{current}.
\State Add $\VTtyped{\tuple{\Veimt{penult}, \V{previousCompletion}}}{CFLU}$
\Statex \hspace\algorithmicindent\hspace{2em} to $\Confluences{\V{completion}, \Ves{current}}$
\Statex \Comment{"Adds" are to sets --- elements already in the set must not be duplicated.}
\State $\Vlimt{trunkLIM} \gets \Vlimt{nextTrunkLIM}$
\State $\Veimt{previousCompletion} \gets \Veimt{completion}$
\EndWhile
\EndProcedure
\end{algorithmic}
\end{algorithm}

\subsection{Leo evaluation}

\Marpa evaluation deals with Leo-memoized \type{EIMT}'s by expanding
those Leo stacks which contain memoized \type{EIMT}'s that are useful.
For each right recursion, there will be only one such Leo stack.
In the example parse of \firstRReqref-\lastRReqref{},
that Leo stack is in \type{ES} 4.

Algorithm \ref{alg:leo-expansion}
on page~\pageref{alg:leo-expansion}
recreates the Leo stack for a
confluence of a top \type{EIMT}.
We note that top of a Leo stack may have more than
one confluence, and that the others may be either Leo or ordinary confluences.
Algorithm \ref{alg:leo-expansion} must be run for each of the Leo confluences.

\section{The Aycock-Horspool finite automaton}
\label{s:AHFA}
\label{s:end-prelim}

In this paper a
``split LR(0) $\epsilon$-DFA''
as described by Aycock and Horspool~\cite{AH2002},
will be called an Aycock-Horspool Finite Automaton,
or AHFA.
This section will
summarize the ideas
from~\cite{AH2002}
that are central to \Marpa.

Aycock and Horspool based their AHFA's
on a few observations.
\begin{itemize}
\item
In practice, Earley items sharing the same origin,
but having different dotted rules,
often appear together in the same Earley set.
\item
There is in the literature a method
for associating groups of dotted rules that often appear together
when parsing.
This method is the LR(0) DFA used in the much-studied
LALR and LR parsers.
\item
The LR(0) items that are the components of LR(0)
states are, exactly, dotted rules.
\item
By taking into account symbols that derive the
null string, the LR(0) DFA could be turned into an
LR(0) $\epsilon$-DFA,
which would be even more effective
at grouping dotted rules that often occur together
into a single DFA state.
\end{itemize}

AHFA states are, in effect,
a shorthand
for groups of dotted rules that occur together frequently.
Aycock and Horspool realized that,
by changing Earley items to track AHFA states
instead of individual dotted rules,
the size of Earley sets could be reduced,
and conjectured that this would
make Earley's algorithm faster in practice.

\begin{MYsloppy}
As a reminder,
the original Earley items (\type{EIMT}'s)
were duples, $\tuple{\Vdr{x}, \Vorig{x}}$,
where \Vdr{x} is a dotted rule.
An Aycock-Horspool Earley item is a duple
\begin{equation*}
\tuple{\Vah{y}, \Vorig{y}},
\end{equation*}
where $\Vah{y}$ is an AHFA state.
\end{MYsloppy}

\Marpa uses
Earley items of the form
created by Aycock and Horspool.
A \Marpa Earley item has type \type{EIM},
and a \Marpa Earley item is often referred to as an \type{EIM}.

\begin{MYsloppy}
Aycock and Horspool did not consider
Leo's modifications,
but \Marpa incorporates them,
and \Marpa also changes its Leo items to use AHFA states.
Marpa's Leo items (\type{LIM}'s) are triples
of the form
\begin{equation*}
\tuple{\Vah{top}, \Vsym{transition}, \Vorig{top}},
\end{equation*}
where \Vsym{transition} and \Vorig{top}
are as in the traditional Leo items,
and \Vah{top} is an AHFA state.
A \Marpa Leo item has type \type{LIM}.
\end{MYsloppy}

\cite{AH2002} also defines
a partial transition function for
pairs of AHFA state and symbol,
\begin{equation*}
\GOTO: \Cfa, (\epsilon \cup \var{vocab}) \mapsto \Cfa.
\end{equation*}
$\GOTO(\Vah{from}, \epsilon)$ is a
\dfn{null transition}.
(AHFA's are not fully deterministic.)
If \Vah{predicted} is the result of a null transition,
it is called a \dfn{predicted} AHFA state.
If an AHFA state is not a \dfn{predicted} AHFA state,
it is called a \dfn{confirmed} AHFA state.
The initial AHFA state is a confirmed AHFA state.\footnote{%
In~\cite{AH2002} confirmed states are called ``kernel states'',
and predicted states are called ``non-kernel states''.
}

The states of an AHFA
are not a partition of the dotted
rules --
a single dotted rule can occur
in more than one AHFA state.
In combining
the improvements of Leo~\cite{Leo1991} and
Aycock and Horspool~\cite{AH2002},
the following theorem is crucial.

\begin{theorem}\label{t:leo-singleton}
If a \textnormal{\Marpa} Earley item (\type{EIM}) is the result of a
Leo reduction,
then its AHFA state contains only one dotted rule.
\end{theorem}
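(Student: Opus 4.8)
The plan is to trace the AHFA state carried by the Leo reduction back to its creation in a Leo memoization, and then to use Leo-uniqueness to show that this state admits only one advancing dotted rule. First I would observe that, by the Leo reduction inference rule, an EIM produced by a Leo reduction has the form $[\Vah{top}, \Vorig{top}]$, where $\Vah{top}$ is exactly the AHFA state recorded in the triggering Leo item. So it suffices to show that every AHFA state stored in a LIM contains a single dotted rule. Inspecting the two Leo memoization inference rules, the stored AHFA state is either produced fresh in the base case (no LIMT predecessor), where it is the state of $\Next{\Vdr{bottom}}$ for the Leo-eligible \Veimt{bottom}, or it is inherited unchanged from a predecessor LIM in the recursive case. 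An easy induction up the Leo sequence therefore reduces the claim to the base case: I must show that the AHFA state of $\Next{\Vdr{bottom}}$ is a singleton whenever $\Vdr{bottom}$ is Leo-eligible.

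Next I would unfold what that AHFA state is. Writing $\Vah{bottom}$ for the AHFA state of \Veimt{bottom} and $\Vsym{transition} = \Postdot{\Vdr{bottom}}$ for its postdot symbol (equivalently $\Penult{\Vdr{bottom}}$, since \Veimt{bottom} is a penult), the dotted rule $\Next{\Vdr{bottom}}$ lives in the confirmed state $\GOTO(\Vah{bottom}, \Vsym{transition})$. By the construction of the AHFA, the dotted rules of $\GOTO(\Vah{bottom}, \Vsym{transition})$ are precisely the advances $\Next{\Vdr{d}}$ of those $\Vdr{d} \in \Vah{bottom}$ with $\Postdot{\Vdr{d}} = \Vsym{transition}$. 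The heart of the argument is then that $\Vdr{bottom}$ is the only such $\Vdr{d}$.

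This is where Leo-eligibility does the work. Leo-eligibility of $\Vdr{bottom}$ includes Leo-uniqueness, which guarantees that $\Vdr{bottom}$ is the unique dotted rule contained in its Earley set whose postdot (transition) symbol is $\Vsym{transition}$. Since \Veimt{bottom} belongs to that Earley set, every dotted rule of its AHFA state $\Vah{bottom}$ is contained in the set; hence no dotted rule of $\Vah{bottom}$ other than $\Vdr{bottom}$ has postdot $\Vsym{transition}$. Consequently the transition on $\Vsym{transition}$ advances only $\Vdr{bottom}$, so $\GOTO(\Vah{bottom}, \Vsym{transition})$ contains the single dotted rule $\Next{\Vdr{bottom}}$, as required.

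The step I expect to be delicate is the claim that this confirmed state is genuinely a singleton once the split LR(0) $\epsilon$-DFA is taken into account. Because $\Next{\Vdr{bottom}}$ has only nullable (hence nulling) symbols after its dot, I must confirm that the $\epsilon$-handling of the AHFA does not fold any further dot positions of the same rule, or any predicted rules, into this state: the null transitions that service nullable postdot symbols must lead to separate states rather than enlarging $\GOTO(\Vah{bottom}, \Vsym{transition})$ itself. I would also take care to note that the uniqueness actually used is uniqueness of the transition (postdot) symbol within the Earley set; verifying that Leo-eligibility delivers exactly this, and that it transfers from ``contained in the Earley set'' to ``contained in $\Vah{bottom}$,'' is the crux on which the whole theorem turns.
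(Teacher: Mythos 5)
Your proposal is essentially correct, but it takes a genuinely different route from the paper. The paper never traces the Leo item back through its construction: it argues directly about the result state \Vah{result} by a double reductio. It notes that \Vah{result} contains a completed dotted rule (by the definition of Leo reduction), supposes a second dotted rule \Vdr{other} in the same state, rules out \Vdr{other} being a prediction (the AHFA construction never mixes predictions with completions), and then shows both that the two dotted rules cannot come from the same rule (same transition history forces the same dot position, hence equality) and that they cannot come from different rules (two distinct rules advancing over the same transition symbol would violate the Leo uniqueness under which the reduction was licensed) --- a contradiction either way. Your proof instead reduces the theorem to the claim that every AHFA state stored in a LIM is a singleton, inducts along the memoization chain, and at the base case computes $\GOTO(\Vah{bottom}, \Vsym{transition})$ forward: by the confirmation consistency/completeness observations its dotted rules are exactly the advances of postdot-\Vsym{transition} rules in \Vah{bottom}, and Leo uniqueness leaves only $\Next{\Vdr{bottom}}$. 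The two arguments share the same crux (Leo uniqueness plus the fact that confirmed states contain only advances, with predictions split off into separate states), and the delicate points you flag are exactly the right ones and do check out against the paper's Observations \ref{o:confirmed-AHFA-consistent} and \ref{o:confirmed-AHFA-complete}. What the paper's approach buys is independence from how LIMs are represented: the paper is in fact ambiguous on this point (the Leo reduction pseudocode applies \GOTO{} to the stored state, while the correctness section treats the stored state itself as the result), and your first step --- that the result EIM carries exactly the AHFA state recorded in the LIM --- commits to one reading and would need to be rephrased under the other, shifting your argument by one \GOTO{} application. What your approach buys is a constructive account of where the singleton state first arises and why it propagates unchanged up the Leo sequence, at the cost of an extra well-founded induction that the paper's direct reductio avoids entirely.
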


\begin{proof}
Let the \type{EIM} that is the result of the Leo
reduction be
\begin{equation*}
\Veim{result} = \tuple{\Vah{result}, \Vorig{result}}.
\end{equation*}
Let the Earley set that contains \Veim{result} be
\Ves{i}.
Since \Veim{result} is the result of a Leo reduction
we know, from the definition of a Leo reduction,
that
\begin{equation*}
\Vdr{complete} \in \Vah{result}
\end{equation*}
where
\Vdr{complete} is a completed rule.
Let
\begin{equation*}
\begin{gathered}
\Vrule{c} = \Rule{\Vdr{complete}} \text{and } \V{cp} = \Pos{\V{complete}}, \\
\text{so that } \Vdr{complete} = \tuple{ \Vrule{c}, \var{cp} }.
\end{gathered}
\end{equation*}
We note that
$\var{cp} > 0$ because, in \Marpa{}
grammars, completions are never
predictions.

Suppose, for a reduction to absurdity,
that the AHFA state contains another dotted rule,
\Vdr{other}, that is, that
\begin{equation*}
\Vdr{other} \in \Vah{result},
\end{equation*}
where $\Vdr{complete} \neq \Vdr{other}$.
Let \Vrule{o} be the rule of \Vdr{other},
and \var{op} its dot position,
\begin{equation*}
\Vdr{other} = \tuple{ \Vrule{o}, \var{op} }.
\end{equation*}
AHFA construction never places a prediction in the same
AHFA state as a completion, so
\Vdr{other} is not a prediction.
Therefore, $\var{op} > 0$.
To show this outer reduction to absurdity, we first prove
by a first inner reductio that
$\Vrule{c} \neq \Vrule{o}$,
then by a second inner reductio that
$\Vrule{c} = \Vrule{o}$.

Assume, for the first inner reductio,
that
$\Vrule{c} = \Vrule{o}$.
By the construction of an AHFA
state,
both \Vdr{complete} and \Vdr{other}
resulted from the same series
of transitions.
But the same series of transitions over the
same rule would result in the same dot position,
$\var{cp} = \var{op}$,
so that if $\Vrule{c} = \Vrule{o}$,
$\Vdr{complete} = \Vdr{other}$,
which is contrary to the assumption for the outer reductio.
This shows the first inner reductio.

Next, we assume for the second inner reductio that
$\Vrule{c} \ne \Vrule{o}$.
Since both \Vdr{complete} and \Vdr{other}
are in the same \type{EIM}
and neither is a prediction,
both must result from transitions,
and their transitions must have been from the same Earley set.
Since they are in the same AHFA state,
by the AHFA construction,
that transition must have been
over the same transition symbol,
call it \Vsym{transition}.
But Leo uniqueness applies to \Vdr{complete},
and requires that the transition
over \Vsym{transition} be unique in \Ves{i}.

But if $\Vrule{c} \ne \Vrule{o}$,
\Vsym{transition} was the transition symbol
of two different dotted rules,
and the Leo uniqueness requirement does not hold.
The conclusion that the Leo uniqueness requirement
both does and does not hold
is a contradiction,
which shows the second inner reductio.
Since the assumption for
the second inner reductio was that
$\Vrule{c} \ne \Vrule{o}$,
we conclude that
$\Vrule{c} = \Vrule{o}$.

By the two inner reductio's,
we have both
$$\Vrule{c} \neq \Vrule{o} \text{ and } \Vrule{c} = \Vrule{o},$$
which completes the outer reduction to absurdity.
For the outer reductio, we assumed that
\Vdr{other}
was a second dotted rule in \Vah{result},
such that
$\Vdr{other} \neq \Vdr{complete}$.
We can therefore conclude that
\begin{equation*}
\Vdr{other} \in \Vah{result} \implies \Vdr{other} = \Vdr{complete}.
\end{equation*}
If \Vdr{complete} is a dotted rule
in the AHFA state of a Leo reduction \type{EIM},
then it must be the only dotted rule in that AHFA state.
\end{proof}

\section{The \Marpa recognizer}
\label{s:recce}
\label{s:pseudocode}

\subsection{Complexity}

Alongside the pseudocode of this section
are observations about its space and time complexity.
In what follows,
we will charge all time and space resources
to Earley items,
or to attempts to add Earley items.
We will show that,
to each Earley item actually added,
or to each attempt to add a duplicate Earley item,
we can charge amortized \Oc{} time and space.

At points, it will not be immediately
convenient to speak of
charging a resource
to an Earley item
or to an attempt to add a duplicate
Earley item.
In those circumstances,
we speak of charging time and space
\begin{itemize}
\item to the parse; or
\item to the Earley set; or
\item to the current procedure's caller.
\end{itemize}

We can charge time and space to the parse itself,
as long as the total time and space charged is \Oc.
Afterwards, this resource can be re-charged to
the initial Earley item, which is present in all parses.
Soft and hard failures of the recognizer use
worst-case \Oc{} resource,
and are charged to the parse.

We can charge resources to the Earley set,
as long as the time or space is \Oc.
Afterwards,
the resource charged to the Earley set can be
re-charged to an arbitrary member of the Earley set,
for example, the first.
If an Earley set is empty,
the parse must fail,
and the resource can be charged to the parse.

In a procedure,
resource can be ``caller-included''.
Caller-included resource is not accounted for in
the current procedure,
but passed upward to the procedure's caller,
to be accounted for there.
A procedure to which caller-included resource is passed will
sometimes pass the resource upward to its own caller,
although of course the top-level procedure does not do this.

For each procedure, we will state whether
the time and space we are charging is inclusive or exclusive.
The exclusive time or space of a procedure is that
which it uses directly,
ignoring resource charges passed up from called procedures.
Inclusive time or space includes
resource passed upward to the
current procedure from called procedures.

Recall that
Earley sets may be represented by \Ves{i},
where \var{i} is the Earley set's location \Vloc{i}.
The two notations should be regarded as interchangeable.
The actual implementation of either
should be the equivalent of a pointer to
a data structure containing,
at a minimum,
the Earley items,
a memoization of the Earley set's location as an integer,
and a per-set-list.
Per-set-lists will be described in Section \ref{s:per-set-lists}.

\begin{algorithm}[ht]
\caption{Marpa Top-level}
\label{alg:top-level}
\begin{algorithmic}[1]
\Procedure{Main}{}
\State \Call{Initial}{}
\For{ $\var{i}, 0 \le \var{i} \le \Vsize{w}$ }
\Statex \Comment At this point, $\Ves{x}$ is complete, for $0 \le \var{x} < \var{i}$
\State \Call{Scan pass}{$\var{i}, \var{w}[\var{i} \subtract 1]$}
\If{$\size{\Ves{i}} = 0$}
\State reject \Cw{} and return
\EndIf
\State \Call{Reduction pass}{\var{i}}
\EndFor
\For{every $\VTtyped{\tuple{\Vah{x}, 0}}{EIM} \in \Etable{\Vsize{w}}$}
\If{$\Vdr{accept} \in \Vah{x}$}
\State accept \Cw{} and return
\EndIf
\EndFor
\State reject \Cw{}
\EndProcedure
\end{algorithmic}
\end{algorithm}

\subsection{Top-level code}

The top-level code is
Algorithm~\ref{alg:top-level}
on page~\pageref{alg:top-level}.
Exclusive time and space for the loop over the Earley sets
is charged to the Earley sets.
Inclusive time and space for the final loop to
check for \Vdr{accept} is charged to
the Earley items at location \size{\Cw}.
Overhead is charged to the parse.
All these resource charges are obviously \Oc.

\subsection{Ruby Slippers parsing}
\begin{MYsloppy}
The top-level code of
Algorithm~\ref{alg:top-level}
(p.~\pageref{alg:top-level})
represents a significant change
from \AH~\cite{AH2002}.
\call{Scan pass}{} and \call{Reduction pass}{}
are separated.
As a result,
when the scanning of tokens that start at location \Vloc{i} begins,
the Earley sets for all locations prior to \Vloc{i} are complete.
This means that the scanning operation has available, in
the Earley sets,
full information about the current state of the parse,
including which tokens are acceptable during the scanning phase.
\end{MYsloppy}

\begin{algorithm}[ht]
\caption{Initialization}
\label{alg:initialization}
\begin{algorithmic}[1]
\Procedure{Initial}{}
\State \Call{Add \type{EIM} pair}{$\es{(0)}, \Vah{start}, 0$}
\EndProcedure
\end{algorithmic}
\end{algorithm}

\subsection{Initialization}
\label{p:initial-op}

The initialization code is
Algorithm~\ref{alg:initialization}
on page~\pageref{alg:initialization}.
Inclusive time and space is \Oc{}
and is charged to the parse.

\begin{algorithm}[ht]
\caption{Marpa Scan pass}
\label{alg:scan-pass}
\begin{algorithmic}[1]
\Procedure{Scan pass}{$\Vloc{i},\Vsym{a}$}
\For{each $\Veim{mainstem} \in \var{transitions}((\var{i} \subtract 1),\var{a})$}
\Statex \Comment{Each pass through this loop is an \type{EIM} attempt.}
\State $\tuple{\Vah{from}, \Vloc{origin}} \gets \Veim{mainstem}$
\State $\Vah{to} \gets \GOTO(\Vah{from}, \Vsym{a})$
\State \Call{Add \type{EIM} pair}{$\Ves{i}, \Vah{to}, \Vloc{origin}$}
\EndFor
\EndProcedure
\end{algorithmic}
\end{algorithm}

\subsection{Scan pass}
\label{p:scan-op}

The code for the scan pass is
Algorithm~\ref{alg:scan-pass}
on page~\pageref{alg:scan-pass}.
\var{transitions} is a set of tables, one per Earley set.
The tables in the set are indexed by symbol.
Symbol indexing is \Oc, since the number of symbols
is a constant, but
since the number of Earley sets grows with
the length of the parse,
it cannot be assumed that Earley sets can be indexed by location
in \Oc{} time.
For the operation $\var{transitions}(\Vloc{l}, \Vsym{s})$
to be in \Oc{} time,
\Vloc{l} must represent a link directly to the Earley set.
In the case of scanning,
the lookup is always in the previous Earley set,
which can easily be tracked in \Oc{} space
and retrieved in \Oc{} time.
Inclusive time and space can be charged to the
\Veim{mainstem}.
Overhead is charged to the Earley set at \Vloc{i}.

\begin{algorithm}[ht]
\caption{Reduction pass}
\label{alg:reduction-pass}
\begin{algorithmic}[1]
\Procedure{Reduction pass}{\Vloc{i}}
\Statex \Comment{\Vtable{i} may include \type{EIM}'s added by
    by \Call{Reduce one LHS}{} and the loop must traverse these.}
\For{each Earley item $\Veim{work} \in \Vtable{i}$}
\State $\tuple{\Vah{work}, \Vloc{origin}} \gets \Veim{work}$
\State $\Vsymset{lh-sides} \gets$ a set containing the LHS
\State \hspace\algorithmicindent of every completed rule in \Vah{work}
\For{each $\Vsym{lhs} \in \Vsymset{lh-sides}$}
\State \Call{Reduce one LHS}{\Vloc{i}, \Vloc{origin}, \Vsym{lhs}}
\EndFor
\EndFor
\State \Call{Memoize transitions}{\Vloc{i}}
\EndProcedure
\end{algorithmic}
\end{algorithm}

\subsection{Reduction pass}

The code for the reduction pass is
Algorithm~\ref{alg:reduction-pass}
on page~\pageref{alg:reduction-pass}.
The loop over \Vtable{i} must also include
any items added by \call{Reduce one LHS}{}.
This can be done by implementing \Vtable{i} as a list
and adding new items at the end.

Exclusive time is clearly \Oc{} per
\Veim{work},
and is charged to the \Veim{work}.
Additionally,
some of the time required by
\call{Reduce one LHS}{} is caller-included,
and therefore charged to this procedure.
Inclusive time from \call{Reduce one LHS}{}
is \Oc{} per call,
as will be seen in Section~\ref{p:reduce-one-lhs},
and is charged to the \Veim{work}
that is current
during that call to \call{Reduce one LHS}{}.
Overhead may be charged to the Earley set at \Vloc{i}.

\begin{algorithm}[ht]
\caption{Memoize transitions}
\label{alg:memoize-transitions}
\begin{algorithmic}[1]
\Procedure{Memoize transitions}{\Vloc{i}}
\For{every \Vsym{postdot}, a postdot symbol of $\Ves{i}$}
\If{$\LeoEligible{\Vsym{postdot},\Ves{i}}$}
\State Set $\var{transitions}(\Vloc{i},\Vsym{postdot})$ to a \type{LIM}
\Else
\State Set $\var{transitions}(\Vloc{i},\Vsym{postdot})$
\Statex \hspace{\algorithmicindent}\hspace{4em}
    to the set of \VTtyped{eim}{EIM} such that
\Statex \hspace{\algorithmicindent}\hspace{4em}
    $\V{eim} \in \Ves{i} \land \Postdot{\V{eim}} = \Vsym{postdot}$
\EndIf
\EndFor
\EndProcedure
\end{algorithmic}
\end{algorithm}

\subsection{Memoize transitions}

The code for the memoization of transitions is
Algorithm~\ref{alg:memoize-transitions}
on page~\pageref{alg:memoize-transitions}.
The \var{transitions} table for \Ves{i}
is built once all \type{EIM}s have been
added to \Ves{i}.
We first look at the resource,
excluding the processing of Leo items.
The non-Leo processing can be done in
a single pass over \Ves{i},
in \Oc{} time per \type{EIM}.
Inclusive time and space are charged to the
Earley items being examined.
Overhead is charged to \Ves{i}.

We now look at the resource used in the Leo processing.
A transition symbol \Vsym{transition}
is Leo eligible if it is Leo unique
and its rule is right recursive.
(If \Vsym{transition} is Leo unique in \Ves{i}, it will be the
postdot symbol of only one rule in \Ves{i}.)
All but one of the determinations needed to decide
if \Vsym{transition} is Leo eligible can be precomputed
from the grammar,
and the resource to do this is charged to the parse.
The precomputation, for example,
for every rule, determines if it is right recursive.

One part of the test for
Leo eligibility cannot be done as a precomputation.
This is the determination whether there is only one \type{EIM}
in \Ves{i} whose postdot symbol is
\Vsym{transition}.
This can be done
in a single pass over the \type{EIM}'s of \Ves{i}
that notes the postdot symbols as they are encountered
and whether any is encountered twice.
The time and space,
including that for the creation of a \type{LIM} if necessary,
will be \Oc{} time per \type{EIM} examined,
and can be charged to \type{EIM} being examined.

\begin{algorithm}[ht]
\caption{Reduce one LHS symbol}
\label{alg:reduce-one-LHS-symbol}
\begin{algorithmic}[1]
\Procedure{Reduce one LHS}{\Vloc{i}, \Vloc{origin}, \Vsym{lhs}}
\Statex \Comment{Each pass through the following loop is an \type{EIM} attempt}
\For{each $\var{pim} \in \var{transitions}(\Vloc{origin},\Vsym{lhs})$}
\Statex \Comment \var{pim} is a ``postdot item'', either a \type{LIM} or an \type{EIM}
\If{\var{pim} is a \type{LIM}, \Vlim{pim}}
\State \Call{Leo reduction}{\Vloc{i}, \Vlim{pim}}
\Else
\State \Call{Earley reduction}{\Vloc{i}, \Veim{pim}, \Vsym{lhs}}
\EndIf
\EndFor
\EndProcedure
\end{algorithmic}
\end{algorithm}

\subsection{Reduce one LHS}
\label{p:reduce-one-lhs}

The code to reduce a single LHS symbol is
Algorithm~\ref{alg:reduce-one-LHS-symbol}
on page~\pageref{alg:reduce-one-LHS-symbol}.
To show that
\begin{equation*}
\var{transitions}(\Vloc{origin},\Vsym{lhs})
\end{equation*}
can be traversed in \Oc{} time,
we note
that the number of symbols is a constant
and assume that \Vloc{origin} is implemented
as a link back to the Earley set,
rather than as an integer index.
This requires that \Veim{work}
in \call{Reduction pass}{}
carry a link
back to its origin.
As implemented in~\cite{Marpa-R2}, \Marpa's
Earley items have such links.

Inclusive time
for the loop over the \type{EIM} attempts
is charged to each \type{EIM} attempt.
Overhead is \Oc{} and caller-included.

\begin{algorithm}[ht]
\caption{Earley reduction}
\label{alg:earley-reduction}
\begin{algorithmic}[1]
\Procedure{Earley reduction}{\Vloc{i}, \Veim{from}, \Vsym{trans}}
\State $\tuple{\Vah{from}, \Vloc{origin}} \gets \Veim{from}$
\State $\Vah{to} \gets \GOTO(\Vah{from}, \Vsym{trans})$
\State \Call{Add \type{EIM} pair}{\Ves{i}, \Vah{to}, \Vloc{origin}}
\EndProcedure
\end{algorithmic}
\end{algorithm}

\subsection{Earley reduction operation}
\label{p:reduction-op}

The code that performs Earley reduction is
Algorithm~\ref{alg:earley-reduction}
on page~\pageref{alg:earley-reduction}.
Exclusive time and space is clearly \Oc.
\call{Earley reduction}{} is always
called as part of an \type{EIM} attempt,
and inclusive time and space is charged to the \type{EIM}
attempt.

\begin{algorithm}[ht]
\caption{Leo reduction}
\label{alg:leo-reduction}
\begin{algorithmic}[1]
\Procedure{Leo reduction}{\Vloc{i}, \Vlim{from}}
\State $\tuple{\Vah{from}, \Vsym{trans}, \Vloc{origin}} \gets \Vlim{from}$
\State $\Vah{to} \gets \GOTO(\Vah{from}, \Vsym{trans})$
\State \Call{Add \type{EIM} pair}{\Ves{i}, \Vah{to}, \Vloc{origin}}
\EndProcedure
\end{algorithmic}
\end{algorithm}

\subsection{Leo reduction operation}
\label{p:leo-op}

The code that performs Leo reduction is
Algorithm~\ref{alg:leo-reduction}
on page~\pageref{alg:leo-reduction}.
Exclusive time and space is clearly \Oc.
\call{Leo reduction}{} is always
called as part of an \type{EIM} attempt,
and inclusive time and space is charged to the \type{EIM}
attempt.

\begin{algorithm}[ht]
\caption{Add \type{EIM} pair}
\label{alg:add-EIM-pair}
\begin{algorithmic}[1]
\Procedure{Add \type{EIM} pair}{$\Ves{i},\Vah{confirmed}, \Vloc{origin}$}
\State $\Veim{confirmed} \gets \tuple{\Vah{confirmed}, \Vloc{origin}}$
\State $\Vah{predicted} \gets \GOTO(\Vah{confirmed}, \epsilon)$
\If{\Veim{confirmed} is new}
\State Add \Veim{confirmed} to \Vtable{i}
\EndIf
\If{$\Vah{predicted} \neq \Lambda$}
\State $\Veim{predicted} \gets \tuple{\Vah{predicted}, \Vloc{i}}$
\If{\Veim{predicted} is new}
\State Add \Veim{predicted} to \Vtable{i}
\EndIf
\EndIf
\EndProcedure
\end{algorithmic}
\end{algorithm}

\subsection{Adding a pair of Earley items}
\label{p:add-eim-pair}

The code in
Algorithm~\ref{alg:add-EIM-pair}
on page~\pageref{alg:add-EIM-pair}
attempts to add a pair of Earley items (EIMs),
one confirmed and the other a prediction.
Algorithm~\ref{alg:add-EIM-pair}
first attempts to add a confirmed \type{EIM}.
Then
Algorithm~\ref{alg:add-EIM-pair}
checks for the existence of \Veim{predicted}, the \type{EIM} for
the null-transition of the confirmed \type{EIM}.
If \Veim{predicted} exists,
Algorithm~\ref{alg:add-EIM-pair}
attempts to add \Veim{predicted}.

Inclusive time and space is charged to the
calling procedure.
Trivially, the space is \Oc{} per call.

We show that time is also \Oc{}
by singling out the two non-trivial cases:
checking that an Earley item is new,
and adding it to the Earley set.
\Marpa{} checks whether an Earley item is new
in \Oc{} time
by using a data structure called a PSL.
PSL's are the subject of Section \ref{s:per-set-lists}.
An Earley item can be added to the current
set in \Oc{} time
if Earley set is seen as a linked
list, to the head of which the new Earley item is added.

The resource used by \call{Add \type{EIM} Pair}{}
is always caller-included.
No time or space is ever charged
to a predicted Earley item.
At most one attempt to add a \Veim{predicted} will
be made per attempt to add a \Veim{confirmed},
so that the total resource charged
remains \Oc.

\subsection{Per-set lists}
\label{s:per-set-lists}

In the general case,
where \var{x} is an arbitrary datum,
it is not possible
to use duple $\tuple{\Ves{i}, \V{x}}$
as a search key and expect the search to use
\Oc{} time.
Within \Marpa, however, there are specific cases
where it is desirable to do exactly that.
This is accomplished by
taking advantage of special properties of the search.

If it can be arranged that there is
a link direct to the Earley set \Ves{i},
and that $0 \leq \var{x} < \var{c}$,
where \var{c} is a constant of reasonable size,
then a search can be made in \Oc{} time,
using a data structure called a PSL.
Data structures identical to or very similar to PSL's are
briefly outlined in both
\cite[p. 97]{Earley1970}~and~%
\cite[Vol. 1, pages 326-327]{AU1972}.
But neither source gives them a name.
The term PSL
(``per-Earley set list'')
is new
with this paper.

A PSL is a fixed-length array of
integers, indexed by an integer,
and kept as part of each Earley set.
While \Marpa{} is building a new Earley set,
\Ves{j},
the PSL for every previous Earley set, \Vloc{i},
tracks the Earley items in \Ves{j} that have \Vloc{i}
as their origin.
The maximum number of Earley items that must be tracked
in each PSL is
the number of AHFA states,
\Vsize{\Cfa},
which is a constant of reasonable size
that depends on \Cg{}.

It would take more than \Oc{} time
to clear and rebuild the PSL's each time
that a new Earley set is started.
This overhead is avoided by ``time-stamping'' each PSL
entry with the Earley set
that was current when that PSL
entry was last updated.

As before,
where \Ves{i} is an Earley set,
let \Vloc{i} be its location,
and vice versa.
\Vloc{i} is an integer which is
assigned as Earley sets are created.
Let $\ID{\Vah{x}}$ be the integer ID of an AHFA state.
Numbering the AHFA states from 0 on up as they are created
is an easy way to create $\ID{\Vah{x}}$.
Let $\PSL{\Ves{x}}{\var{y}}$
be the entry for integer \var{y} in the PSL in
the Earley set at \Vloc{x}.

Consider the case where \Marpa is building \Ves{j}
and wants to check whether Earley item
$\Veim{x} = \tuple{\Vah{x}, \Vorig{x}}$ is new.
\Marpa{} looks at
\begin{equation*}
\Vloc{time-stamp} = \PSL{\Ves{x}}{\ID{\Vah{x}}},
\end{equation*}
and proceeds as follows:

\begin{itemize}
\item PSL entries are initially undefined.
If \Vloc{time-stamp} is undefined,
then the entry has never been used,
and \Veim{x} is new.
\Veim{x} will be added to \Ves{j}
and the time stamp will be reset.
\item If $\var{time-stamp} = \Vloc{j},$
then \Veim{x} is not new,
and will not be added to \Ves{j}.
The time stamp is left as it is.
\item If
$\var{time-stamp} \ne \Vloc{j}$,
then \Veim{x} is new.
\Veim{x} will be added to \Ves{j}
and the time stamp will be reset.
\end{itemize}

Resetting the time stamp is done as follows:
$$\PSL{\Ves{x}}{\ID{\Vah{x}}} \gets \Vloc{j}.$$
\subsection{Complexity summary}

For convenience, we collect and summarize here
some of the observations of this section.

\begin{observation}
The time and space charged to an Earley item
which is actually added to the Earley sets
is \Oc.
\end{observation}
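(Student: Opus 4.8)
The plan is to treat this Observation as a consolidation of the per-procedure charging analyses already carried out in this section, rather than as a fresh argument. First I would recall the global convention: every resource is ultimately charged either to the parse, to an Earley set, to an attempt to add a duplicate Earley item, or to an Earley item that is actually added. This Observation isolates the last of these buckets, so the task reduces to collecting every place in the pseudocode where resource lands directly on an added Earley item, checking both that each such charge is \Oc{} and that any fixed added item receives only a constant number of them.

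Next I would enumerate the roles an added Earley item can play, citing the paragraph that analyzes each procedure. An item belongs to exactly one Earley set, say \Ves{i}. It is the current item \Veim{work} in \call{Reduction pass}{} exactly once, receiving \Oc{} exclusive time there together with the caller-included \Oc{} contributed by each of the \call{Reduce one LHS}{} calls made on its behalf; it is examined once by \call{Memoize transitions}{} for \Oc{}; it appears at most once as \Veim{predecessor}, in the one \call{Scan pass}{} whose predecessors are drawn from \Ves{i}, again for \Oc{}; and, if it lies in \Etable{\Vsize{w}}, it is examined at most once by the final accept loop, also for \Oc{}. Finally, if it was added as a confirmed item, the single successful EIM attempt that created it (through \call{Add EIM pair}{}) contributes its own \Oc{}; items added as predicted EIM's receive nothing from the attempt mechanism, by the convention that no resource is ever charged to a predicted Earley item. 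Summing a bounded number of \Oc{} charges yields \Oc{} per added item.

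The step that needs the most care is bounding the number of \call{Reduce one LHS}{} calls charged to a single work item, and, relatedly, ensuring that the ``caller-included'' bookkeeping terminates without accumulation. The number of such calls equals the size of \Vsymset{lh-sides}, i.e.\ the number of distinct LHS symbols of completed rules inside the item's AHFA state; this is bounded because the number of dotted rules in any AHFA state is a constant depending only on \Cg{} (the count of AHFA states, \size{\Cfa}, is itself a constant of reasonable size). For the caller-included chains I would observe that the call tree has fixed shape --- \call{Main}{} calls \call{Reduction pass}{}, which calls \call{Reduce one LHS}{}, which calls a reduction operation, which calls \call{Add EIM pair}{} --- so each upward-passed quantum is \Oc{} and is deposited, a constant number of times, on either the work item or the EIM attempt that becomes the added item. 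With those two points secured, the remaining charges follow immediately from the cited paragraphs, and the \Oc{} bound per added Earley item is established.
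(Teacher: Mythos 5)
Your proposal is correct and matches the paper's treatment: the paper offers no separate proof of this Observation, presenting it explicitly as a collection of the per-procedure charging analyses (scan pass charged to the predecessor, reduction pass charged to \Veim{work}, memoization charged to the EIM examined, attempt costs caller-included through \call{Add EIM pair}{}, predicted items never charged), which is exactly the consolidation you perform. Your added care in bounding the number of \call{Reduce one LHS}{} calls per work item by a grammar constant and in checking that each item plays each role only a bounded number of times makes explicit what the paper leaves implicit, and is consistent with it.
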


\begin{observation}
The time charged to an attempt
to add a duplicate Earley item to the Earley sets
is \Oc.
\end{observation}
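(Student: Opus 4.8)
The plan is to assemble, for the \call{Add EIM pair}{} procedure of Algorithm~\ref{a:pair}, the constant-time bounds already established in this section, since that procedure is the only place where a duplicate Earley item is ever attempted. First I would observe that every attempt to add an Earley item --- whether the item proves new or a duplicate --- is made inside \call{Add EIM pair}{}, and that a single call to that procedure contains at most two attempts: one for \Veim{confirmed}, and, when the null transition $\GOTO(\Vah{confirmed}, \epsilon)$ is not $\Lambda$, one for \Veim{predicted}. I would then isolate the work done by a \emph{duplicate} attempt for some item $\Veim{x} = [\Vah{x}, \Vorig{x}]$: it consists of forming the pair, evaluating a $\GOTO$, and running the newness test; because the item is not new, the insertion step is skipped, so neither a linked-list insertion nor any space is charged. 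The observation concerns time only, and the space for a duplicate attempt is trivially not at issue.

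Next I would dispose of the easy operations. Forming an Earley item from an AHFA state and an origin is \Oc, and each evaluation of $\GOTO$ is \Oc{} because $\GOTO$ is a transition function over the fixed automaton \Cfa. Since at most one predicted attempt accompanies each confirmed attempt, handling the null transition contributes only a constant factor. This reduces the entire observation to a single point: that the newness test runs in \Oc{} time.

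The hard part will be that last point, because a naive membership test keyed on the duple $[\Ves{i}, \var{x}]$ is \emph{not} constant time: the number of Earley sets grows with \Vsize{w}, so locating an Earley set from an integer location cannot be assumed to take \Oc{} time. I would resolve this exactly as in Section~\ref{s:per-set-lists}. The origin carried by the Earley item is a direct link to its Earley set \Ves{x}, so that set's PSL is reached with no location-to-set lookup; the PSL is indexed by $\ID{\Vah{x}}$, which ranges over at most \Vsize{\Cfa} values, a grammar-dependent constant; and the test itself is one array read of $\PSL{\Ves{x}}{\ID{\Vah{x}}}$ followed by a comparison of the retrieved time-stamp against the current location \Vloc{j}. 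The time-stamping discipline ensures the PSL is never cleared between Earley sets, so no hidden re-initialization cost is folded into the test. Both the read and the comparison are \Oc, and combining them with the constant-cost $\GOTO$ evaluations and pair formations shows that each duplicate attempt costs \Oc, which is the claim.
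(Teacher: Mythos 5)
Your proposal is correct and matches the paper's own accounting: the paper likewise reduces the claim to the per-call work of \call{Add EIM pair}{} (at most one predicted attempt per confirmed attempt, constant-cost $\GOTO$ and item formation), with the crux being the \Oc{} newness test via the time-stamped, AHFA-indexed PSL of Section~\ref{s:per-set-lists}, reached through a direct link to the origin's Earley set. Your treatment of the time-stamping (avoiding any hidden clearing cost) and of the skipped insertion on duplicates is exactly the paper's argument, just stated more explicitly.
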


For evaluation purposes, \Marpa{} adds a confluence to
each \type{EIM} for every attempt to
add that \type{EIM},
even if that \type{EIM}
is a duplicate.
Traditionally, complexity results treat parsers
as recognizers, and such costs are ignored.
This will be an issue when the space complexity
for unambiguous grammars is considered.

\begin{observation}
The space charged to an attempt
to add a duplicate Earley item to the Earley sets
is \Oc{} if the confluences are included,
zero otherwise.
\end{observation}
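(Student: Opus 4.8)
The plan is to trace a single duplicate-add attempt through \call{Add EIM pair}{} (Algorithm~\ref{a:pair}) and account for every storage location it could touch. An attempt to add an Earley item, whether the \Veim{confirmed} or the \Veim{predicted} of that procedure, begins with a newness test (``\Veim{confirmed} is new'', respectively ``\Veim{predicted} is new''). By definition a \emph{duplicate} attempt is precisely one in which this test fails, so the guarded body --- the line that appends the item to \Vtable{i} --- is never executed. The whole proof then reduces to showing that the only storage a failed attempt can consume is the optional evaluation link.

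First I would argue that the newness test itself allocates nothing. As set out in Section~\ref{s:per-set-lists}, newness is decided by reading $\PSL{\Ves{x}}{\ID{\Vah{x}}}$ from a fixed-length array that is preallocated as part of each Earley set and reused across sets by time-stamping. For a duplicate the stored time-stamp already equals the current location \Vloc{j}, so even the PSL update $\PSL{\Ves{x}}{\ID{\Vah{x}}} \gets \Vloc{j}$ is skipped. Hence the test touches only preexisting, reusable storage and charges no new space to the attempt.

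Next I would split on whether evaluation links are maintained. In the pure-recognizer case no link is recorded, the guarded add is skipped, and nothing whatsoever is written; the space charged to the attempt is therefore zero. In the evaluation case, as noted immediately before the observation, \Marpa{} records each attempt --- original or duplicate --- by appending a single fixed-size link to the already-present EIM. A bounded-size link is \Oc{} space, and exactly one is added per duplicate attempt, giving \Oc{} space in that case. The two cases together yield exactly the stated dichotomy.

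I do not expect a genuine obstacle here; the result is essentially a bookkeeping consequence of the algorithm. The one point that must be handled with care is the claim that the PSL machinery is space-neutral on a duplicate: this relies on the PSL being preallocated and time-stamped rather than cleared and rebuilt, and on the observation that a duplicate leaves its time-stamp unchanged so that no PSL entry is even written. Once that is made explicit, nothing further remains to check.
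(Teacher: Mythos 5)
Your proposal is correct and follows essentially the same route as the paper: the paper justifies this observation through its commentary on \call{Add EIM pair}{} (where the duplicate check via the time-stamped PSL of Section~\ref{s:per-set-lists} is shown to touch only preallocated storage, and a failed attempt performs no add) together with the remark immediately preceding the observation that, for evaluation, each attempt---original or duplicate---records one fixed-size link on the already-existing EIM. Your reconstruction just makes this bookkeeping explicit, including the point that a duplicate attempt does not even rewrite its PSL time-stamp, which matches the paper's account.
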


As noted in Section \ref{p:add-eim-pair},
the time and space used by predicted Earley items
and attempts to add them is charged elsewhere.

\begin{observation}
No space or time is charged to predicted Earley items,
or to attempts to add predicted Earley items.
\end{observation}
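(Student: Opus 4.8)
The plan is to localize the entire question to a single procedure and then appeal to the charging convention already fixed in Section~\ref{p:add-eim-pair}. First I would establish that every predicted EIM, and every attempt to add one, originates inside \call{Add EIM pair}{} (Algorithm~\ref{a:pair}): a predicted AHFA state is produced only by the null transition $\Vah{predicted} \gets \GOTO(\Vah{confirmed}, \epsilon)$ computed there, and the only place a predicted EIM is ever offered to an Earley set is the conditional block guarded by $\Vah{predicted} \neq \Lambda$. None of the other procedures --- \call{Initial}{}, \call{Scan pass}{}, \call{Earley reduction operation}{}, or \call{Leo reduction operation}{} --- manufactures a predicted item except by calling \call{Add EIM pair}{}. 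This reduces the claim to an accounting statement about one block of code.

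Next I would recall that, by Section~\ref{p:add-eim-pair}, all resource used by \call{Add EIM pair}{} is caller-included. Within one invocation, the work devoted to the predicted item consists of exactly three constant-time steps: evaluating the null transition, testing newness of \Veim{predicted} against the PSL of Section~\ref{s:per-set-lists}, and, if new, a single linked-list insertion. Each is \Oc, so the predicted item's entire footprint in that call is \Oc. The caller-included convention then passes this \Oc upward to the confirmed EIM attempt that triggered the call, so that nothing is deposited on the predicted item or on the attempt to add it.

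Finally I would separate and discharge the two objects the statement names. Each invocation of \call{Add EIM pair}{} performs exactly one confirmed-EIM attempt (adding \Veim{confirmed}) and, inside the block guarded by $\Vah{predicted} \neq \Lambda$, at most one predicted-EIM attempt; hence predicted attempts inject into confirmed attempts. Re-charging each predicted attempt's \Oc to the confirmed attempt of the same invocation leaves every confirmed attempt carrying only \Oc --- consistent with the preceding observations --- and places zero on the predicted side. For a predicted EIM that is actually \emph{added}, its structural space is precisely the \Oc insertion just accounted for, again folded into the confirmed attempt. Hence neither time nor space is ever charged to a predicted EIM or to an attempt to add one.

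The step needing the most care is the injectivity that legitimizes this re-charging: I must read off from Algorithm~\ref{a:pair} that a call attempts at most one predicted item and that distinct predicted attempts arise from distinct confirmed attempts, so that collapsing the charges cannot silently accumulate more than \Oc onto any one confirmed attempt. Once that bounded correspondence is confirmed, the remainder is a direct invocation of conventions already in force, and the observation follows.
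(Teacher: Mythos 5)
Your proposal is correct and follows essentially the same route as the paper: the paper's justification (Section~\ref{p:add-eim-pair}) likewise rests on the facts that all resource used by \call{Add EIM pair}{} is caller-included, that the predicted item's work (null transition, PSL newness check, list insertion) is \Oc{}, and that at most one predicted attempt occurs per confirmed attempt, so the charge folds into the confirmed EIM attempt. Your added explicitness about localization to Algorithm~\ref{a:pair} and the injectivity of the re-charging merely spells out what the paper leaves implicit in its pseudocode.
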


\section{Preliminaries to the theoretical results}
\label{s:proof-preliminaries}

\subsection{Nulling symbols}
\label{s:nulling}

Recall that \Marpa grammars,
without loss of generality,
contain neither empty rules or
properly nullable symbols.
This corresponds directly
to a grammar rewrite in the implementation of~\cite{Marpa-R2},
and its reversal during \Marpa's evaluation phase.
For the correctness and complexity proofs in this paper,
we assume an additional rewrite,
this time to eliminate nulling symbols.

Elimination of nulling symbols is also
without loss of generality, as can be seen
if we assume that a history
of the rewrite is kept,
and that the rewrite is reversed
after the parse.
Clearly, whether a grammar \Cg{} accepts
an input \Cw{}
will not depend on the nulling symbols in its rules.

In~\cite{Marpa-R2},
\Marpa{} does not directly rewrite the grammar
to eliminate nulling symbols.
But nulling symbols are ignored in
creating the AHFA states,
and must be restored during \Marpa's evaluation phase,
so that the implementation of~\cite{Marpa-R2} and
this simplification for theory purposes
track each other closely.

\subsection{Comparing Earley items}

\begin{definition}
A \textnormal{\Marpa} Earley item \dfn{corresponds}
to a traditional Earley item
$\Veimt{x} = \tuple{\Vdr{x}, \Vorig{x}}$
if and only if the \textnormal{\Marpa} Earley item is a
$\Veim{y} = \tuple{\Vah{y}, \Vorig{x}}$
such that $\Vdr{x} \in \Vah{y}$.
A traditional Earley item, \Veimt{x}, corresponds to a
Marpa Earley item, \Veim{y}, if and only if
\Veim{y} corresponds to \Veimt{x}.
\end{definition}

\begin{definition}
A set of \type{EIM}'s is \dfn{consistent} with respect to
a set of \type{EIMT}'s,
if and only if each of the \type{EIM}'s in the first set
corresponds to at least one of the
\type{EIMT}'s in the second set.
A \textnormal{\Marpa} Earley set \EVtable{\Marpa}{i}
is \dfn{consistent} if and only if
all of its \type{EIM}'s correspond to
\type{EIMT}'s in
\EVtable{\Leo}{i}.
\end{definition}

\begin{definition}
A set of \type{EIM}'s is \dfn{complete} with respect to
a set of \type{EIMT}'s,
if and only if for every \type{EIMT} in the second set,
there is a corresponding \type{EIM} in the first set.
A \textnormal{\Marpa} Earley set \EVtable{\Marpa}{i}
is \dfn{complete} if and only if,
for every traditional Earley item in \EVtable{\Leo}{i},
there is a corresponding Earley item in
\EVtable{\Marpa}{i}.
\end{definition}

\begin{definition}
A \textnormal{\Marpa} Earley set is \dfn{correct}
if and only that \textnormal{\Marpa} Earley set is complete
and consistent.
\end{definition}

\subsection{About AHFA states}

Several facts from~\cite{AH2002}
will be heavily used in the following proofs.
For convenience, they are restated here.

\begin{observation}
Every dotted rule is an element of one
or more AHFA states, that is,
\begin{equation*}
\forall \, \Vdr{x} \, \exists \, \Vah{y} \; \mid \; \Vdr{x} \in \Vah{y}.
\end{equation*}
\end{observation}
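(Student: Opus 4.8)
The plan is to prove the statement by induction on the dot position $\var{pos}$ of the dotted rule $\Vdr{x} = [\Vrule{r}, \var{pos}]$, following the way the AHFA is constructed in \cite{AH2002}. That construction starts from an initial confirmed state containing the initial dotted rule, and generates every other state from an already-constructed one either by a $\GOTO$ transition over a grammar symbol or by a null ($\epsilon$) transition to a predicted state. Since the family of AHFA states is closed under exactly these two operations, it suffices to show that each dotted rule is reached by this process, and the natural measure of progress is the dot position.

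For the inductive step, suppose $\var{pos} > 0$ and write $\Vrule{r} = [\Vsym{A} \de \Vstr{rhs}]$, letting $\Vsym{s}$ be the symbol of $\Vstr{rhs}$ at position $\var{pos} \subtract 1$. The dotted rule $[\Vrule{r}, \var{pos} \subtract 1]$ then has $\Vsym{s}$ as its postdot symbol, and by the induction hypothesis it lies in some AHFA state $\Vah{from}$. Because $\Vah{from}$ contains a dotted rule with postdot symbol $\Vsym{s}$, the transition $\GOTO(\Vah{from}, \Vsym{s})$ is defined, and by the AHFA construction its target contains $\Next{[\Vrule{r}, \var{pos} \subtract 1]} = \Vdr{x}$. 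Here I would lean on the rewrite of Section \ref{s:nulling}: with no nulling symbols, every RHS position is a genuine symbol transition rather than a null transition, so advancing the dot is always a true $\GOTO$ step. This completes the step.

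The base case $\var{pos} = 0$ splits in two. The initial dotted rule $[\Vsym{accept} \de \mydot \Vsym{start}]$ lies in the initial AHFA state by definition. Every other dot-position-zero dotted rule is a prediction $[\Vsym{A} \de \mydot \Vstr{alpha}]$, and such items are precisely those introduced by the null-transition (prediction) closure: whenever a state contains a dotted rule whose postdot symbol derives a string beginning with $\Vsym{A}$, the predicted state reached by its $\epsilon$-transition contains $[\Vsym{A} \de \mydot \Vstr{alpha}]$.

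The hard part will be this prediction subcase, because it requires that every left-hand-side symbol actually be predicted in some state, i.e. that $\Vsym{A}$ be reachable from $\Vsym{accept}$ in $\Cg$. I expect to discharge it either by invoking the grammar being reduced (no useless symbols, which may be assumed without loss of generality alongside the other rewrites in this \doc{}), or, more directly, by citing the corresponding coverage property of the split LR(0) $\epsilon$-DFA established in \cite{AH2002}, of which this observation is a restatement. Everything outside that subcase is a routine unwinding of the closure and $\GOTO$ definitions.
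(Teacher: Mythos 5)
The first thing to note is that the paper does not prove this observation at all: it sits in the subsection ``About AHFA states,'' which opens by announcing that several facts from \cite{AH2002} ``are restated here'' for convenience. The observation is presented as a property of the Aycock--Horspool construction, established (to the extent it is established anywhere) in \cite{AH2002}, not in this \doc{}. So your proposal, which attempts an actual proof from the construction, is necessarily taking a different route from the paper: you are doing work the paper deliberately outsources.

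On its merits, your sketch is sound where it does work, and it correctly isolates the one real difficulty. The inductive step --- advancing the dot by $\GOTO$ over the postdot symbol --- is exactly the content of the paper's own Observation~\ref{o:confirmed-AHFA-complete}, and your appeal to the Section~\ref{s:nulling} rewrite to rule out dot movement over nulling symbols is appropriate, since the observation appears in the part of the paper where that rewrite is in force. The genuine gap is the prediction subcase you flagged, and it is worth being precise about why it is a gap: an induction on dot position cannot reach it even in principle, because the witness item needed to generate the prediction $[\Vsym{A} \de \mydot \Vstr{alpha}]$ --- an item in some state whose postdot symbol left-derives \Vsym{A} --- is not an item of smaller dot position, and in the base case there is no induction hypothesis to invoke at all. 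Closing it requires a separate induction on derivations from \Vsym{accept} (the standard viable-prefix coverage argument), not more of the same induction. Moreover, the assumption it rests on is substantive: the paper never assumes \Cg{} is reduced, and for a grammar with an unreachable rule the observation is simply false as stated, since AHFA states are generated only by transitions out of the initial state, so no state contains the unreachable rule's dotted rules. Your instinct to assume a reduced grammar without loss of generality is therefore not optional but necessary; and your second fallback --- citing the coverage property of \cite{AH2002} --- is not circular in context, because it is precisely what the paper itself does, only without your surrounding inductive scaffolding.
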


\begin{observation}
\label{o:confirmed-AHFA-consistent}
AHFA confirmation is consistent with respect to the dotted rules.
That is,
for all \Vah{from}, \Vsym{t}, \Vah{to}, \Vdr{to} such that
\begin{equation*}
\begin{split}
& \GOTO(\Vah{from}, \Vsym{t}) = \Vah{to} \\
\qquad \qquad \land \quad & \Vdr{to} \in \Vah{to}, \\
\intertext{there exists \Vdr{from} such that}
& \Vdr{from} \in \Vah{from} \\
\qquad \qquad \land \quad & \Vsym{t} = \Postdot{\Vdr{from}} \\
\qquad \qquad \land \quad & \Next{\Vdr{from}} = \Vdr{to}. \\
\end{split}
\end{equation*}
\end{observation}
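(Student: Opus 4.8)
The plan is to prove this observation by unpacking the Aycock and Horspool construction of the $\GOTO$ function on non-null transitions, relying on the \emph{split} structure of the LR(0) $\epsilon$-DFA. The essential point is that a confirmed (kernel) AHFA state contains only those dotted rules obtained by advancing a dot over the transition symbol, and none of the predicted dotted rules that an ordinary closure step would introduce. Since the statement is drawn directly from~\cite{AH2002}, the argument is largely a careful appeal to that construction.

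First I would observe that $\Vah{to}$ is a confirmed AHFA state. The hypothesis gives $\GOTO(\Vah{from}, \Vsym{t}) = \Vah{to}$ with $\Vsym{t} \in \var{vocab}$, so this is a non-null transition and $\Vah{to}$ is not the result of a null transition; hence, by the definitions of Section~\ref{s:AHFA}, $\Vah{to}$ is confirmed, that is, a kernel state in the terminology of~\cite{AH2002}. This is the step that carries the real weight, because in the split automaton the kernel items and the predicted (closure) items reside in different states: the predicted items are reached only by the separate $\epsilon$-transition and never appear in a confirmed state.

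Next I would recall the definition of the symbol transition from~\cite{AH2002}: the kernel of a non-null $\GOTO$ is formed by collecting every dotted rule of the source state whose postdot symbol is the transition symbol and advancing its dot, so that
\begin{equation*}
\Vah{to} = \bigset{ \Next{\Vdr{from}} \mid \Vdr{from} \in \Vah{from} \land \Postdot{\Vdr{from}} = \Vsym{t} }.
\end{equation*}
Because $\Vah{to}$ is confirmed, no closure items are appended and this set is exactly $\Vah{to}$. Therefore, for any $\Vdr{to} \in \Vah{to}$ there is a witnessing $\Vdr{from} \in \Vah{from}$ with $\Postdot{\Vdr{from}} = \Vsym{t}$ and $\Next{\Vdr{from}} = \Vdr{to}$, which is precisely the desired conclusion.

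The main obstacle is not any calculation but the precision required in invoking the split structure. If one overlooked that confirmed states exclude predicted items, one might worry that $\Vdr{to}$ could be a closure item of the form $[\Vsym{L} \de \mydot \Vstr{alpha}]$, which has no predecessor reachable by advancing a dot over $\Vsym{t}$. Making explicit that such items live only in predicted states---reached by the $\epsilon$-transition rather than by the $\Vsym{t}$-transition---is exactly what closes this gap and completes the argument.
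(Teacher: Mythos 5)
The paper offers no proof of this statement at all: it appears in Section \ref{s:proof-preliminaries} as one of several facts ``from \cite{AH2002}'' that are ``restated here'' for convenience, so the paper delegates its justification entirely to the citation. Your proposal therefore is not competing with an argument in the paper; it supplies the argument the paper omits, and it does so correctly. The two points you isolate are exactly the right ones: a state reached by a non-null transition is confirmed, hence contains no closure (predicted) items, and the kernel produced by a symbol transition is exactly the set of dot-advanced images of the source state's items whose postdot symbol is the transition symbol, which hands you the required witness \Vdr{from}. One caveat you should make explicit: the identity
\begin{equation*}
\Vah{to} = \bigset{ \Next{\Vdr{from}} \mid \Vdr{from} \in \Vah{from} \land \Postdot{\Vdr{from}} = \Vsym{t} }
\end{equation*}
holds verbatim only in the absence of nulling symbols. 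In the full Aycock--Horspool $\epsilon$-DFA, the construction also carries dots past nulling symbols, so a kernel state may contain items that are not of the form $\Next{\Vdr{from}}$ for any \emph{single-step} advance, and the observation as literally stated would fail for them. This is harmless in context, because the paper's theoretical development (Section \ref{s:nulling}) assumes an additional rewrite that eliminates nulling symbols before these observations are ever used; but since your proof leans on the clean set identity above, it should state that it is working under that standing assumption rather than for the general $\epsilon$-DFA.
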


\begin{observation}
\label{o:confirmed-AHFA-complete}
AHFA confirmation is complete with respect to the dotted rules.
That is,
for all \Vah{from}, \Vsym{t}, \Vdr{from}, \Vdr{to} if
\begin{equation*}
\begin{split}
& \Vdr{from} \in \Vah{from} \\
\qquad \land \quad & \Postdot{\Vdr{from}} = \Vsym{t}, \\
\qquad \land \quad & \Next{\Vdr{from}} = \Vdr{to} \\
\intertext{then there exists \Vah{to} such that }
& \GOTO(\Vah{from}, \Vsym{t}) = \Vah{to}  \\
\qquad \land \quad & \Vdr{to} \in \Vah{to}. \\
\end{split}
\end{equation*}
\end{observation}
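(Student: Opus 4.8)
The plan is to obtain this directly from the construction of the AHFA as a split LR(0) $\epsilon$-DFA in \cite{AH2002}, since completeness of confirmation is in essence the defining property of the $\GOTO$ function on non-null symbols. The statement is the forward counterpart of Observation~\ref{o:confirmed-AHFA-consistent}: where consistency guarantees that every advanced dotted rule found in a target state arose from some source item, completeness guarantees that every advanceable item in the source has its advanced form present in the target. I would therefore argue straight from how $\GOTO$ and the target state's kernel are defined.

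First I would note that the premises $\Vdr{from} \in \Vah{from}$ and $\Postdot{\Vdr{from}} = \Vsym{t}$ exhibit an item of $\Vah{from}$ whose dot can be advanced over $\Vsym{t}$. This is exactly the condition under which the LR(0) construction makes the partial function $\GOTO(\Vah{from}, \Vsym{t})$ defined, so I can first name a state $\Vah{to} = \GOTO(\Vah{from}, \Vsym{t})$ and know that it exists.

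Next I would appeal to the way that target state is assembled. By the construction, the kernel of $\GOTO(\Vah{from}, \Vsym{t})$ consists precisely of the dotted rules obtained by advancing the dot over $\Vsym{t}$ in each item of $\Vah{from}$ whose postdot symbol is $\Vsym{t}$. Since $\Vdr{from}$ is such an item and $\Next{\Vdr{from}} = \Vdr{to}$, the advanced rule $\Vdr{to}$ lies in that kernel, whence $\Vdr{to} \in \Vah{to}$, which is the conclusion asserted.

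The main obstacle is bookkeeping around the split structure rather than genuine mathematical content. I must take care that the transition over the real symbol $\Vsym{t}$ deposits $\Vdr{to}$ into a confirmed (kernel) state, and that this is not conflated with the $\epsilon$-transitions that manufacture the predicted states; because $\Next{\Vdr{from}}$ advances the dot past position zero, $\Vdr{to}$ is a kernel item and so belongs to the confirmed target rather than to any prediction state. The argument is uniform in whether $\Vah{from}$ is itself confirmed or predicted, since in either case $\GOTO$ advances the dots of the items actually present in $\Vah{from}$. Having eliminated nulling symbols for the theory (Section~\ref{s:nulling}) removes the usual $\epsilon$-closure subtleties, so no symbol can be silently skipped when the dot is advanced, and the correspondence between advancing $\Vdr{from}$ and membership of $\Vdr{to}$ in $\Vah{to}$ is exact.
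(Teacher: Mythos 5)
Your proof is correct, and it matches the paper's approach in the only sense available: the paper does not prove this statement at all, but lists it among several facts from \cite{AH2002} that are ``restated here'' for convenience. Your unfolding of the construction---$\GOTO(\Vah{from},\Vsym{t})$ is defined exactly when some item of \Vah{from} has postdot symbol \Vsym{t}, its confirmed target state consists precisely of the dot-advanced items, and the predictions are split off into separate $\epsilon$-successor states rather than closed into the target---is exactly the content that the citation to the split LR(0) $\epsilon$-DFA construction stands for.
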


\begin{observation}
\label{o:predicted-AHFA-consistent}
AHFA prediction is consistent with respect to the dotted rules.
That is,
for all \Vah{from}, \Vah{to}, \Vdr{to} such that
\begin{equation*}
 \GOTO(\Vah{from}, \epsilon) = \Vah{to}
\, \land  \,
 \Vdr{to} \in \Vah{to},
\end{equation*}
there exists \Vdr{from} such that
\begin{equation*}
 \Vdr{from} \in \Vah{from}
\, \land  \,
 \Vdr{to} \in \Predict{\Vdr{from}}.
\end{equation*}
\end{observation}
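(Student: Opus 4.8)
The plan is to reduce the statement to the $\epsilon$-closure construction by which \cite{AH2002} builds a predicted AHFA state, and then to collapse that iterated construction into the single $\destar$ derivation used in the definition of $\Predict{}$ (Section~\ref{d:prediction}). First I would record that, because $\Vah{to} = \GOTO(\Vah{from}, \epsilon)$ is reached by a null transition, it is a predicted AHFA state, so every dotted rule it contains has dot position zero; write $\Vdr{to} = [\Vsym{L} \de \mydot \Vstr{rh}]$, so that $\LHS{\Vdr{to}} = \Vsym{L}$ and $[\Vsym{L} \de \Vstr{rh}] \in \Crules$. By the \cite{AH2002} construction, $\Vah{to}$ is exactly the LR(0) prediction-closure seeded by the items of $\Vah{from}$: its rank-one members are the predicted items whose LHS equals $\Postdot{\Vdr{from}}$ for some $\Vdr{from} \in \Vah{from}$, and every further member is predicted in one closure step from a member already present. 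Because nulling symbols have been eliminated (Section~\ref{s:nulling}), each closure step adds an item whose LHS is exactly the postdot symbol of the item being closed, with no nullable-prefix complications.

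The core is an induction on the closure rank of $\Vdr{to}$, establishing the slightly stronger claim that there is a seed item $\Vdr{from} \in \Vah{from}$ with $\Postdot{\Vdr{from}} \destar \LHS{\Vdr{to}} \cat \Vstr{z}$ for some $\Vstr{z}$; the conclusion $\Vdr{to} \in \Predict{\Vdr{from}}$ then follows at once from the prediction rule, since $[\LHS{\Vdr{to}} \de \Vstr{rh}]$ is a rule. In the base case $\Vdr{to}$ is predicted directly from some $\Vdr{from} \in \Vah{from}$, whence $\LHS{\Vdr{to}} = \Postdot{\Vdr{from}}$ and $\Postdot{\Vdr{from}} \destar \LHS{\Vdr{to}}$ holds reflexively with $\Vstr{z} = \epsilon$. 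In the inductive step $\Vdr{to}$ is predicted from a lower-rank member $\Vdr{mid} = [\LHS{\Vdr{mid}} \de \mydot \LHS{\Vdr{to}} \cat \Vstr{tail}]$ of $\Vah{to}$, whose underlying rule is $[\LHS{\Vdr{mid}} \de \LHS{\Vdr{to}} \cat \Vstr{tail}]$ and whose postdot symbol is $\LHS{\Vdr{to}}$; the induction hypothesis supplies $\Vdr{from} \in \Vah{from}$ with $\Postdot{\Vdr{from}} \destar \LHS{\Vdr{mid}} \cat \Vstr{y}$, and composing
\[
\Postdot{\Vdr{from}} \destar \LHS{\Vdr{mid}} \cat \Vstr{y}
\derives \LHS{\Vdr{to}} \cat \Vstr{tail} \cat \Vstr{y}
\]
yields the required derivation from $\Postdot{\Vdr{from}}$, with $\Vstr{z} = \Vstr{tail} \cat \Vstr{y}$.

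The step I expect to be the main obstacle is the inductive composition itself: I must verify that the single rule application expanding the leftmost symbol $\LHS{\Vdr{mid}}$ of $\LHS{\Vdr{mid}} \cat \Vstr{y}$ slots cleanly between the inherited derivation and the target, and that, because no symbol is nullable, the expanded string genuinely carries $\LHS{\Vdr{to}}$ as its leftmost symbol, so that the combined relation is a bona fide $\destar$ derivation of the shape $\Postdot{\Vdr{from}} \destar \LHS{\Vdr{to}} \cat \Vstr{z}$ demanded by $\Predict{}$. Everything else---the reflexive base case, the remark that $\Vah{to}$ holds only dot-zero items, and the closing appeal to the prediction rule---is routine bookkeeping, parallel to the confirmation analogues in Observations~\ref{o:confirmed-AHFA-consistent} and~\ref{o:confirmed-AHFA-complete}.
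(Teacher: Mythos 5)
Your proof is correct, but note that the paper itself does not prove this statement: Observation~\ref{o:predicted-AHFA-consistent} is one of four facts that the paper explicitly restates from \cite{AH2002} ``for convenience,'' deferring their justification entirely to that source. Your argument is a sound reconstruction of what that citation covers. You identify $\Vah{to} = \GOTO(\Vah{from},\epsilon)$ with the prediction closure seeded by the postdot symbols of the items of $\Vah{from}$, and run an induction on (minimal) closure rank to produce, for any $\Vdr{to} \in \Vah{to}$, a seed item $\Vdr{from} \in \Vah{from}$ with $\Postdot{\Vdr{from}} \destar \LHS{\Vdr{to}} \cat \Vstr{z}$, which is exactly the membership condition for $\Predict{\Vdr{from}}$ as implicitly defined by the prediction inference rule of Section~\ref{d:prediction}. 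One remark: the inductive composition you flag as the expected main obstacle is in fact immediate --- rewriting the leftmost symbol $\LHS{\Vdr{mid}}$ of $\LHS{\Vdr{mid}} \cat \Vstr{y}$ by the rule underlying $\Vdr{mid}$ yields a string that literally begins with $\LHS{\Vdr{to}}$, so no appeal to non-nullability is needed at that point; the elimination of nulling symbols matters only earlier, to ensure the \cite{AH2002} closure adds no items with the dot advanced past a nullable prefix, so that every member of $\Vah{to}$ really is a dot-zero item reachable by pure prediction steps. What your approach buys is self-containment: the observation is derived from the AHFA construction rather than imported. What the paper's approach buys is brevity and insulation from the internals of the \cite{AH2002} construction, which the paper never formalizes --- at the cost of asking the reader to trust the correspondence between its notation and that of the cited work.
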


\begin{observation}
\label{o:predicted-AHFA-complete}
AHFA prediction is complete with respect to the dotted rules.
That is,
for all \Vah{from}, \Vdr{from}, \Vdr{to}, if
\begin{equation*}
 \Vdr{from} \in \Vah{from}
\, \land \,
\Vdr{to} \in \Predict{\Vdr{from}},
\end{equation*}
then there exists \Vah{to} such that
\begin{equation*}
\Vdr{to} \in \Vah{to}
\, \land  \,
\GOTO(\Vah{from}, \epsilon) = \Vah{to}
\end{equation*}
\end{observation}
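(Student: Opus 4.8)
The plan is to read this property off the split LR(0) $\epsilon$-DFA construction of \cite{AH2002}, exactly as one would prove its dual, Observation~\ref{o:predicted-AHFA-consistent}. In that construction a null transition $\GOTO(\Vah{from}, \epsilon)$ carries a state to its associated \emph{predicted} state, and this predicted state is by definition the prediction-closure of the items of \Vah{from}: it collects every dotted rule $[\Vsym{L} \de \mydot \Vstr{rh}]$ with $[\Vsym{L} \de \Vstr{rh}] \in \Crules$ such that some item already in \Vah{from} has a postdot symbol from which $\Vsym{L}$ can be predicted.

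First I would fix $\Vdr{from} \in \Vah{from}$ and $\Vdr{to} \in \Predict{\Vdr{from}}$ and unpack the hypothesis: $\Vdr{to} = [\Vsym{L} \de \mydot \Vstr{rh}]$ for a rule $[\Vsym{L} \de \Vstr{rh}] \in \Crules$ with $\Postdot{\Vdr{from}} \destar \Vsym{L} \cat \Vstr{z}$ for some \Vstr{z}. In particular $\Postdot{\Vdr{from}} \ne \Lambda$, so \Vah{from} carries at least one item whose postdot symbol admits a prediction; hence its null transition is defined and $\Vah{to} = \GOTO(\Vah{from}, \epsilon)$ exists. It then remains only to place $\Vdr{to}$ in \Vah{to}.

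Then I would establish the one substantive point: that the $\destar$-based Predict relation of the prediction operation coincides with the iterated prediction-closure used to build the predicted state. One direction --- that any dotted rule added by a finite chain of one-step prediction expansions satisfies $\Postdot{\Vdr{from}} \destar \Vsym{L} \cat \Vstr{z}$ --- follows by concatenating the steps of that chain. The converse, that any $\Vsym{L}$ with $\Postdot{\Vdr{from}} \destar \Vsym{L} \cat \Vstr{z}$ is reached by the closure, I would prove by induction on the length of a leftmost derivation witnessing $\destar$, peeling off the rewrite of the leading symbol at each step. This identifies membership in \Vah{to} with the condition ``$\Vdr{to} \in \Predict{\Vdr{from}}$ for some $\Vdr{from} \in \Vah{from}$'', which is exactly our hypothesis, yielding $\Vdr{to} \in \Vah{to}$.

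I expect the closure/derivation correspondence to be the only real work; the rest is bookkeeping against the construction of \cite{AH2002}. The point to watch is the scope of the universal quantifier: the statement ranges over all \Vah{from}, predicted states included, whereas a predicted state is already closed under prediction. I would therefore confirm against the construction that $\GOTO(\Vah{from}, \epsilon)$ is furnished (for instance as a fixed point) whenever $\Predict{\Vdr{from}}$ is nonempty for some $\Vdr{from} \in \Vah{from}$, so that the required \Vah{to} exists in that case as well.
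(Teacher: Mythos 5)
The paper gives no proof of this observation for your argument to be measured against: it is one of several facts restated from \cite{AH2002}, introduced with ``Several facts from \cite{AH2002} will be heavily used in the following proofs. For convenience, they are restated here.'' Your proposal therefore does work that the paper delegates to its reference, and it does that work essentially correctly: the null transition of the split LR(0) $\epsilon$-DFA targets the prediction-closure of the source state, and the one substantive step is exactly the one you isolate, identifying that iterated one-step closure with the $\destar$-based condition defining $\Predict{\Vdr{from}}$. Two cautions. First, your induction on a leftmost derivation witnessing $\Postdot{\Vdr{from}} \destar \Vsym{L} \cat \Vstr{z}$ relies on the paper's standing assumption (Sections \ref{s:preliminaries} and \ref{s:nulling}) that no symbol is nullable: because no leading symbol can vanish, the first symbol of the first expansion must itself derive a string beginning with \Vsym{L}, which is where the induction hypothesis applies; with nullables present, the closure would also have to advance dots past nulling symbols, which is what the unsimplified construction of \cite{AH2002} actually does. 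Second, the issue you flag at the end is the genuinely delicate point in the statement's quantification over all \Vah{from}: a predicted state is already prediction-closed, so the observation holds for it only if the construction furnishes $\GOTO(\Vah{from}, \epsilon)$ there (for instance, as the state itself); it is worth noting that the paper's own uses of this observation (Sections \ref{s:scan-complete} and \ref{s:reduction-complete}) apply it only with \Vah{from} a confirmed state, for which the transitivity of $\destar$ guarantees that the single null transition captures all of $\Predict{\Vdr{from}}$.
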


\section{\Marpa is correct}
\label{s:correct}

\subsection{\textnormal{\Marpa}'s Earley sets grow at worst linearly}

\begin{theorem}\label{t:es-count}
For a context-free grammar,
and a parse location \Vloc{i},
\begin{equation*}
\textup{
    $\bigsize{\EVtable{\Marpa}{i}} = \order{\var{i}}$.
}
\end{equation*}
\end{theorem}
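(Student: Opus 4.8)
The plan is to prove this by a direct counting argument on the two components of a Marpa Earley item. Recall that every EIM is a duple $\Veim{y} = [\Vah{y}, \Vorig{y}]$ consisting of an AHFA state and an origin location, and that an Earley set is a set, so (enforced by the newness check in \call{Add EIM pair}{}) it contains no duplicate EIM's. Hence $\bigsize{\EVtable{\Marpa}{i}}$ is bounded by the number of \emph{distinct} pairs $[\Vah{y}, \Vorig{y}]$ that can appear in it, and it suffices to bound the number of admissible values of each coordinate separately.

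First I would bound the number of possible AHFA states. The AHFA is constructed from the grammar $\Cg$ alone and is wholly independent of the input $\Cw$, so the number of AHFA states, $\size{\Cfa}$, is a constant depending only on $\Cg$. Next I would bound the number of possible origins, arguing that every EIM in $\EVtable{\Marpa}{i}$ has $0 \le \Vorig{y} \le \var{i}$, giving at most $\var{i} + 1$ admissible origin values. This is the one point requiring an argument, and it follows by a routine induction over the recognizer's operations: initialization places origin $0$ into Earley set $0$; a scan building set $\var{i}$ either copies an origin from set $\var{i} \subtract 1$ (hence $\le \var{i} \subtract 1$), or, through the null-transition branch of \call{Add EIM pair}{}, introduces a prediction whose origin is the current location $\var{i}$; and both Earley and Leo reduction propagate the origin of a predecessor drawn from an earlier Earley set, which by the inductive hypothesis is $\le \var{i}$. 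In every case the origin never exceeds the location of the Earley set into which the item is placed, which is just the statement that recognition of a rule cannot begin after the location at which the item is tracked.

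Combining the two bounds, the number of distinct pairs $[\Vah{y}, \Vorig{y}]$ available to $\EVtable{\Marpa}{i}$ is at most $\size{\Cfa} \times (\var{i} + 1)$. Since $\size{\Cfa}$ is a constant, this is $\order{\var{i}}$, establishing the claim. I do not anticipate any serious obstacle here: the AHFA-state count is constant by construction, the no-duplicates property is built into the recognizer, and the origin bound is immediate from the definitions of the Earley operations, so the whole result reduces to the elementary product bound $\size{\Cfa} \times (\var{i}+1)$.
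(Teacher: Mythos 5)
Your proposal is correct and follows essentially the same argument as the paper: bound the Earley set's size by the number of distinct pairs of AHFA state (a grammar-dependent constant $\size{\Cfa}$) and origin (at most $\var{i}+1$ values), using the no-duplicates property, to get $\order{\var{i}}$. The only difference is that you spell out the origin bound $0 \le \Vorig{y} \le \Vloc{i}$ by induction over the recognizer's operations, where the paper simply asserts it; this is a harmless (and slightly more careful) elaboration.
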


\begin{proof}
\type{EIM}'s have the form $\tuple{\Vah{x}, \Vorig{x}}$.
\Vorig{x} is the origin of the \type{EIM},
which in \Marpa cannot be after the current
Earley set  at \Vloc{i},
so that
\begin{equation*}
0 \le \Vorig{x} \le \Vloc{i}.
\end{equation*}
The possibilities for \Vah{x} are finite,
since the number of AHFA states is a constant,
$\size{\Cfa}$,
which depends on \Cg{}.
Since duplicate \type{EIM}'s are never added to an Earley set,
the maximum size of Earley set \Vloc{i} is therefore
\begin{equation*}
\Vloc{i} \mult \size{\Cfa} = \order{\Vloc{i}}.\qedhere
\end{equation*}
\end{proof}

\subsection{\textnormal{\Marpa}'s Earley sets are correct}

\begin{theorem}\label{t:table-correct}
\textnormal{\Marpa}'s Earley sets are correct.
\end{theorem}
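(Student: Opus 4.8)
The plan is to prove the two halves of correctness, consistency and completeness, together, by induction on the Earley set location \Vloc{i}. Both \Marpa{} and \Leo{} build their tables from location $0$ upward, and within a location each set is the least fixed point of its operations, so the natural scheme is an outer induction on \var{i} with the hypothesis that \EVtable{\Marpa}{j} is both complete and consistent with respect to \EVtable{\Leo}{j} for every $\var{j} < \var{i}$, supplemented by an inner induction on the order in which items are added as the two sets at location \Vloc{i} are saturated.

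For the base case and for the scan and prediction operations, the four AHFA facts, Observations~\ref{o:confirmed-AHFA-consistent}--\ref{o:predicted-AHFA-complete}, do almost all of the work. They state that a \GOTO{} step over an AHFA state mirrors the underlying dotted rule transition exactly and in both directions: confirmation is consistent and complete with respect to the postdot symbol and the dot advance, and the null transition is consistent and complete with respect to prediction. Initialization, scanning and prediction therefore reduce to checking that the dotted rule \Leo{} would advance or predict is precisely the one carried inside the target AHFA state \Vah{to}; consistency then follows from the two ``consistent'' observations and completeness from the two ``complete'' ones, with the outer induction hypothesis supplying, for each \Marpa{} operand, a corresponding \Leo{} operand in the appropriate earlier set, and conversely.

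The Earley reduction case is handled the same way once the extra premise on Earley reduction is seen to line up on the two sides: \Marpa{} declines an Earley reduction exactly when a LIM is present, and \Leo{} declines one exactly when the corresponding LIMT is present. The genuine obstacle is the Leo reduction case together with the correspondence between \Marpa's LIMs and \Leo's LIMTs, and this is precisely where Theorem~\ref{t:leo-singleton} is indispensable. When \Marpa{} produces an EIM by a Leo reduction its AHFA state is a singleton, so the correspondence between that EIM and the single top EIMT produced by \Leo's Leo reduction is forced to be one-to-one, with no spurious extra dotted rule to account for. I would first prove, as a sub-lemma by the same nested induction, that a LIM $[\Vah{top}, \Vsym{transition}, \Vorig{top}]$ lies in \EVtable{\Marpa}{i} exactly when \Leo{} has a LIMT $[\Vdr{top}, \Vsym{transition}, \Vorig{top}]$ with $\Vdr{top} \in \Vah{top}$, matching the two LIM construction rules, the predecessor-free case and the inherited-predecessor case, against their AHFA analogues; the singleton property is what keeps \Vdr{top} unambiguous throughout.

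The point I expect to cost the most care is that \Marpa{} memoizes only right recursive rules whereas \Leo{} memoizes every Leo unique penult, so the two recognizers need not make the same choice between an Earley reduction and a Leo reduction. To push the simulation through I would show, for completeness, that wherever \Leo{} collapses a chain using a LIMT that \Marpa{} lacks, \Marpa{} instead performs the corresponding run of ordinary Earley reductions and that this run still deposits an EIM corresponding to the top EIMT that \Leo{} adds; and, for consistency, that every EIM \Marpa{} produces along such a chain corresponds to an EIMT that \Leo{} already possesses. This is the formal content of the earlier remark that omitting a memoization does not affect correctness, and converting that slogan into an invariant that survives the nested induction is the delicate heart of the argument. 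Once both halves are established at location \Vloc{i}, the set \EVtable{\Marpa}{i} is complete and consistent, hence correct, and the outer induction closes.
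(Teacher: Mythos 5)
Your overall scheme matches the paper's proof in its broad strokes: the paper also proceeds by a nested (in fact triple) induction --- an outer induction on Earley sets, an inner induction over \Marpa's operations to show consistency, and a level-2 induction on ``generations'' of reduction results to show completeness --- and it uses the four AHFA observations for the scanning and prediction cases and Theorem~\ref{t:leo-singleton} for the Leo reduction case, exactly as you propose. The divergence is in how the memoization mismatch (the point you yourself flag as the delicate heart) is handled, and there your plan has a genuine gap.

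The consistency half of your bridge is false. When \Leo{} has a LIMT for a chain that \Marpa{} declines to memoize (a Leo-unique penult whose rule is not right recursive), \Leo's additional premise on Earley reduction suppresses the ordinary reductions entirely: the whole point of the memoization is that the intermediate EIMT's of the Leo sequence are never added to \EVtable{\Leo}{i}; only the top EIMT is. Meanwhile \Marpa{}, lacking the LIM, walks the chain with ordinary Earley reductions and deposits an EIM for every intermediate step. Those intermediate EIM's correspond to no EIMT in \EVtable{\Leo}{i}, so the invariant you propose --- that every EIM \Marpa{} produces along such a chain corresponds to an EIMT that \Leo{} already possesses --- cannot be established; the containment runs the other way. (The extra EIM's do correspond to items \Earley{} would build, but consistency is defined against \Leo{}, not \Earley{}.) The same problem kills the ``exactly when'' direction of your LIM/LIMT sub-lemma: only the direction LIM $\Rightarrow$ LIMT holds. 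Your completeness half, by contrast, is fine: the chain of ordinary reductions does eventually deposit an EIM corresponding to \Leo's top EIMT, and the generation induction covers it. The paper escapes the consistency problem not by a simulation argument but essentially by fiat: it leans on the earlier observation that omission of a memoization does not affect correctness, and throughout the proof treats the reference algorithm as making the same memoization choices as \Marpa{}, so that a LIM is present exactly when the corresponding LIMT is and every case of the induction aligns one-for-one. To repair your proof you would have to do the same --- compare \Marpa{} against \Leo{} restricted to \Marpa's memoization policy (or else weaken consistency to be with respect to \Earley's sets) --- rather than try to push consistency through against \Leo's original tables.
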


The proof
is by triple induction,
that is, induction with a depth down to 3 levels.
We number the levels of induction
0, 1 and 2,
starting with the outermost.
The level 0 induction is usually called the outer induction.
The level 1 induction is usually called the inner induction.
Level 2 induction is referred to by number.

The outer induction is on the Earley sets.
The induction variable is \Vloc{i},
and the outer induction hypothesis is that every Earley set
\EVtable{\Marpa}{j},
where $\Vloc{j} < \Vloc{i}$,
is complete and consistent,
and therefore correct.
For the outer induction step we need to show
that every Earley set
\EVtable{\Marpa}{k},
$\Vloc{k} \le \Vloc{i}$,
is complete and consistent.

Since we have correctness for the Earley sets at locations
less than \Vloc{i} by the outer induction hypothesis,
all we need to show for the step of the outer induction is that the Earley set
\EVtable{\Marpa}{i} is correct.
We leave it as an exercise to show, as the
basis of the outer induction, that
\EEtable{\Marpa}{0} is complete and consistent.

\begin{MYsloppyC}{3em}{2000}{5000}
To show the outer induction step, we show first
consistency, then completeness.
We show consistency by
an inner induction on the \Marpa operations.
The inner induction hypothesis is that
\EVtable{\Marpa}{i},
as so far built,
is consistent with respect to
\EVtable{\Leo}{i}.
\end{MYsloppyC}

As the basis of the inner induction,
an empty \Marpa Earley set is
consistent, trivially.
We show the step of the inner induction by cases:
\begin{itemize}
\item \Marpa{} scanning operations;
\item \Marpa{} reductions when there are no Leo reductions; and
\item \Marpa{}'s Leo reductions
\end{itemize}

\subsubsection{Marpa scanning is consistent}
\label{s:scan-consistent}

For \Marpa's scanning operation, we know
that the mainstem \type{EIM} is correct
by the outer induction hypothesis,
and that the token is correct
by the definitions in the preliminaries.
We know, from Section \ref{p:scan-op},
that at most two \type{EIM}'s will be added.
We now examine them in detail.

Let
\begin{equation*}
    \Vah{confirmed} = \GOTO(\Vah{mainstem}, \Vsym{token})
\end{equation*}
If $\Vah{confirmed} = \Lambda$,
the pseudocode of Section \ref{p:scan-op} shows
that we do nothing.
If we do nothing,
since \EVtable{\Marpa}{i} is consistent by the inner
induction hypothesis,
it remains consistent, trivially.

Otherwise, let
$\Veim{confirmed} = \tuple{\Vah{confirmed}, \Vloc{i}}$.
We see that \Veim{confirmed} is consistent with respect
to \EVtable{\Leo}{i},
by the definition of Earley scanning (Section~\ref{d:scan})
and Observation~\ref{o:confirmed-AHFA-consistent}.
Consistency is invariant under union,
and since \EVtable{\Marpa}{i} is consistent by the inner induction,
\EVtable{\Marpa}{i} remains consistent after
\Veim{confirmed} is added.

For predictions,
if $\Vah{confirmed} \ne \Lambda$, let
\begin{equation*}
\Vah{predicted} = \GOTO(\Vah{confirmed}, \epsilon)
\end{equation*}
If $\Vah{predicted} = \Lambda$,
the pseudocode of Section \ref{p:add-eim-pair} shows
that we do nothing.
If we do nothing,
since \EVtable{\Marpa}{i} is consistent by the inner
induction hypothesis,
it remains consistent, trivially.
Otherwise, let
\begin{equation*}
\Veim{predicted} = \tuple{\Vah{predicted}, \Vloc{i}}.
\end{equation*}
\begin{MYsloppyC}{10em}{8000}{5000}
We see that \Veim{predicted} is consistent with respect
to \EVtable{\Leo}{i},
by the definition of Earley prediction (Section~\ref{d:prediction}) and
\hbox{Observation~\ref{o:predicted-AHFA-consistent}}.
Consistency is invariant under union and,
since \EVtable{\Marpa}{i} is consistent by the inner induction,
\EVtable{\Marpa}{i} remains consistent after
\Veim{predicted} is added.
\end{MYsloppyC}

\subsubsection{Earley reduction is consistent}
\label{s:reduction-consistent}

Next,
we show that \Marpa{}'s reduction operation
is consistent,
in the case where there is no Leo reduction.
The reduction will be the result of the two \type{EIM}'s
of a confluence, call
them \Veim{mainstem} and \Veim{tributary}.
\Veim{mainstem} will be correct by the outer induction
hypothesis
and \Veim{tributary}
will be consistent by the inner induction hypothesis.
From \Veim{tributary}, we will find zero or more transition
symbols, \Vsym{lhs}.
From this point,  the argument is very similar to
that for the case of the scanning operation.

Let
\begin{equation*}
\Vah{confirmed} = \GOTO(\Vah{mainstem}, \Vsym{lhs})
\end{equation*}
If $\Vah{confirmed} = \Lambda$, we do nothing,
and \EVtable{\Marpa}{i} remains consistent, trivially.
Otherwise, let
\begin{equation*}
\Veim{confirmed} = \tuple{\Vah{confirmed}, \Vloc{i}}.
\end{equation*}
We see that \Veim{confirmed} is consistent with respect
to \EVtable{\Leo}{i}
by the definition of Earley reduction (Section~\ref{s:reduction}),
and Observation~\ref{o:confirmed-AHFA-consistent}.
By the invariance of consistency under union,
\EVtable{\Marpa}{i} remains consistent after
\Veim{confirmed} is added.

For predictions, the argument exactly repeats that of
Section \ref{s:scan-consistent}.
\EVtable{\Marpa}{i} remains consistent,
whether or not a \Veim{predicted} is added.

\subsubsection{Leo reduction is consistent}
\label{s:leo-consistent}

\begin{MYsloppy}
We now show consistency for \Marpa{}'s
reduction operation,
in the case where there is a Leo reduction.
If there is a Leo reduction, it is signaled by the
presence of \Vlim{mainstem},
\end{MYsloppy}
\begin{equation*}
\Vlim{mainstem} = \tuple{ \Vah{top}, \Vsym{lhs}, \Vorig{top} }
\end{equation*}
in the Earley set where we would look
for the \Veim{mainstem}.
We treat
the logic to create \Vlim{mainstem} as a matter of memoization
of the previous Earley sets,
and its correctness follows from
the outer induction hypothesis.

As the result of a Leo reduction,
\Leo{} will add
$\tuple{\Vdr{top}, \Vorig{top}}$
to \EVtable{\Leo}{j}.
Because the \Marpa{} \type{LIM} is correct,
using Observations \ref{o:confirmed-AHFA-consistent}
and \ref{o:confirmed-AHFA-complete}
and Theorem \ref{t:leo-singleton},
we see that \Vah{top} is the singleton set
$\set{ \Vdr{top} }$.
From Section \ref{p:leo-op}, we see
that, as the result of the Leo reduction,
\Marpa{} will add
\begin{equation*}
\Veim{leo} = \tuple{\Vah{top}, \Vorig{top}}
\end{equation*}
to \EVtable{\Marpa}{j}.
The consistency of \Veim{leo} follows from the definition
of \type{EIM} consistency.
The consistency of
\EVtable{\Marpa}{i},
once \Veim{leo} is added,
follows by the invariance
of consistency under union.

\subsubsection{Marpa's Earley sets are consistent}
\label{s:sets-consistent}

Sections
\ref{s:scan-consistent},
\ref{s:reduction-consistent}
and
\ref{s:leo-consistent}
show the cases for the step of the inner induction,
which shows the induction.
It was the purpose of the inner induction to show
that consistency of \EVtable{\Marpa}{i} is invariant
under \Marpa's operations.

\subsubsection{The inner induction for completeness}

It remains to show that,
when \Marpa's operations are run as described
in the pseudocode of Section \ref{s:pseudocode},
that
\EVtable{\Marpa}{i} is complete.
To do this,
we show that
at least one \type{EIM} in \EVtable{\Marpa}{i}
corresponds to every \type{EIMT} in
\EVtable{\Leo}{i}.
We will proceed by cases,
where the cases are \Leo{} operations.
For every operation that \Leo{} would perform,
we show that
\Marpa{} performs an operation that
produces a corresponding Earley item.
Our cases for the operations of \Leo{} are
Earley scanning operations;
Earley reductions;
Leo reductions;
and Earley predictions.

\subsubsection{Scanning is complete}
\label{s:scan-complete}

For scanning, the \Marpa pseudocode
(Algorithm \ref{alg:scan-pass} on page \pageref{alg:scan-pass})
shows
that a scan is attempted for every
pair
\begin{equation*}
\tuple{\Veim{mainstem}, \Vsym{token}},
\end{equation*}
where \Veim{mainstem} is an \type{EIM} in the previous
Earley set,
and \Vsym{token} is the token scanned at \Vloc{i}.
(Algorithm \ref{alg:scan-pass} actually finds
\Veim{mainstem} in a set
returned by $\mymathop{transitions}()$.
This is a memoization for efficiency
and we will ignore it.)

By the preliminary definitions, we know that \Vsym{token}
is the same in both \Earley{} and \Leo.
By the outer induction hypothesis we know that,
for every traditional Earley item in the previous
Earley set,
there is at least one corresponding \Marpa Earley item.
Therefore, \Marpa{} performs its scan operation on a complete set
of mainstems.

Comparing the \Marpa pseudocode (Section \ref{p:scan-op}),
with the Earley scanning operation (Section \ref{d:scan})
and using
Observations~\ref{o:confirmed-AHFA-complete}
and \ref{o:predicted-AHFA-complete},
we see that an Earley item will be added to
\EVtable{\Marpa}{i} corresponding to every scanned Earley item
of \EVtable{\Leo}{i}.
We also see,
from the pseudocode of Section \ref{p:add-eim-pair},
that the \Marpa{} scanning operation will
add to \EVtable{\Marpa}{i}
an Earley item for
every prediction that results from
a scanned Earley item in \EVtable{\Leo}{i}.

\subsubsection{Earley reduction is complete}
\label{s:reduction-complete}

We now examine Earley reduction,
under the assumption that there is
no Leo transition.
The \Marpa pseudocode shows that the Earley items
in \EVtable{\Marpa}{i}
are traversed in a single pass for reduction.

To show that we traverse a complete and consistent
series of tributary Earley items,
we stipulate that
the Earley set is an ordered set,
and that new Earley items are added at the end.
From Theorem \ref{t:es-count}, we know
that
the number of Earley items is finite,
so a traversal of them must terminate.

Consider, for the purposes of the level 2 induction,
the reductions of \Leo{} to occur in generations.
Let the scanned Earley items be generation 0.
An \type{EIMT} produced by a reduction is generation $\var{n} + 1$
if its tributary Earley item was in generation \var{n}.
Predicted Earley items do not need to be assigned generations.
In \Marpa grammars they can never contain completions,
and therefore can never act as the tributary of a reduction.

The level 2 induction is on generations.
In Section \ref{s:scan-complete},
we showed that generation 0 is complete --
it contains Earley items
corresponding to all of the generation 0 \type{EIMT}'s of \Leo.
This is the basis of the level 2 induction.

The generation variable for the level 2 induction is \V{g}.
The induction hypothesis for the step of level 2 induction
is that for some \var{g},
the Earley items of \EVtable{\Marpa}{i}
for the generations prior to \V{g}
are correct (that is, complete and consistent).
For the step we need to show that
the Earley items of \EVtable{\Marpa}{i} for the generation up to \V{g}
are correct.
Since we have the correctness of the generation prior to \V{g}
by the induction hypothesis,
all that we need to show for the step will be that
the Earley items of \EVtable{\Marpa}{i} for generation \V{g}
are correct.

From Section \ref{s:sets-consistent},
we know that all Earley items in \Marpa's sets are consistent.
Therefore, to show correctness, we have only to show completeness.

Since we stipulated that \Marpa{} adds Earley items
at the end of each set,
we know that they occur in generation order.
Therefore \Marpa{},
when creating Earley items of generation $\var{n}+1$
while traversing \EVtable{\Marpa}{i},
can rely
on the level 2 induction hypothesis for
the completeness of Earley items
in generation \var{n}.

Let
$$\Veim{working} \in \Ves{i}$$
be the Earley item
currently being considered as a potential tributary for
an Earley reduction operation.
From the pseudocode, we see
that reductions are attempted for every
pair \Veim{mainstem}, \Veim{working}.
(Again, $\mymathop{transitions}()$ is ignored
as a memoization.)
By the outer induction hypothesis we know that,
for every traditional Earley item in the previous
Earley set,
there is at least one corresponding \Marpa Earley item.
We see from the pseudocode, therefore,
that for each \Veim{working}
that \Marpa{} performs its reduction operation on a complete set
of correct mainstems.
Therefore \Marpa{} performs its reduction operations on a
complete set of confluences.

Comparing the \Marpa pseudocode (Section \ref{p:reduction-op})
with the Earley reduction operation (Section \ref{s:reduction})
and using
Observations~\ref{o:confirmed-AHFA-complete}
and \ref{o:predicted-AHFA-complete},
we see that a Earley reduction result of
generation $\var{n}+1$
will be added to
\EVtable{\Marpa}{i} corresponding to every Earley reduction result
in generation $\var{n}+1$
of \EVtable{\Leo}{i},
as well as one corresponding
to every prediction that results from
an Earley reduction result
of generation $\var{n}+1$ in \EVtable{\Leo}{i}.
This shows the level 2 induction
and the case of reduction completeness.

\subsubsection{Leo reduction is complete}
\label{s:leo-complete}

\begin{MYsloppy}
We now show completeness for \Marpa{}'s reduction operation,
in the case where there is a Leo reduction.
In Section \ref{s:leo-consistent},
we found that where \Leo{} would create
the \type{EIMT}
\end{MYsloppy}
$$\tuple{\Vdr{top}, \Vorig{top}},$$
\Marpa adds
$$\tuple{\Vah{top}, \Vorig{top}}$$
such that $\Vdr{top} \in \Vah{top}$.
Since \Vdr{top} is a completed rule,
there are no predictions.
This shows the case immediately,
by the definition of completeness.

\subsubsection{Prediction is complete}
\label{s:prediction-complete}

\begin{MYsloppy}
Predictions result only from items in the same Earley set.
In Sections \ref{s:scan-complete},
\ref{s:reduction-complete}
and \ref{s:leo-complete},
we showed that,
for every prediction that would result
from an item added to \EVtable{\Leo}{i},
a corresponding prediction
was added to \EVtable{\Marpa}{i}.
\end{MYsloppy}

\subsubsection{Finishing the proof}
Having shown the cases in Sections
\ref{s:scan-complete},
\ref{s:reduction-complete},
\ref{s:leo-complete} and
\ref{s:prediction-complete},
we know that Earley set
\EVtable{\Marpa}{i} is complete.
In Section \ref{s:sets-consistent}
we showed that \EVtable{\Marpa}{i} is consistent.
It follows that \EVtable{\Marpa}{i} is correct,
which is the step of the outer induction.
Having shown its step, we have the outer induction,
and the theorem.
\qedsymbol

\subsection{\textnormal{\Marpa} is correct}

We are now in a position to show that \Marpa is correct.

\begin{theorem}
\textup{ $\myL{\Marpa,\Cg} = \myL{\Cg}$ }
\end{theorem}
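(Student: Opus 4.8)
The plan is to prove $\myL{\Marpa,\Cg} = \myL{\Cg}$ by combining two facts: Theorem~\ref{t:table-correct}, which establishes that Marpa's Earley sets are correct (complete and consistent with respect to \Leo's), and the known correctness of \Leo{} as a recognizer, which is established in~\cite{Leo1991} and inherited here since Marpa only \emph{restricts} Leo memoization (a restriction noted earlier to preserve correctness). Since \Leo{} is a faithful revision of \Earley{}, we have $\myL{\Leo,\Cg} = \myL{\Earley,\Cg} = \myL{\Cg}$, so it suffices to show $\myL{\Marpa,\Cg} = \myL{\Leo,\Cg}$. The natural strategy is therefore to show that Marpa accepts \Cw{} if and only if \Leo{} accepts \Cw{}, reducing the whole statement to a comparison of the two acceptance tests at location $\Vsize{\Cw}$.

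First I would recall the acceptance criteria. \Leo{} (following \Earley) accepts \Cw{} exactly when $\bigl[[\Vsym{accept} \de \Vstr{rhs} \mydot], 0\bigr] \in \bigEtable{\Vsize{\Cw}}$ for some \Vstr{rhs}, while the Marpa \textsc{Main} procedure accepts \Cw{} exactly when there is some $[\Vah{x}, 0] \in \Etable{\Vsize{\Cw}}$ with $\Vdr{accept} \in \Vah{x}$, where \Vdr{accept} is the completed acceptance dotted rule. The key observation is that, by the definition of correspondence, an EIM $[\Vah{x}, 0]$ with $\Vdr{accept} \in \Vah{x}$ corresponds precisely to the EIMT $[\Vdr{accept}, 0]$. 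So the acceptance conditions of the two recognizers are related exactly through the correspondence relation at the final Earley set.

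Next I would invoke Theorem~\ref{t:table-correct} at $\Vloc{i} = \Vsize{\Cw}$, which states that $\EVtable{\Marpa}{\Vsize{\Cw}}$ is both complete and consistent with respect to $\EVtable{\Leo}{\Vsize{\Cw}}$. For the forward direction, suppose Marpa accepts: then some $[\Vah{x}, 0] \in \EVtable{\Marpa}{\Vsize{\Cw}}$ contains \Vdr{accept}. By consistency, this EIM corresponds to some EIMT in $\EVtable{\Leo}{\Vsize{\Cw}}$; since the only dotted rule in the EIM that can match the acceptance form is \Vdr{accept} (by the dedicated-acceptance-rule assumption in the preliminaries, \Vsym{accept} is the LHS of only \Vrule{accept}), that EIMT must be $[\Vdr{accept}, 0]$, so \Leo{} accepts. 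For the converse, suppose \Leo{} accepts: then $[\Vdr{accept}, 0] \in \EVtable{\Leo}{\Vsize{\Cw}}$, and by completeness there is a corresponding EIM $[\Vah{x}, 0] \in \EVtable{\Marpa}{\Vsize{\Cw}}$ with $\Vdr{accept} \in \Vah{x}$, so Marpa accepts. Chaining the equivalences gives $\myL{\Marpa,\Cg} = \myL{\Leo,\Cg} = \myL{\Cg}$.

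The main obstacle I anticipate is not the logical skeleton, which is a short two-way implication, but rather pinning down cleanly that the acceptance EIMT can only be $[\Vdr{accept}, 0]$ and not some spurious item sharing the same AHFA state. This is where the dedicated-acceptance-symbol hypothesis (that $\Vsym{accept} \notin \RHS{\Vrule{x}}$ for all \Vrule{x}, and that \Vsym{accept} is the LHS of \Vrule{accept} alone) does the real work: it guarantees the acceptance condition is unambiguous and that the origin must be $0$. A secondary point requiring care is the explicit appeal to \Leo's correctness relative to \Cg; I would state plainly that this is imported from~\cite{Leo1991} together with the earlier remark that omitting memoizations does not affect correctness, so that $\myL{\Leo,\Cg} = \myL{\Cg}$ may be used as a given.
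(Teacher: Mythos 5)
Your proposal follows essentially the same route as the paper's own proof: it invokes Theorem~\ref{t:table-correct} to relate the two acceptance conditions at Earley set $\Vsize{\Cw}$ via the correspondence between $[\Vdr{accept},0]$ and an EIM whose AHFA state contains \Vdr{accept}, concludes $\myL{\Marpa,\Cg} = \myL{\Leo,\Cg}$, and then imports $\myL{\Leo,\Cg} = \myL{\Cg}$ from Theorem 4.1 of \cite{Leo1991}. Your write-up is in fact somewhat more explicit than the paper's (which compresses the two directions into a single ``if and only if'' citation of Theorem~\ref{t:table-correct}), particularly in spelling out how completeness and consistency each supply one direction and in noting the role of the dedicated acceptance rule.
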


\begin{proof}
From Theorem \ref{t:table-correct},
we know that
\begin{equation*}
\tuple{\Vdr{accept},0} \in \EVtable{\Leo}{\Vsize{w}}
\end{equation*}
if and only there is a
\begin{equation*}
\tuple{\Vah{accept},0} \in \EVtable{\Marpa}{\Vsize{w}}
\end{equation*}
such that $\Vdr{accept} \in \Vah{accept}$.
From the acceptance criteria in the \Leo{} definitions
and the \Marpa{} pseudocode,
it follows that
\begin{equation*}
\myL{\Marpa,\Cg} = \myL{\Leo,\Cg}.
\end{equation*}
By Theorem 4.1 in~\cite{Leo1991}, we know that
\begin{equation*}
\myL{\Leo,\Cg} = \myL{\Cg}.
\end{equation*}
The theorem follows from
the previous two equalities.
\end{proof}

\section{\Marpa recognizer complexity}
\label{s:complexity}

\subsection{Complexity of each Earley item}

For the complexity proofs,
we consider only \Marpa grammars without nulling
symbols.
We showed that this rewrite
is without loss of generality
in Section \ref{s:nulling},
when we examined correctness.
For complexity we must also show that
the rewrite and its reversal can be done
in amortized \Oc{} time and space
per Earley item.

\begin{lemma}\label{l:nulling-rewrite}
All time and space required
to rewrite the grammar to eliminate nulling
symbols, and to restore those rules afterwards
in the Earley sets,
can be allocated
to the Earley items
in such a way that each Earley item
requires \Oc{} time and space.
\end{lemma}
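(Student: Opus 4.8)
The plan is to separate the claim into its two stated components: the one-time rewrite of the grammar that eliminates nulling symbols, and the per-item restoration of the nulling information in the Earley sets during evaluation. These two costs have quite different character, and I would bound each by a different charging strategy drawn from the conventions of Section~\ref{s:recce}.

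First I would dispose of the grammar rewrite. Because \Cg{} is fixed and independent of the input, the number of rules, the number of symbols, and the length of every RHS are all constants. To remove the nulling symbols I would make a single pass over the rules, deleting nulling symbols from each RHS and recording, for each rule, which symbols were deleted and at which positions; this history is precisely what permits the reversal. The total time and space for this pass is bounded by the size of the grammar, which is a constant. By the charging conventions established in Section~\ref{s:recce}, this constant total cost is charged to the parse and then re-charged to the initial Earley item, which is present in every parse.

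Next I would handle the restoration in the Earley sets. Each Earley item has the form $\Veim{x} = [\Vah{x}, \Vorig{x}]$, and the number of AHFA states is the constant $\size{\Cfa}$, so each $\Vah{x}$ contains only a bounded number of dotted rules. For each such dotted rule, the symbols that must be reinserted are exactly those recorded in the rewrite history, and their number is bounded by the original RHS length, again a grammar constant. Consequently the work to restore the nulling symbols associated with any single Earley item is \Oc, and I would charge it directly to that Earley item.

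The main obstacle, as I see it, lies in the restoration step rather than the rewrite: I must show both that restoration is \emph{local} -- that the symbols to be reinserted for a given Earley item are determined solely by that item's AHFA state together with the precomputed rewrite history, with no global pass over the table -- and that it causes only bounded blowup, so that the number of restored items or annotations attributable to any one Earley item is \Oc. Both properties follow because the rewrite deleted nulling symbols strictly on a per-rule basis: their reinsertion is therefore per-rule, hence per-dotted-rule, hence per-Earley-item, and every relevant quantity is a constant depending only on \Cg{} and not on the input length \var{n}. With locality and bounded blowup established, the two charging arguments combine to give \Oc{} time and space per Earley item, as claimed.
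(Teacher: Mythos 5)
Your proposal is correct and follows essentially the same route as the paper's proof: the one-time rewrite is a grammar-dependent constant charged to the parse, and the reversal is charged per Earley item at \Oc{} each, justified by the fact that the blowup per item (rules per rewritten rule, dotted rules per AHFA state, symbols per RHS) is a constant depending only on \Cg{}. Your additional emphasis on locality and on recording the rewrite history is a slightly more detailed spelling-out of what the paper compresses into the observation that ``the number of Earley items into which another Earley item must be rewritten is a constant that depends on the grammar,'' but it is the same argument.
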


\begin{proof}
The time and space used in the rewrite is a constant
that depends on the grammar,
and is charged to the parse.
The reversal of the rewrite can be
done in a loop over the Earley items,
which will have time and space costs
per Earley item,
plus a fixed overhead.
The fixed overhead is \Oc{}
and is charged to the parse.
The time and space per Earley item
is \Oc{}
because the number of
rules into which another rule must be rewritten,
and therefore the number of Earley items
into which another Earley item must be rewritten,
is a constant that depends
on the grammar.
\end{proof}

\begin{theorem}\label{t:O1-time-per-eim}
All time in \textnormal{\Marpa} can be allocated
to the Earley items,
in such a way that each Earley item,
and each attempt to
add a duplicate Earley item,
requires \Oc{} time.
\end{theorem}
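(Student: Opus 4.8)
The plan is to treat this theorem as the aggregation of the per-procedure accounting carried out alongside the pseudocode in Section~\ref{s:pseudocode}, together with Lemma~\ref{l:nulling-rewrite} for the cost of the nulling rewrite. Every procedure of the recognizer was already shown, as it was introduced, to spend only \Oc{} time per Earley item actually added or per attempt to add one, with any residue charged to the parse, to an Earley set, or caller-included. What remains is to assemble these local statements into a single global accounting and to verify that no source of time escapes it.

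First I would restate the charging conventions of Section~\ref{s:pseudocode}: time charged to the parse is \Oc{} in total and can be re-charged to the initial Earley item, which is present in every (non-trivial) parse; time charged to a non-empty Earley set can be re-charged to one of its members; and caller-included time is passed upward until it reaches a procedure where it can be charged to an EIM or an EIM attempt. Next I would invoke Lemma~\ref{l:nulling-rewrite} to dispose of the grammar rewrite and its reversal at \Oc{} per Earley item. Then I would walk the call tree of \textsc{Main}: the outer loop over locations drives \textsc{Scan pass} and \textsc{Reduction pass}, and the final acceptance loop is charged to the Earley items at location~\Vsize{w}. Descending through \textsc{Scan pass} (Section~\ref{p:scan-op}), \textsc{Reduction pass}, \textsc{Reduce one LHS} (Section~\ref{p:reduce-one-lhs}), \textsc{Earley reduction operation} (Section~\ref{p:reduction-op}), \textsc{Leo reduction operation} (Section~\ref{p:leo-op}) and \textsc{Memoize transitions}, each loop iteration is, by construction of the pseudocode, either an EIM attempt or an examination of one existing EIM, so its \Oc{} cost is charged there; the leaf work is concentrated in \textsc{Add EIM pair} (Section~\ref{p:add-eim-pair}).

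The main obstacle is the handful of operations that are \Oc{} only because of a data-structure choice rather than trivially. The crucial one is the ``is new'' test inside \textsc{Add EIM pair}: a duplicate check keyed on a pair $[\Ves{i},\var{x}]$ is not \Oc{} in general, and becomes so only through the PSL of Section~\ref{s:per-set-lists}, whose time-stamping trick avoids re-initialization and whose indexing is bounded by the constant \Vsize{\Cfa}. Equally delicate are the lookups $\var{transitions}(\Vloc{l},\Vsym{s})$ used in scanning and reduction, which are \Oc{} only because origins and the previous Earley set are carried as direct links rather than integer locations, since Earley sets cannot be indexed by location in constant time; I would make explicit that Marpa's Earley items carry such links. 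Having confirmed that every line of every procedure is either constant overhead (charged to the parse or an Earley set and then re-charged) or an operation whose \Oc{} cost is charged to an EIM or an EIM attempt, the theorem follows by summing the local charges, exactly as recorded in the complexity-summary observations of Section~\ref{s:pseudocode}.
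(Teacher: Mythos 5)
Your proposal is correct and takes essentially the same approach as the paper: the paper's own proof is a one-line appeal to the observations of Section~\ref{s:pseudocode} together with Lemma~\ref{l:nulling-rewrite}, which is exactly the aggregation you carry out. Your version simply makes explicit the call-tree walk, the re-charging conventions, and the PSL and \var{transitions} data-structure points that the paper leaves implicit in those observations.
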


\begin{theorem}\label{t:O1-space-per-eim}
All space in \textnormal{\Marpa} can be allocated
to the Earley items,
in such a way that each Earley item
requires \Oc{} space and,
if confluences are not considered,
each attempt to add a duplicate
Earley item adds no additional space.
\end{theorem}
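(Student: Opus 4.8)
The plan is to assemble this bound from the space accounting carried out alongside the pseudocode of Section~\ref{s:pseudocode}, together with Lemma~\ref{l:nulling-rewrite}. Most of the work is already done: the observations closing Section~\ref{s:pseudocode} establish that the space charged to each Earley item actually added is \Oc{}, that the space charged to an attempt to add a duplicate is \Oc{} when links are included and zero otherwise, and that no space is charged to predicted Earley items or to attempts to add them. What remains is to verify that every other consumer of space in \Marpa{} falls under one of the accounting conventions of Section~\ref{s:pseudocode} --- charging to an Earley item, to an Earley set, or to the parse --- and then to collapse each set-level and parse-level charge into an \Oc{} per-Earley-item charge.

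First I would dispose of the fixed overheads. The grammar and the AHFA, whose number of states is the constant $\size{\Cfa}$, together with the nulling-symbol rewrite, are all bounded by constants depending only on \Cg{}; these are charged to the parse and then re-charged to the initial Earley item, which is present in every parse. For the reverse direction, Lemma~\ref{l:nulling-rewrite} already supplies a matching \Oc{}-per-Earley-item bound for restoring the nulling symbols within the Earley sets.

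The step requiring care is the per-set data. Each Earley set \Ves{i} carries a \var{transitions} table indexed by symbol and a PSL of $\size{\Cfa}$ entries (Section~\ref{s:per-set-lists}), and may carry up to $\size{\Cfa}$ LIM's; since the number of symbols and $\size{\Cfa}$ are constants depending only on \Cg{}, the total of these is \Oc{} space per Earley set. By the set-charging convention, an \Oc{} charge to a non-empty Earley set is re-charged to an arbitrary member --- say the first --- while an empty Earley set forces rejection of \Cw{}, so its \Oc{} charge may instead be charged to the parse. Thus the per-set overhead collapses to \Oc{} per Earley item.

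The main obstacle --- and the reason the space statement differs from its time counterpart --- is keeping the links isolated so that both clauses hold at once. With links disregarded, an attempt to add a duplicate Earley item writes nothing new: the PSL lookup of Section~\ref{s:per-set-lists} detects the duplicate in \Oc{} time via $\PSL{\Ves{x}}{\ID{\Vah{x}}}$ and the item is not stored again, so no space is charged, matching the second clause. With links included, each such attempt records a single link of \Oc{} size, charged directly to that attempt. Summing the added-item charges, the duplicate-attempt charges, the re-charged per-set charges, and the Lemma~\ref{l:nulling-rewrite} charges, every unit of space in \Marpa{} lands on some Earley item at \Oc{} cost, with the duplicate-attempt contribution vanishing when links are not considered, which is the theorem.
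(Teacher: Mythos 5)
Your proposal is correct and follows essentially the same route as the paper, whose entire proof is a citation of the observations in Section~\ref{s:pseudocode} together with Lemma~\ref{l:nulling-rewrite}; you have simply made explicit the per-set and parse-level recharging details that the paper leaves implicit in its accounting conventions.
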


\begin{theorem}\label{t:O1-confluences-per-eim}
If confluences are considered,
all space in \textnormal{\Marpa} can be allocated
to the Earley items
in such a way that each Earley item
and each attempt to
add a duplicate Earley item
requires \Oc{} space.
\end{theorem}

\begin{proof}[Proof of Theorems
\ref{t:O1-time-per-eim},
\ref{t:O1-space-per-eim},
and \ref{t:O1-confluences-per-eim}]
These theorems follows from the observations
in Section \ref{s:pseudocode}
and from Lemma \ref{l:nulling-rewrite}.
\end{proof}

\subsection{Duplicate dotted rules}

The same complexity results apply to \Marpa{} as to \Leo,
and the proofs are very similar.
\Leo's complexity results~\cite{Leo1991}
are based on charging
resource to Earley items,
as were the results
in Earley's paper~\cite{Earley1970}.
But both assume that there is one dotted rule
per Earley item,
which is not the case with \Marpa.

\Marpa's Earley items group dotted rules into AHFA
states, but this is not a partitioning in the strict
sense -- dotted rules can fall into more than one AHFA
state.
This is an optimization,
in that it allows dotted rules,
if they often occur together,
to be grouped together aggressively.
But it opens up the possibility
that, in cases where \Earley{} and \Leo{} disposed
of a dotted rule once and for all,
\Marpa{} might have to deal with it multiple times.
\Marpa's duplicate rules
do not change the complexity results,
although showing this requires some additional
theoretical apparatus,
which this section contains.

\begin{theorem}\label{t:marpa-O-leo}
\begin{equation*}
\textup{
    $\Rtablesize{\Marpa} < \var{c} \mult \Rtablesize{\Leo}$,
}
\end{equation*}
where \var{c} is a constant that depends on the grammar.
\end{theorem}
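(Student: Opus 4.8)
The plan is to bound $\Rtablesize{\Marpa}$ against $\Rtablesize{\Leo}$ one Earley set at a time and then sum over locations. Fix a location $\Vloc{i}$ with $0 \le \Vloc{i} \le \size{\Cw}$. By Theorem~\ref{t:table-correct}, $\EVtable{\Marpa}{i}$ is consistent, so every EIM $[\Vah{y}, \Vorig{x}]$ in $\EVtable{\Marpa}{i}$ corresponds to at least one EIMT $[\Vdr{x}, \Vorig{x}]$ in $\EVtable{\Leo}{i}$ with $\Vdr{x} \in \Vah{y}$. First I would use this to define a map $f$ sending each Marpa EIM to one such witnessing Leo EIMT; consistency guarantees $f$ is total and lands inside $\EVtable{\Leo}{i}$.

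The heart of the argument is to show that $f$ is at most $\size{\Cfa}$-to-one. Suppose two distinct EIMs map to the same EIMT $[\Vdr{x}, \Vorig{x}]$. By the definition of correspondence, each such EIM must have origin $\Vorig{x}$ and an AHFA state containing $\Vdr{x}$; since they share this origin and are distinct, their AHFA states must differ. Because the collection of AHFA states is fixed and finite, of size $\size{\Cfa}$ (a constant depending only on \Cg), at most $\size{\Cfa}$ distinct AHFA states can contain the single dotted rule $\Vdr{x}$, so at most $\size{\Cfa}$ EIMs can share any one value of $f$. Counting fibers then gives
\[
\bigsize{\EVtable{\Marpa}{i}}
= \sum_{t \in \EVtable{\Leo}{i}} \size{f^{-1}(t)}
\le \size{\Cfa} \times \bigsize{\EVtable{\Leo}{i}}.
\]

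Summing this inequality over $0 \le \Vloc{i} \le \size{\Cw}$ yields $\Rtablesize{\Marpa} \le \size{\Cfa} \times \Rtablesize{\Leo}$. To recover the strict inequality of the statement, I would take $\var{c} = \size{\Cfa} + 1$: since \Leo's table always contains at least the initial Earley item, $\Rtablesize{\Leo} \ge 1$, whence $\size{\Cfa} \times \Rtablesize{\Leo} < (\size{\Cfa}+1) \times \Rtablesize{\Leo}$, and $\var{c}$ clearly depends only on the grammar.

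I expect the only delicate point to be the multiplicity bound. The crux is recognizing that it is the \emph{non-partitioning} of dotted rules among AHFA states, not anything about a particular parse, that governs the possible blow-up, and that two EIMs sharing a witness are forced to share the witness's origin $\Vorig{x}$ (hence, being distinct, to occupy distinct AHFA states). Everything else is routine: the consistency input is exactly Theorem~\ref{t:table-correct}, and the passage from the per-set bound to the summed bound is immediate from the definition of $\Rtablesize{\alg{Recce}}$.
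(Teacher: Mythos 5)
Your proposal is correct and follows essentially the same route as the paper's own proof: use the correspondence guaranteed by Theorem~\ref{t:table-correct}, bound the number of EIM's that can correspond to a single EIMT by the number of AHFA states (a grammar-dependent constant, since distinct EIM's sharing a witness must share its origin and hence differ in AHFA state), and sum. The only cosmetic difference is in securing strictness: the paper argues the multiplicity is strictly below $\size{\Cfa}$ because no dotted rule appears in every AHFA state, whereas you take $\var{c} = \size{\Cfa}+1$ and use $\Rtablesize{\Leo} \ge 1$, which is an equally valid (and arguably more careful) finish.
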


\begin{proof}
We know from Theorem \ref{t:table-correct}
that every \Marpa Earley item corresponds to one of
\Leo's traditional Earley items.
If an \type{EIM} corresponds to an \type{EIMT},
the AHFA state of the \type{EIM} contains the
\type{EIMT}'s dotted rule,
while their origins are identical.
Even in the worst case, a dotted rule cannot
appear in every AHFA state,
so that
the number of \Marpa items corresponding to a single
traditional Earley item must be less
than $\size{\Cfa}$.
Therefore,
\begin{equation*}
    \Rtablesize{\Marpa} < \size{\Cfa} \mult \Rtablesize{\Leo}\qedhere
\end{equation*}
\end{proof}

Earley~\cite{Earley1970} shows that,
for unambiguous grammars,
every attempt to add
an Earley item will actually add one.
In other words, there will be no attempts to
add duplicate Earley items.
Earley's proof shows that for each attempt
to add a duplicate,
the causation must be different --
that the confluences causing the attempt
will differ in either their mainstem
or their tributary.
Multiple confluences for an Earley item
would mean multiple derivations
for the sentential form that it represents.
That in turn would mean that
the grammar is ambiguous,
contrary to assumption.

In \Marpa, there is an slight complication.
A dotted rule can occur in more than one AHFA
state.
Because of that,
it is possible that two of \Marpa's
operations to add an \type{EIM}
will represent identical Earley confluences,
and therefore will be
consistent with an unambiguous grammar.
Dealing with this complication requires us
to prove a result that is weaker than that of~\cite{Earley1970},
but that is
still sufficient to produce the same complexity results.

\begin{theorem}\label{t:tries-O-eims}
For an unambiguous grammar,
the number of attempts to add
Earley items will be less than or equal to
\begin{equation*}
\textup{
    $\var{c} \mult \Rtablesize{\Marpa}$,
}
\end{equation*}
where \var{c} is a constant
that depends on the grammar.
\end{theorem}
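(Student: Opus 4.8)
The plan is to partition the attempts to add an Earley item according to the operation that makes them---scanning, Leo reduction, and (non-Leo) Earley reduction---and to bound each class by a grammar constant times $\Rtablesize{\Marpa}$. Scanning and Leo reduction are the easy classes. In \call{Scan pass}{} each pass of the loop is associated with a distinct predecessor $\Veim{predecessor}$ in the previous Earley set, so the scan attempts number at most $\Rtablesize{\Marpa}$; and in the Leo case $\var{transitions}(\Vloc{origin},\Vsym{lhs})$ is a single LIM, so each work item $\Veim{work}$ triggers at most a grammar constant's worth of Leo reduction attempts, one per completed LHS. The whole difficulty sits in the non-Leo Earley reductions, where $\var{transitions}(\Vloc{origin},\Vsym{lhs})$ may be an arbitrarily large set of EIM's, and it is there that I would invoke unambiguity.

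Two preliminary reductions make the bound quantitative. First, I would record that $\Rtablesize{\Leo} \le \var{c} \times \Rtablesize{\Marpa}$ for a grammar constant: by completeness (Theorem~\ref{t:table-correct}) every EIMT of \Leo{} corresponds to some EIM of \Marpa{}, while a fixed EIM $[\Vah{y},\Vorig{x}]$ can be the correspondent of at most as many EIMT's as $\Vah{y}$ has dotted rules, a grammar constant; hence the map from EIMT's to chosen corresponding EIM's is boundedly many-to-one. Second, I would recall the cited result of \Earley{}\cite{Earley1970}: for an unambiguous grammar the causation of each EIMT is unique, so the number of \Leo{} reduction causations equals the number of EIMT's produced by reduction, which is at most $\Rtablesize{\Leo}$.

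For the Earley-reduction attempts I would define a correspondence $\Phi$ sending each \Marpa{} reduction attempt---a triple of work EIM, transition symbol \Vsym{lhs}, and predecessor EIM---to the \Leo{} reduction causation on the corresponding completed and predecessor EIMT's. The count then factors as (number of \Leo{} causations) times (maximum size of a $\Phi$-fibre). The fibre bound is where the Aycock--Horspool packing is paid for: a fixed completed dotted rule lies in at most $\size{\Cfa}$ AHFA states, and likewise for the predecessor dotted rule, so at most $\size{\Cfa}^2$ \Marpa{} attempts can carry the same \Leo{} causation---this is exactly the possibility, noted before the theorem, that two \Marpa{} operations represent identical Earley causations. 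Combining with the unique-causation count and the bound of the previous paragraph, the Earley-reduction attempts number at most a grammar constant times $\Rtablesize{\Marpa}$, and adding the scan and Leo contributions gives the theorem.

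The hard part, and the reason the statement is weaker than \Earley's, is making $\Phi$ legitimate. Because AHFA states are not a partition of the dotted rules, a work EIM may carry a completed dotted rule whose individual EIMT is absent from \EVtable{\Leo}{i}, and a predecessor EIM may carry a postdot dotted rule that is likewise unrealized; such ``spurious'' attempts have no evident \Leo{} causation to be sent to. I expect this to be the crux. My plan for it is to charge every reduction attempt to the EIM it would produce rather than to its operands: that result EIM is consistent, hence corresponds to a genuine \Leo{} EIMT whose causation is unique by unambiguity, and uniqueness fixes the split location of the reduction; the residual freedom---which dotted rule of the result's AHFA state was actually targeted, and which AHFA states realized the two operands---is again controlled by the $\size{\Cfa}$ bound on states containing a given dotted rule, with Theorem~\ref{t:leo-singleton} separating off the Leo-memoized cases. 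Showing that this charging absorbs the spurious attempts into the same $\order{\size{\Cfa}}$ factor is the step I would expect to demand the most care.
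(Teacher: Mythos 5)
Your scan and Leo bounds match the paper's, and your preliminary lemma $\Rtablesize{\Leo} \le \var{c} \times \Rtablesize{\Marpa}$ (completeness plus the fact that an AHFA state contains at most a grammar-constant number of dotted rules) is sound; it is the reverse direction of the paper's Theorem~\ref{t:marpa-O-leo}. But for the Earley-reduction attempts you take a route the paper does not take---mapping \Marpa{} attempts to \Leo{} causations and invoking Earley's unique-causation result---and the step you yourself flag as the crux is a genuine gap that your proposed repair does not close. Charging a spurious attempt to the result EIM it would produce does not work as you describe: the result EIM is consistent, so \emph{some} dotted rule $\Vdr{to} \in \Vah{to}$ is realized in \Leo's table, and unambiguity fixes the split location of \emph{that} dotted rule's causation; but a spurious attempt producing the same result EIM is under no obligation to use that split location. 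Its component may be any EIM of \Ves{i} whose AHFA state happens to contain a completed rule with the right LHS, and its predecessor any EIM of the corresponding \Ves{k} with the right origin and a matching postdot rule, with neither dotted rule realized. Your $\Vsize{fa}$ and $\Vsize{symbols}$ factors bound every freedom \emph{except} the split location \Vloc{k}, which ranges over up to \Vloc{i} values, and unambiguity gives no purchase here: it is a statement about derivations that actually exist, and a spurious attempt corresponds to no derivation, so no pair of such attempts can be played off against each other to produce the two derivations of one sentence that a contradiction requires. As proposed, the fibres of your charging map have no grammar-constant bound.

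The paper sidesteps this entirely by charging each reduction attempt to its \emph{component} operand rather than to its result or to a \Leo{} causation. An attempt is a triple $[\Veim{predecessor}, \Vsym{transition}, \Veim{component}]$; the component ranges over \Ves{j} (so summing over \Vloc{j} contributes $\Rtablesize{\Marpa}$), the transition symbol over at most \Vsize{symbols} values, and---this is the only place unambiguity enters---for a fixed component and transition symbol the paper bounds the number of predecessors by \Vsize{fa}, via a reductio: more than \Vsize{fa} predecessors forces two with the same AHFA state, hence with different origins, which the paper argues yields two different derivations for the reduction, contradicting unambiguity. The structural point is that the component's origin \emph{is} the split location, so fixing the component eliminates exactly the freedom that is unbounded in your scheme; the realized-versus-unrealized distinction never has to be confronted, and neither $\Rtablesize{\Leo}$ nor Earley's unique-causation theorem is needed. (Two minor omissions in your accounting, which the paper handles explicitly: attempts to add predictions, which it absorbs into the scan and reduction counts with a factor of two, and the initial item.) If you wish to rescue your correspondence argument, the spurious attempts would need a purely combinatorial bound, which in effect pushes you back to the paper's per-component count.
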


\begin{proof}
Let \var{initial-tries} be the number of attempts to add the initial item to
the Earley sets.
For Earley set 0, it is clear from the pseudocode
that there will be no attempts to add duplicate \type{EIM}'s:
\begin{equation*}
\var{initial-tries} = \bigsize{\Vtable{0}}
\end{equation*}

Let \var{leo-tries} be the number of attempted Leo reductions in
Earley set \Vloc{j}.
For Leo reduction,
we note that by its definition,
duplicate attempts at Leo reduction cannot occur.
Let \var{max-AHFA} be the maximum number of
dotted rules in any AHFA state.
From the pseudo-code of Sections \ref{p:reduce-one-lhs}
and \ref{p:leo-op},
we know there will be at most one Leo reduction for
each each dotted rule in the current Earley set,
\Vloc{j}.
\begin{equation*}
\var{leo-tries} \le \var{max-AHFA} \mult \bigsize{\Vtable{j}}
\end{equation*}

Let \var{scan-tries} be the number of attempted scan operations in
Earley set \Vloc{j}.
Marpa attempts a scan operation,
in the worst case,
once for every \type{EIM} in the Earley set
at $\Vloc{j} \subtract 1$.
Therefore, the number of attempts
to add scans
must be less than equal to \bigsize{\Etable{\var{j} \subtract 1}},
the number
of actual Earley items at
$\Vloc{j} \subtract 1$.
\begin{equation*}
\var{scan-tries} \le \bigsize{\Etable{\var{j} \subtract 1}}
\end{equation*}

Let \var{predict-tries} be the number of attempted predictions in
Earley set \Vloc{j}.
\Marpa{} includes prediction
in its scan and reduction operations,
and the number of attempts to add duplicate predicted \type{EIM}'s
must be less than or equal
to the number of attempts
to add duplicate confirmed \type{EIM}'s
in the scan and reduction operations.
\begin{equation*}
\var{predict-tries} \le \var{reduction-tries} + \var{scan-tries}
\end{equation*}

The final and most complicated case is Earley reduction.
Recall that \Ves{j} is the current Earley set.
Consider the number of reductions attempted.
\Marpa{} attempts to add an Earley reduction result
once for every triple
\begin{equation*}
\tuple{\Veim{mainstem}, \Vsym{transition}, \Veim{tributary}}.
\end{equation*}
where
\begin{equation*}
\begin{split}
& \Veim{tributary} = \tuple{ \Vah{tributary}, \Vloc{tributary-origin} }  \\
\land \quad & \Vdr{tributary} \in \Vah{tributary} \\
 \land \quad & \Vsym{transition} = \LHS{\Vdr{tributary}}. \\
\end{split}
\end{equation*}

We now put an upper bound on number of possible values of this triple.
The number of possibilities for \Vsym{transition} is clearly at most
\size{\var{symbols}},
the number of symbols in \Cg{}.
We have $\Veim{tributary} \in \Ves{j}$,
and therefore there are at most
$\bigsize{\Etable{\V{j}}}$ choices for \Veim{tributary}.

We can show that the number of possible choices of
\Veim{mainstem} is at most
the number of AHFA states, \Vsize{fa}, by a reductio.
Suppose, for the reductio,
there were more than \Vsize{fa} possible choices of \Veim{mainstem}.
Then there are two possible choices of \Veim{mainstem} with
the same AHFA state.
Call these \Veim{choice1} and \Veim{choice2}.
We know, by the definition of Earley reduction, that
$\Veim{mainstem} \in \Ves{j}$,
and therefore we have
$\Veim{choice1} \in \Ves{j}$ and
$\Veim{choice2} \in \Ves{j}$.
Since all \type{EIM}'s in an Earley set must differ,
and
\Veim{choice1} and \Veim{choice2} both have the same
AHFA state,
they must differ in their origin.
But two different origins would produce two different derivations for the
reduction, which would mean that the parse was ambiguous.
This is contrary to the assumption for the theorem
that the grammar is unambiguous.
This shows the reductio
and that the number of choices for \Veim{mainstem},
compatible with \Vorig{tributary}, is as most \Vsize{fa}.

\begin{MYsloppyB}{3em}{2000}
Collecting the results, we see that
the number of possible choices for
each \Veim{tributary} is
\begin{gather*}
\Vsize{fa} \mult \Vsize{symbols} \mult \bigsize{\Etable{\V{j}}}, \\
\text{where } \Vsize{fa}
  \text{is the number of possible choices for \Veim{mainstem}}, \\
\Vsize{symbols}
  \text{is the number of possible choices for \Vsym{transition}, and} \\
\bigsize{\Etable{\V{j}}}
  \text{is the number of possible choices for \Veim{tributary}.}
\end{gather*}
\end{MYsloppyB}

The number of reduction attempts will therefore be at most
\begin{equation*}
\var{reduction-tries} \leq \Vsize{fa} \mult \Vsize{symbols} \mult \bigsize{\Etable{\V{j}}}.
\end{equation*}

Summing
\begin{multline*}
\var{tries} =
\var{scan-tries} +
\var{leo-tries} + \\
\var{predict-tries} +
\var{reduction-tries} +
\var{initial-tries},
\end{multline*}
we have,
where $\var{n} = \Vsize{\Cw}$,
the size of the input,
\begin{equation*}
\begin{alignedat}{2}
& \bigsize{\Vtable{0}} & \quad &
\qquad \text{initial \type{EIM}'s} \\
+ \; & \sum\limits_{j=0}^{n}{
\var{max-AHFA} \mult \bigsize{\Vtable{j}}
} &&
\qquad \text{\type{LIM}'s} \\
+ \; & 2 \mult \sum\limits_{j=1}^{n}{
\bigsize{\Etable{\var{j} \subtract 1}}
} &&
\qquad \text{scanned \type{EIM}'s} \\
+ \; & 2 \mult \sum\limits_{j=0}^{n}{\Vsize{fa} \mult \Vsize{symbols} \mult \bigsize{\Etable{\V{j}}}} &&
\qquad \text{reduction \type{EIM}'s}.
\end{alignedat}
\end{equation*}
In this summation,
\var{prediction-tries} was accounted for by counting the scanned and predicted
\type{EIM} attempts twice.
Since \var{max-AHFA} and \Vsize{symbols} are both constants
that depend only on \Cg{},
if we collect the terms of the summation,
we will find a constant \var{c}
such that,
where \var{c} is a constant that depends on \Cg{},
\begin{equation*}
\var{tries} \leq \var{c} \mult \sum\limits_{j=0}^{n}{\bigsize{\EEtable{\Marpa}{\V{j}}}}.
\end{equation*}
Changing the index from \V{j} to \V{i},
and abbreviating the count of all Earley items according to the convention
of \eqref{eq:abbr-all-eims-count} on
page \pageref{eq:abbr-all-eims-count}, we have
\begin{equation*}
  \V{tries} \leq \V{c} \mult \Rtablesize{\Marpa}.\qedhere
\end{equation*}
\end{proof}

As a reminder,
we follow tradition by
stating complexity results in terms of \var{n},
setting $\var{n} = \Vsize{\Cw}$,
the length of the input.

\begin{theorem}\label{t:eim-count}
For a context-free grammar,
\begin{equation*}
\textup{
    $\Rtablesize{\Marpa} = \order{\var{n}^2}$.
}
\end{equation*}
\end{theorem}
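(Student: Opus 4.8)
The plan is to obtain this bound as an immediate corollary of Theorem~\ref{t:es-count} by summing the per-set size estimate over every location in the parse. First I would recall the definition of the total table size,
\begin{equation*}
\Rtablesize{\Marpa} = \sum_{\Vloc{i}=0}^{\size{\Cw}} \bigsize{\EVtable{\Marpa}{i}},
\end{equation*}
together with the convention $\var{n} = \size{\Cw}$, so that the index runs over $0 \le \Vloc{i} \le \var{n}$.

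Next I would invoke Theorem~\ref{t:es-count} to bound each summand. The point that warrants a moment's attention is that the bound supplied by that theorem is \emph{uniform} in \Vloc{i}: the constant hidden in $\order{\Vloc{i}}$ is $\size{\Cfa}$, the number of AHFA states, which depends only on \Cg{} and not on the location. Hence $\bigsize{\EVtable{\Marpa}{i}} \le \Vloc{i} \times \size{\Cfa}$ for every \Vloc{i}, and this single constant may be pulled out of the summation rather than being re-incurred at each term.

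The remaining step is the routine evaluation of an arithmetic series,
\begin{equation*}
\Rtablesize{\Marpa} \le \sum_{\Vloc{i}=0}^{\var{n}} \Vloc{i} \times \size{\Cfa} = \size{\Cfa} \times \frac{\var{n}(\var{n}+1)}{2} = \order{\var{n}^2}.
\end{equation*}

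I do not expect a genuine obstacle here, since the statement is essentially a corollary of Theorem~\ref{t:es-count}: the whole argument is a single application of that linear per-set bound followed by summing the resulting arithmetic progression. The only thing I would be careful to state explicitly is that the constant in the per-set bound is independent of the location, so that it factors cleanly out of the sum; this is exactly the content of the proof of Theorem~\ref{t:es-count}, where the bound $\Vloc{i} \times \size{\Cfa}$ is derived with a grammar-dependent but location-independent \size{\Cfa}.
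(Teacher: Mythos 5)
Your proposal is correct and takes essentially the same route as the paper: the paper's proof likewise applies Theorem~\ref{t:es-count} to each Earley set and sums $\sum_{\Vloc{i}=0}^{\var{n}} \order{\var{i}} = \order{\var{n}^2}$. Your explicit remark that the hidden constant $\size{\Cfa}$ is location-independent is a nice precaution but reflects exactly what the paper's argument implicitly relies on.
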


\begin{proof}
By Theorem \ref{t:es-count},
the size of the Earley set at \Vloc{i}
is $\order{\var{i}}$.
Summing over the length of the input,
$\Vsize{\Cw} = \var{n}$,
the number of \type{EIM}'s in all of \Marpa's Earley sets
is
\begin{equation*}
\sum\limits_{\Vloc{i}=0}^{\var{n}}{\order{\var{i}}}
= \order{\var{n}^2}.\qedhere
\end{equation*}
\end{proof}

\begin{theorem}\label{t:ambiguous-tries}
For a context-free grammar,
the number of attempts to add
Earley items is $\order{\var{n}^3}$.
\end{theorem}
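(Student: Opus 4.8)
The plan is to reuse the case decomposition from the proof of Theorem~\ref{t:tries-O-eims}, in which \var{tries} was split into \var{initial-tries}, \var{leo-tries}, \var{scan-tries}, \var{predict-tries} and \var{reduction-tries}. Three of these categories keep exactly the bounds established there, since none of those derivations used unambiguity: in each Earley set \Vloc{j} we still have $\var{scan-tries} \le \bigsize{\Etable{\var{j} \subtract 1}}$ and $\var{leo-tries} \le \var{max-AHFA} \times \bigsize{\Vtable{j}}$, both of which are \order{\var{j}} by Theorem~\ref{t:es-count}, while \var{initial-tries} is \Oc. Summed over the input these three contribute only \order{\var{n}^2}. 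The category \var{predict-tries} remains bounded by the confirmed attempts it accompanies, namely $\var{scan-tries} + \var{reduction-tries}$, so it is controlled as soon as reduction is. Everything therefore hinges on \var{reduction-tries}, which is the single place where the ambiguous case genuinely differs.

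First I would isolate the one step of the earlier proof that fails. There, the number of choices of \Veim{predecessor} for a reduction was capped at \Vsize{fa} by a reductio: two predecessors sharing an AHFA state would have to differ in origin, giving two derivations and hence an ambiguous parse. For a general grammar this reductio no longer applies, so the predecessor count can no longer be treated as the constant \Vsize{fa}. Instead I would bound the predecessors by the size of the Earley set that contains them. A reduction whose component \Veim{component} has origin \Vloc{component-origin} draws its predecessor from \Ves{component-origin}, and since $\var{component-origin} \le \var{j}$, Theorem~\ref{t:es-count} gives $\bigsize{\Ves{component-origin}} = \order{\var{j}}$.

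Carrying this through, the reduction attempts within Earley set \Vloc{j} are bounded by the product of the choices of \Veim{component} (at most $\bigsize{\Ves{j}} = \order{\var{j}}$), the choices of the transition symbol \Vsym{transition} (at most \Vsize{symbols}, a constant), and the choices of \Veim{predecessor} (now \order{\var{j}} in place of the constant \Vsize{fa}), giving $\var{reduction-tries} = \order{\var{j}^2}$ in each set. Summing over the input then yields $\sum\limits_{\var{j}=0}^{\var{n}} \order{\var{j}^2} = \order{\var{n}^3}$, which dominates the \order{\var{n}^2} contribution of the remaining categories and establishes the theorem. I expect the main obstacle to be conceptual rather than computational: recognizing that exactly one factor in the reduction bound, the predecessor count, loses its constant cap once unambiguity is dropped, and confirming that substituting the linear Earley-set bound of Theorem~\ref{t:es-count} for that single factor raises the total from \order{\var{n}^2} to \order{\var{n}^3} while no other category is affected more severely.
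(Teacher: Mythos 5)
Your proposal is correct and follows essentially the same route as the paper's own proof: it reuses the decomposition of Theorem~\ref{t:tries-O-eims}, observes that the only bound which relied on unambiguity was the predecessor count in \var{reduction-tries}, replaces the constant cap $\Vsize{fa}$ with $\bigsize{\Ves{component-origin}} = \order{\var{j}}$ via Theorem~\ref{t:es-count}, and sums $\order{\var{j}^2}$ over the Earley sets to obtain $\order{\var{n}^3}$. If anything, your explicit bound $\var{predict-tries} \le \var{scan-tries} + \var{reduction-tries}$ is handled slightly more carefully than in the paper, which folds predictions into an ``other-tries'' term claimed to be $\order{\var{n}^2}$ even though prediction attempts track reduction attempts; this does not change the final result, since those attempts are still dominated by the $\order{\var{n}^3}$ reduction term.
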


\begin{proof}
Reexamining the proof of Theorem \ref{t:tries-O-eims},
we see that the only bound that required
the assumption that \Cg{} was unambiguous
was \var{reduction-tries},
the count of the number of attempts to
add Earley reductions.
Let \var{other-tries}
be attempts to add \type{EIM}'s other than
as the result of Earley reductions.
By Theorem \ref{t:eim-count},
\begin{equation*}
\Rtablesize{\Marpa} = \order{\var{n}^2},
\end{equation*}
and by Theorem \ref{t:tries-O-eims},
\begin{equation*}
\var{other-tries} \le \var{c} \mult \Rtablesize{\Marpa},
\end{equation*}
so that
$\var{other-tries} = \order{\var{n}^2}$.

Looking again at \var{reduction-tries}
for the case of ambiguous grammars,
we need to look again at the triple
\begin{equation*}
\tuple{\Veim{mainstem}, \Vsym{transition}, \Veim{tributary}}.
\end{equation*}
We did not use the fact that the grammar was unambiguous in counting
the possibilities for \Vsym{transition} or \Veim{tributary}, but
we did make use of it in determining the count of possibilities
for \Veim{mainstem}.
We still know that
\begin{equation*}
\Veim{mainstem} \in \Ves{tributary-origin},
\end{equation*}
where
\Vloc{tributary-origin} is the origin of \Veim{tributary}.
Worst case, every
$$ \type{EIM} \in \Ves{tributary-origin} $$
is a possible
match, so that
the number of possibilities for \Veim{mainstem} now grows to
\bigsize{\Vtable{tributary-origin}}, and
\begin{equation*}
\begin{gathered}
\var{reduction-tries} = \\
    \bigsize{\Vtable{tributary-origin}} \mult \Vsize{symbols} \mult \bigsize{\Etable{\V{j}}}.
\end{gathered}
\end{equation*}

We know that $\var{tributary-origin} \le \var{j}$,
so that by Theorem \ref{t:es-count},
\begin{equation*}
\size{\Vtable{tributary-origin}} \mult \size{\Etable{\V{j}}} = \order{\var{j}^2}.
\end{equation*}
Adding \var{other-tries}
and summing over the Earley sets,
we have
\begin{equation*}
\order{\var{n}^2} +
\! \sum\limits_{\Vloc{j}=0}^{n}{\order{\var{j}^2}} = \order{\var{n}^3}.
\qedhere
\end{equation*}
\end{proof}

\begin{theorem}\label{t:leo-right-recursion}
Either
a right derivation has a step
that uses a right recursive rule,
or it has length is at most \var{c},
where \var{c} is a constant which depends
on the grammar.
\end{theorem}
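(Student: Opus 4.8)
The plan is to prove the contrapositive: a right derivation that uses no right-recursive rule is short. I read a right derivation as a chain of rule applications $\sigma_0 \derives \sigma_1 \derives \cdots \derives \sigma_m$ in which step $k$ rewrites the rightmost non-null symbol of $\sigma_k$ by a rule $r_k$; this is the notion that the grammar's definitions of $\RightNN{\Vstr{x}}$ and of indirect right recursion are built to track (a plain rightmost derivation will not do, since a left-recursive rule such as $\Vsym{S} \de \Vsym{S}\,\Vsym{a}$ yields unbounded rightmost derivations using no right-recursive rule). Because this theorem lives in the complexity section, the grammar contains no nulling symbols, so every right-hand side consists entirely of non-null symbols and $\RightNN{\Vstr{x}}$ is simply the last symbol of $\Vstr{x}$. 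Consequently the symbol exposed for rewriting after step $k$ is exactly $\RightNN{r_k}$, and the rules of the chain satisfy $\LHS{r_{k+1}} = \RightNN{r_k}$. First I would record the induced sequence of symbols $A_0, A_1, \dots, A_m$, where $A_k = \LHS{r_k}$ for $k < m$ and $A_m = \RightNN{r_{m-1}}$, so that $A_{k+1} = \RightNN{r_k} = \LHS{r_{k+1}}$ threads each rule to its successor.

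Next, the core of the argument is a pigeonhole on this symbol sequence. The list $A_0, \dots, A_m$ has $m+1$ entries drawn from the finite set of grammar symbols, so if $m$ exceeds the number of symbols, some symbol repeats: $A_i = A_j$ with $i < j$. I would then show such a repetition forces $r_i$ to be right-recursive. If $j = i+1$, then $\LHS{r_i} = A_i = A_{i+1} = \RightNN{r_i}$, which is exactly direct right recursion. If $j > i+1$, I would build a witness $\Vstr{y}$ for indirect right recursion by following the sub-chain: starting from the single symbol $\RightNN{r_i} = A_{i+1}$ and applying $r_{i+1}, \dots, r_{j-1}$ each to the current rightmost non-null symbol, after $j - i - 1 \ge 1$ steps I obtain a string $\Vstr{y}$ with $\RightNN{\Vstr{y}} = A_j = A_i = \LHS{r_i}$. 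Thus $\RightNN{r_i} \deplus \Vstr{y}$ and $\RightNN{\Vstr{y}} = \LHS{r_i}$, which is precisely the definition of indirect right recursion. Either way $\RightRecursive{r_i}$ holds, contradicting the hypothesis. Hence when no right-recursive rule is used, $A_0, \dots, A_m$ are distinct, so $m + 1$ is at most the number of symbols in $\Cg$; thus the length $m$ is bounded by a constant $c$ that depends only on the grammar, which is the desired conclusion.

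The step I expect to be the main obstacle is pinning down the notion of a right derivation precisely enough that the chain identity $\LHS{r_{k+1}} = \RightNN{r_k}$ is rigorous, and then matching the informal sub-chain construction to the exact, quantified definition of indirect right recursion. In particular I must verify that erasing the nullable material to the right of the rewritten symbol never shifts $\RightNN$ leftward off the spine, a point that the no-nulling-symbol assumption of this section disposes of cleanly but that would otherwise require care, and that the constructed $\Vstr{y}$ genuinely satisfies $\RightNN{\Vstr{y}} = \LHS{r_i}$ rather than merely containing $\LHS{r_i}$ somewhere. The pigeonhole count and the direct-recursion case are routine once the chain is correctly set up.
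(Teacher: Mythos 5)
Your proposal is correct and takes essentially the same route as the paper's own proof: set $\var{c}$ equal to the number of symbols, apply the pigeonhole principle to the sequence of rightmost (non-null) symbols rewritten along the right derivation, and show that a repeated symbol forces the rule applied at its first occurrence to be right recursive. The only difference is one of rigor in your favor: you separate the direct case ($j = i+1$) from the indirect case, for which you construct an explicit witness string \Vstr{y} with $\RightNN{\Vstr{y}} = \LHS{\Vrule{r_i}}$, whereas the paper compresses both cases into the looser assertion that $\Vsym{rightmost} \deplus \Vsym{A}$ makes the rule right recursive ``by definition.''
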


\begin{proof}
Let the constant \var{c} be the number
of symbols.
Assume, for a reductio, that a right derivation
expands to a
Leo sequence of length
$\var{c}+1$, but that none of its steps uses a right recursive rule.

Because it is of length $\var{c}+1$,
the same symbol must appear twice as the rightmost symbol of
a derivation step.
(Since for the purposes of these
complexity results we ignore nulling symbols,
the rightmost symbol of a string will also be its rightmost
non-nulling symbol.)
So part of the rightmost derivation must take the form
\begin{equation*}
\Vstr{earlier-prefix} \cat \Vsym{A} \deplus \Vstr{later-prefix} \cat \Vsym{A}.
\end{equation*}
But the first step of this derivation sequence must use a rule of the
form
\begin{equation*}
\Vsym{A} \de \Vstr{rhs-prefix} \cat \Vsym{rightmost},
\end{equation*}
where $\Vsym{rightmost} \deplus \Vsym{A}$.
Such a rule is right recursive by definition.
This is contrary to the assumption for the reductio.
We therefore conclude that the length of a right derivation
must be less than or equal to \var{c},
unless at least one step of that derivation uses a right recursive rule.
\end{proof}

\subsection{The complexity results}
We are now in a position to show
specific time and space complexity results.

\begin{theorem}
For every LR-regular grammar,
\textnormal{\Marpa} runs in $\On{}$ time and space.
\end{theorem}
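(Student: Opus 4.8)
The plan is to reduce the statement to the per-item accounting already in hand. By Theorems~\ref{t:O1-time-per-eim}, \ref{t:O1-space-per-eim}, and~\ref{t:O1-links-per-eim}, every unit of \Marpa's time and space is chargeable at $\Oc{}$ either to an Earley item that is actually added or to an attempt to add an Earley item. Consequently it suffices to prove that, for an LR-regular grammar, both $\Rtablesize{\Marpa}$ and the total number of addition attempts are $\On{}$; the $\On{}$ time and space bounds then follow by summing the $\Oc{}$ charges.

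First I would bound $\Rtablesize{\Marpa}$. Every LR-regular grammar is deterministic and hence unambiguous. Leo\cite{Leo1991} establishes that the Leo-modified algorithm runs in $\On{}$ time and space on precisely the LR-regular class, so in particular $\Rtablesize{\Leo} = \On{}$. Theorem~\ref{t:marpa-O-leo} gives $\Rtablesize{\Marpa} < \var{c} \times \Rtablesize{\Leo}$ for a grammar-dependent constant \var{c}, and therefore $\Rtablesize{\Marpa} = \On{}$. Because the grammar is unambiguous, Theorem~\ref{t:tries-O-eims} then bounds the number of attempts to add Earley items by $\var{c} \times \Rtablesize{\Marpa}$, which is again $\On{}$. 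Feeding these two $\On{}$ counts into the $\Oc{}$-per-charge accounting of the first paragraph yields total time and space $\On{}$, as required.

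The step I expect to demand the most care is transferring Leo's $\On{}$ bound to \Marpa, since \Marpa{} applies Leo memoization only to right-recursive rules whereas Leo's original algorithm memoizes every penult. The danger is that a Leo sequence which full memoization would collapse into a single item could remain fully expanded across \Marpa's Earley sets, so that the $\On{}$ guarantee for Leo-style memoization might not obviously carry over to \Marpa's item count. Theorem~\ref{t:leo-right-recursion} is exactly the tool that defuses this: any right derivation using no right-recursive rule has length at most a grammar constant, so every Leo sequence that \Marpa{} declines to memoize is itself of bounded length and contributes only $\Oc{}$ items. The sequences that can grow without bound are precisely the right-recursive ones, which \Marpa{} does memoize. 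Making this reconciliation between \Marpa's restricted memoization and Leo's full memoization rigorous---so that $\Rtablesize{\Marpa}$ genuinely inherits the $\On{}$ bound---is the crux of the argument.
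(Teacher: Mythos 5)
Your proposal is correct and follows essentially the same route as the paper's own proof: Leo's $\On{}$ bound on the item count for LR-regular grammars, Theorem~\ref{t:leo-right-recursion} to bound the expansion of the non-memoized (non-right-recursive) Leo sequences by a grammar constant, Theorem~\ref{t:marpa-O-leo} for the EIMT-to-EIM factor, and unambiguity of LR-regular grammars together with Theorems~\ref{t:tries-O-eims}, \ref{t:O1-time-per-eim} and~\ref{t:O1-links-per-eim} to convert the $\On{}$ counts of items and attempts into $\On{}$ time and space. The reconciliation you flag as the crux---Marpa's restricted memoization versus Leo's full memoization---is exactly where the paper spends most of its effort, using the step-for-step correspondence between Leo sequences and right derivations (via Theorem~\ref{t:leo-singleton}) before invoking Theorem~\ref{t:leo-right-recursion}, just as you sketch.
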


\begin{proof}
By Theorem 4.6 in~\cite[p. 173]{Leo1991},
the number of traditional Earley items produced by
\Leo{} when parsing input \Cw{} with an LR-regular grammar \Cg{} is
\begin{equation*}
\order{\Vsize{\Cw}} = \order{\var{n}}.
\end{equation*}
\Marpa{} may produce more Earley items than \Leo{}
for two reasons:
First, \Marpa{} does not apply Leo memoization to Leo sequences
which do not contain right recursion.
Second, \Marpa{}'s Earley items group dotted rules into
states and this has the potential to increase the number
of Earley items.

By theorem \ref{t:leo-singleton},
the definition of an \type{EIMT},
and the construction of a Leo sequence,
it can be seen that a Leo sequence
corresponds step-for-step with a
right derivation.
It can therefore be seen that
the number of \type{EIMT}'s in the Leo sequence
and the number of right derivation steps
in its corresponding right derivation
will be the same.

Consider one \type{EIMT} that is memoized in \Leo{}.
By theorem \ref{t:leo-singleton} it corresponds to
a single dotted rule, and therefore a single rule.
If not memoized because it is not a right recursion,
this \type{EIMT} will be expanded to a sequence
of \type{EIMT}'s.
How long will this sequence of non-memoized \type{EIMT}'s
be, if we still continue to memoize \type{EIMT}'s
which correspond to right recursive rules?
The \type{EIMT} sequence, which was formerly a memoized Leo sequence,
will correspond to a right
derivation that does not include
any steps that use right recursive rules.
By Theorem \ref{t:leo-right-recursion},
such a
right derivation can be
of length at most \var{c1},
where \var{c1} is a constant that depends on \Cg{}.
As noted, this right derivation has
the same length as its corresponding \type{EIMT} sequence,
so that each \type{EIMT} not memoized in \Marpa{} will expand
to at most \var{c1} \type{EIMT}'s.

By Theorem \ref{t:marpa-O-leo},
when \type{EIMT}'s are replaced with \type{EIM}'s,
the number of \type{EIM}'s \Marpa{} requires is at worst,
$\var{c2}$ times the number of \type{EIMT}'s,
where \var{c2} is a constant that depends on \Cg{}.
Therefore the number of \type{EIM}'s per Earley set
for an LR-regular grammar in a \Marpa{} parse
is less than
\begin{equation*}
    \var{c1} \mult \var{c2} \mult \order{\var{n}} = \order{\var{n}}.
\end{equation*}

LR-regular grammar are unambiguous, so that
by Theorem \ref{t:tries-O-eims},
the number of attempts that \Marpa{} will make to add
\type{EIM}'s is less than or equal to
\var{c3} times the number of \type{EIM}'s,
where \var{c3} is a constant that depends on \Cg{}.
Therefore,
by Theorems \ref{t:O1-time-per-eim}
and \ref{t:O1-confluences-per-eim},
the time and space complexity of \Marpa{} for LR-regular
grammars is
\begin{equation*}
    \var{c3} \mult \order{\var{n}}
    = \order{\var{n}}.\qedhere
\end{equation*}
\end{proof}

\begin{theorem}
For every unambiguous grammar,
\textnormal{\Marpa} runs in $\order{n^2}$ time and space.
\end{theorem}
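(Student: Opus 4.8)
The plan is to assemble this result directly from the per-item complexity bounds and the global count of Earley items, paralleling the proof just given for LR-regular grammars but omitting its linear per-set refinement. The key observation is that for the $\order{\var{n}^2}$ bound we need not control the size of individual Earley sets beyond what Theorem \ref{t:eim-count} already supplies: that theorem establishes $\Rtablesize{\Marpa} = \order{\var{n}^2}$ for \emph{any} context-free grammar, so the total number of Earley items actually added is already $\order{\var{n}^2}$ with no further work.

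First I would handle time. Because the grammar is unambiguous, Theorem \ref{t:tries-O-eims} applies and bounds the number of attempts to add Earley items by $\var{c} \times \Rtablesize{\Marpa}$ for a grammar-dependent constant $\var{c}$; combined with $\Rtablesize{\Marpa} = \order{\var{n}^2}$, the number of attempts is likewise $\order{\var{n}^2}$. Theorem \ref{t:O1-time-per-eim} charges $\order{1}$ time to each Earley item and to each attempt to add a duplicate, so summing over the $\order{\var{n}^2}$ items and the $\order{\var{n}^2}$ attempts yields $\order{\var{n}^2}$ total time.

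Next I would handle space by the same accounting. Theorem \ref{t:O1-space-per-eim} charges $\order{1}$ space to each of the $\order{\var{n}^2}$ Earley items, and, if evaluation links are included, Theorem \ref{t:O1-links-per-eim} charges $\order{1}$ space to each of the $\order{\var{n}^2}$ attempts; either way the total is $\order{\var{n}^2}$.

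The step where the hypothesis earns its keep is the invocation of Theorem \ref{t:tries-O-eims}: unambiguity is exactly what forces the number of attempts to stay linear in the number of Earley items, and it is the only place the assumption is used. For ambiguous grammars this bound fails, Theorem \ref{t:ambiguous-tries} giving $\order{\var{n}^3}$ attempts instead, so I expect the only genuine subtlety to be making explicit that the unambiguity assumption feeds into the attempt bound and nowhere else. No new machinery is required; the work is purely in citing the right combination of earlier theorems.
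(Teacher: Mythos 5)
Your proposal is correct and follows essentially the same route as the paper's own proof: invoke Theorem \ref{t:tries-O-eims} (where unambiguity is used) together with Theorem \ref{t:eim-count} to bound the number of attempts by $\order{\var{n}^2}$, then apply the per-item charging results (Theorems \ref{t:O1-time-per-eim}, \ref{t:O1-space-per-eim}, \ref{t:O1-links-per-eim}) to conclude. Your explicit remark that unambiguity enters only through the attempt bound is a helpful clarification, but the argument is the same.
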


\begin{proof}
Let \V{g} be an
unambiguous grammar.
By Theorem \ref{t:tries-O-eims},
and Theorem \ref{t:eim-count},
the number of attempts that \Marpa{} will make to add
\type{EIM}'s is
\begin{equation*}
\var{c} \mult \order{\var{n}^2},
\end{equation*}
where \var{c} is a constant that depends on \Cg{}.
Therefore,
by Theorems \ref{t:O1-time-per-eim}
and \ref{t:O1-confluences-per-eim},
the time and space complexity of \Marpa{}
for unambiguous grammars is \order{\var{n}^2}.
\end{proof}

\begin{theorem}
For every context-free grammar,
\textnormal{\Marpa} runs in $\order{\var{n}^3}$ time.
\end{theorem}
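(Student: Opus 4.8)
The plan is to combine the two ingredients already in hand: the amortized per-item time bound of Theorem~\ref{t:O1-time-per-eim}, and the count of Earley-item attempts for an arbitrary context-free grammar given by Theorem~\ref{t:ambiguous-tries}. Theorem~\ref{t:O1-time-per-eim} establishes that the whole of \Marpa's running time can be distributed over the Earley items and the attempts to add them, charging amortized \Oc{} time to each Earley item and to each attempt to add a duplicate. Consequently the total time spent by \Marpa{} is, up to a constant factor depending only on \Cg{}, the total number of attempts to add an Earley item, counting both the attempts that succeed and those rejected as duplicates.

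First I would invoke Theorem~\ref{t:ambiguous-tries}, which already does the substantive work: for a general context-free grammar the number of attempts to add Earley items is $\order{\var{n}^3}$. (Recall that its proof isolates \var{reduction-tries} as the only term not already bounded by $\order{\var{n}^2}$ via Theorem~\ref{t:eim-count}, bounds a single reduction attempt count by $\bigsize{\Ves{component-origin}} \times \Vsize{symbols} \times \bigsize{\Ves{j}}$, uses Theorem~\ref{t:es-count} to make each factor $\order{\var{j}}$, and sums the resulting $\order{\var{j}^2}$ over the input.) Then I would simply multiply this count by the \Oc{} amortized cost per attempt from Theorem~\ref{t:O1-time-per-eim}, obtaining a total running time of $\order{\var{n}^3} \times \Oc{} = \order{\var{n}^3}$, which is the claim.

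Since both ingredients are established, there is little genuine difficulty; the one point needing care is to confirm that the charging scheme of Theorem~\ref{t:O1-time-per-eim} accounts for \emph{all} of \Marpa's time---so that no cost escapes being attributed either to an item or to a duplicate attempt---and that the notion of ``attempt'' it uses coincides with the one counted in Theorem~\ref{t:ambiguous-tries}. Because both theorems are stated against the same pseudocode of Section~\ref{s:pseudocode}, this correspondence is immediate, and the bound follows at once.
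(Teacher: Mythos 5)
Your proposal is correct and follows exactly the paper's own argument: the paper proves this theorem by citing Theorem~\ref{t:O1-time-per-eim} together with Theorem~\ref{t:ambiguous-tries}, which is precisely your combination of the amortized \Oc{} cost per attempt with the $\order{\var{n}^3}$ bound on the number of attempts. The extra care you take in checking that the charging scheme covers all of \Marpa's time and that the two notions of ``attempt'' coincide is sound, though the paper leaves it implicit.
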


\begin{proof}
By Theorem \ref{t:O1-time-per-eim},
and Theorem \ref{t:ambiguous-tries}.
\end{proof}

\begin{theorem}\label{t:cfg-space}
For every context-free grammar,
\textnormal{\Marpa} runs in $\order{\var{n}^2}$ space,
if it does not track confluences.
\end{theorem}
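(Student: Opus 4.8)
The plan is to combine the per-item space accounting of Theorem \ref{t:O1-space-per-eim} with the quadratic bound on the total number of Earley items from Theorem \ref{t:eim-count}. First I would recall that Theorem \ref{t:O1-space-per-eim} allocates all of \Marpa's space to the Earley items, charging each actually-added EIM \Oc{} space and---crucially, when links are not tracked---charging no additional space to any attempt to add a duplicate EIM. This is precisely the hypothesis of the present theorem, so the duplicate-attempt contribution, which is what inflates the space for ambiguous grammars when links are kept, simply vanishes here.

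Next I would invoke Theorem \ref{t:eim-count}, which bounds the total number of EIM's across all Earley sets by $\Rtablesize{\Marpa} = \order{\var{n}^2}$ for any context-free grammar. Since every unit of stored space is charged to some actually-added EIM at \Oc{} per item, and there are $\order{\var{n}^2}$ such items, summing yields $\order{\var{n}^2}$ total space. The nulling-symbol rewrite and its reversal contribute only \Oc{} space per Earley item by Lemma \ref{l:nulling-rewrite}, together with a fixed overhead charged to the parse, so they are absorbed into the same bound.

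The point to be careful about---rather than a genuine obstacle---is to keep the count of \emph{attempts} distinct from the count of \emph{items retained}. Theorem \ref{t:ambiguous-tries} shows that the number of attempts can grow to $\order{\var{n}^3}$ in the ambiguous case, and it is tempting to expect the space to follow suit. But without links, those excess attempts consume only time and leave no space footprint, so the space bound stays at the $\order{\var{n}^2}$ level set by the item count rather than rising to match the attempt count. Once this separation is made explicit, the theorem follows directly from Theorems \ref{t:O1-space-per-eim} and \ref{t:eim-count}.
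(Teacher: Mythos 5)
Your proof is correct and matches the paper's own argument exactly: the paper proves this theorem by citing precisely Theorems \ref{t:O1-space-per-eim} and \ref{t:eim-count}, which is the combination you use. Your additional remarks---that duplicate attempts consume no space without links (so the $\order{\var{n}^3}$ attempt count of Theorem \ref{t:ambiguous-tries} is irrelevant here), and that Lemma \ref{l:nulling-rewrite} is absorbed into the per-item charge---simply make explicit what the paper leaves implicit.
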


\begin{proof}
By Theorem \ref{t:O1-space-per-eim}
and Theorem \ref{t:eim-count}.
\end{proof}

Traditionally only the space result stated for a parsing algorithm
is that
without confluences, as in \ref{t:cfg-space}.
This is sufficiently relevant
if the parser is only used as a recognizer.
In practice, however,
algorithms like \Marpa{}
are typically used in anticipation
of an evaluation phase,
for which confluences are necessary.

\begin{theorem}
For every context-free grammar,
\textnormal{\Marpa} runs in $\order{\var{n}^3}$ space,
including the space for tracking confluences.
\end{theorem}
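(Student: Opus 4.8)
The plan is to obtain this result as an immediate corollary of the two theorems that already bound, respectively, the per-object space cost when links are tracked and the total number of Earley-item-add attempts for an arbitrary context-free grammar. This mirrors exactly the structure of the preceding complexity results, each of which pairs an ``\Oc{} per charged object'' theorem with a counting theorem.

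First I would recall, from Theorem \ref{t:O1-links-per-eim}, that when links are included all space in \Marpa{} can be charged so that each Earley item costs \Oc{} space and each attempt to add a duplicate Earley item likewise costs \Oc{} space. Thus every unit of space is charged either to one of the actually-added Earley items or to one of the add-attempts, and in each case the charge is \Oc. Next I would invoke Theorem \ref{t:ambiguous-tries}, which states that for an arbitrary context-free grammar the number of attempts to add Earley items is \order{\var{n}^3}. Since the Earley items number only \order{\var{n}^2} by Theorem \ref{t:eim-count}, while the attempts number \order{\var{n}^3}, multiplying the \Oc{} per-object charge by the count of charged objects gives a total space bound of $\order{\var{n}^2} + \order{\var{n}^3} = \order{\var{n}^3}$, the cubic term dominating.

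The work is carried almost entirely by the cited theorems, so I expect no serious obstacle; the only point demanding care is the bookkeeping. I must confirm that with links switched on there is no category of space left uncharged — in particular that the links recorded on duplicate attempts, which were precisely the reason the weaker space result of Theorem \ref{t:cfg-space} had to exclude links, are exactly the \Oc-per-attempt charges already covered by Theorem \ref{t:O1-links-per-eim}. Once that identification is checked, summing the \order{\var{n}^2} charge for the items with the \order{\var{n}^3} charge for the attempts yields \order{\var{n}^3}, completing the proof.
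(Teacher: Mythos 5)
Your proposal is correct and follows essentially the same route as the paper, whose proof is simply the citation of Theorem \ref{t:O1-links-per-eim} together with Theorem \ref{t:ambiguous-tries}. Your additional appeal to Theorem \ref{t:eim-count} and the explicit check that attempt-charged link space is fully covered are just spelled-out bookkeeping that the paper leaves implicit.
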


\begin{proof}
By Theorem \ref{t:O1-confluences-per-eim},
and Theorem \ref{t:ambiguous-tries}.
\end{proof}

\section{The \Marpa input model}
\label{s:input}

In this paper,
up to this point,
the traditional input stream model
has been assumed.
As implemented in~\cite{Marpa-R2},
Marpa generalizes the idea of
input streams beyond the traditional
model.

Marpa's generalized input model
replaces the input \Cw{}
with a set of tokens,
\var{tokens},
whose elements are triples of symbol,
start location and length:
\begin{equation*}
    \tuple{\Vsym{t}, \Vloc{start}, \var{length}}
\end{equation*}
such that
$\var{length} \ge 1$
and
$\Vloc{start} \ge 0$.
The size of the input, \size{\Cw},
is the maximum over
\var{tokens} of $\Vloc{start}+\var{length}$.

Multiple tokens can start at a single location.
(This is how \Marpa{} supports ambiguous tokens.)
The variable-length,
ambiguous and overlapping tokens
of \Marpa{}
bend the conceptual framework of ``parse location''
beyond its breaking point,
and a new term for parse location is needed.
Start and end of tokens are described in terms
of \dfn{earleme} locations,
or simply \dfn{earlemes}.
Token length is also measured in earlemes.

Like standard parse locations, earlemes start at 0,
and run up to \size{\Cw}.
Unlike standard parse locations,
there is not necessarily a token ``at'' any particular earleme.
(A token is considered to be ``at an earleme'' if it ends there,
so that there is never a token ``at'' earleme 0.)
In fact,
there may be earlemes at which no token either starts or ends,
although for the parse to succeed, such an earleme would have to be
properly inside at least one token.
Here ``properly inside'' means after the token's start earleme
and before the token's end earleme.

In the \Marpa input stream, tokens
may interweave and overlap freely,
but gaps are not allowed.
That is, for all \Vloc{i} such
that $0 \le \Vloc{i} < \size{\Cw}$,
there must exist
\begin{equation*}
	 \var{token} = \tuple{\Vsym{t}, \Vloc{start}, \var{length}}
\end{equation*}
such that
\begin{gather*}
	 \var{token} \in \var{tokens} \quad \text{and} \\
	 \Vloc{start} \le \Vloc{i} < \Vloc{start}+\var{length}.
\end{gather*}

The intent of \Marpa's generalized input model is to allow
users to define alternative input models for special
applications.
An example that arises in current practice is natural
language, features of which are most
naturally expressed with ambiguous tokens.
The traditional input stream can be seen as the special case of
the \Marpa input model where
for all \Vsym{x}, \Vsym{y}, \Vloc{x}, \Vloc{y},
\var{xlength}, \var{ylength},
if we have both of
\begin{align*}
    \tuple{\Vsym{x}, \Vloc{x}, \var{xlength}} & \in \var{tokens} \quad \text{and} \\
    \tuple{\Vsym{y}, \Vloc{y}, \var{ylength}} & \in \var{tokens},
\end{align*}
then we have both of
\begin{gather*}
\var{xlength} = \var{ylength} = 1 \quad \text{and} \\
     \Vloc{x} = \Vloc{y} \implies \Vsym{x} = \Vsym{y}.
\end{gather*}

The correctness results hold for \Marpa input streams,
but to preserve the time complexity bounds,
restrictions must be imposed.
In stating them,
let it be understood that
\begin{equation*}
	\token{\tuple{\Vsym{x}, \Vloc{x}, \var{length} }} \in \var{tokens}
\end{equation*}
We require that,
for some constant \var{c1},
possibly dependent on the grammar \Cg{},
that
\begin{equation}
\label{e:restriction1}
\begin{gathered}
\univQ{ \VTtyped{sym}{SYM}, \VTtyped{x}{LOC}, \Vtyped{length}{\naturals}}{ \\
  \token{\tuple{\V{sym},\V{j},\V{length}}} \in \V{tokens} \implies \var{length} < \var{c1}}.
\end{gathered}
\end{equation}
We also require that
the cardinality of the set of tokens starting at any
one location
be less than a constant, call it \var{c2}:
\begin{equation}
\label{e:restriction2}
\begin{gathered}
    \univQ{ \VTtyped{j}{LOC} }{ \\
	\left|\,\left\lbrace\,
	    \begin{gathered}
	    \VTtyped{t}{TOK} \suchthat \\
	    \existQ{ \VTtyped{sym}{SYM}, \Vtyped{length}{\naturals}} { \\
		\V{t} = \tuple{\V{sym},\V{j},\V{length}} \land \V{t} \in \V{tokens}
		}
	    \end{gathered}
	  \,\right\rbrace\,\right|
	    < \V{c2}
    }
\end{gathered}
\end{equation}
Because the number of symbols is a constant depending on the grammar,
\eqref{e:restriction2} follows from \eqref{e:restriction1}.
Restrictions \ref{e:restriction1}
and \ref{e:restriction2}
impose little or no obstacle
to the practical use
of \Marpa's generalized input model.
And with them,
the complexity results for \Marpa{} stand.

\bibliographystyle{plain}

\hbadness=5000

\clearpage
\tableofcontents

\end{document}